\theoremstyle{plain}
\newtheorem{theorem}{Theorem}[section]
\newtheorem{proposition}[theorem]{Proposition}
\newtheorem{lemma}[theorem]{Lemma}
\newtheorem{corollary}[theorem]{Corollary}
\theoremstyle{definition}
\newtheorem{definition}[theorem]{Definition}
\theoremstyle{remark}
\newtheorem{remark}[theorem]{Remark}
\def\eqref#1{(\ref{#1})}
\def\1{\bm{1}}
\def\vone{{\bm{1}}}
\def\vb{{\bm{b}}}
\def\ve{{\bm{e}}}
\def\vh{{\bm{h}}}
\def\vu{{\bm{u}}}
\def\vv{{\bm{v}}}
\def\vw{{\bm{w}}}
\def\vx{{\bm{x}}}
\def\vz{{\bm{z}}}
\def\vone{{\bm{1}}}
\def\mA{{\bm{A}}}
\def\mB{{\bm{B}}}
\def\mD{{\bm{D}}}
\def\mG{{\bm{G}}}
\def\mH{{\bm{H}}}
\def\mI{{\bm{I}}}
\def\mQ{{\bm{Q}}}
\def\mW{{\bm{W}}}
\def\mY{{\bm{Y}}}
\def\mZ{{\bm{Z}}}
\DeclareMathAlphabet{\mathsfit}{\encodingdefault}{\sfdefault}{m}{sl}
\SetMathAlphabet{\mathsfit}{bold}{\encodingdefault}{\sfdefault}{bx}{n}
\def\gM{{\mathcal{M}}}
\def\sR{{\mathbb{R}}}
\begin{document}

\begin{frontmatter}



\title{
Learning to Control the Smoothness of Graph Convolutional Network Features
}


\author{Shih-Hsin Wang$^{*a}$, Justin Baker$^{*a}$\footnote{Shih-Hsin Wang and Justin Baker are co-first authors of this paper.},
Cory Hauck$^{b}$, Bao Wang$^{a}$} 
\affiliation{organization={Departmenr of Mathematics and Scientific Computing and Imaging Institute, University of Utah},
            addressline={72 Central Campus Dr}, 
            city={Salt Lake City},
            postcode={84102}, 
            state={Utah},
            country={USA}}

\affiliation{organization={Oak Ridge National Laboratory and University of Tennessee},
            addressline={5200, 1 Bethel Valley Rd, Oak Ridge}, 
            city={Oak Ridge},
            postcode={37830}, 
            state={Tennessee},
            country={USA}}

\begin{abstract}
The pioneering work of Oono and Suzuki [ICLR, 2020] and Cai and Wang [arXiv:2006.13318] initializes the analysis of the smoothness of graph convolutional network (GCN) features. Their results reveal an intricate empirical correlation between node classification accuracy and the ratio of smooth to non-smooth feature components. However, the optimal ratio that favors node classification is unknown, and the non-smooth features of deep GCN with ReLU or leaky ReLU activation function diminish. In this paper, we propose a new strategy to let GCN learn node features with a desired smoothness -- adapting to data and tasks -- to enhance node classification. Our approach has three key steps: (1) We establish a geometric relationship between the input and output of ReLU or leaky ReLU. (2) Building on our geometric insights, we augment the message-passing process of graph convolutional layers (GCLs) with a learnable term to modulate the smoothness of node features with computational efficiency. (3) We investigate the achievable ratio between smooth and non-smooth feature components for GCNs with the augmented message-passing scheme. Our extensive numerical results show that the augmented message-passing schemes significantly improve node classification for GCN and some related models.
\end{abstract}

\begin{keyword}
graph convolutional networks, activation functions, smoothness of node features


MSC codes: 68T01, 68T07

\end{keyword}

\end{frontmatter}



\section{Introduction}\label{sec:intro}
Let $G=(V,E)$ be an undirected graph with $V=\{v_i\}_{i=1}^n$ and $E$ being the set of $n$ nodes and the set of edges, respectively. 
Let $\mA\in\sR^{n\times n}$ be the adjacency matrix of the graph with $A_{ij}=\vone_{(i,j)\in E}$, where $\vone$ is the indicator function. Furthermore, let $\mG$ be the augmented normalized adjacency matrix, given as follows:
\begin{equation}\label{eq:G}
\begin{aligned}
\mG \coloneqq (\mD+\mI)^{-\frac{1}{2}}(\mI+\mA)(\mD+\mI)^{-\frac{1}{2}}=\Tilde{\mD}^{-\frac{1}{2}}\Tilde{\mA}\Tilde{\mD}^{-\frac{1}{2}},
\end{aligned}
\end{equation}
where $\mI\in\sR^{n\times n}$ is the identity matrix, $\mD\in\sR^{n\times n}$ is the diagonal degree matrix with ${  D_{ii}=\sum_{j=1}^nA_{ij}}$, and $\Tilde{\mA}:=\mA+\mI$ and $\Tilde{\mD}:=\mD+\mI$. Starting from the initial node features $\mH^0:=$ $[(\vh_1^0)^{\top},(\vh_2^0)^{\top},\ldots,(\vh_n^0)^\top]^\top\in\sR^{d\times n}$ with $\vh_i^0\in\sR^d$ being the $i^{th}$ node feature vector, the graph convolutional network (GCN) \cite{kipf2017semisupervised} learns node representations using the following graph convolutional layer (GCL) transformation:
\begin{equation}
\label{eq:GCN}
\begin{aligned}
\mH^{l}=\sigma(\mW^{l}\mH^{l-1}\mG),
\end{aligned}
\end{equation}
where $\sigma$ is the activation function, e.g. ReLU \cite{nair2010rectified}, and $\mW^l\in \sR^{d\times d}$ is learnable. Notice that GCL smooths feature vectors of the neighboring nodes via a weighted averaging procedure. The smoothness of node features has been argued to help node classification; see e.g. \cite{li2018deeper,pmlr-v97-wu19e,chen2020measuring}, resonating with the idea of classical energy-based semi-supervised learning approaches \cite{zhu2003semi,zhou2003learning}. Accurate node classification requires a balance between smooth and non-smooth components of GCN node features~\cite{oono2019graph}. Besides graph convolutional networks (GCNs) stacking GCLs, many other graph neural networks (GNNs) have been developed using different mechanisms, including spectral methods \cite{defferrard2016convolutional}, spatial methods \cite{gilmer2017,velivckovic2018graph,wang2024rethinking}, sampling methods \cite{hamilton2017inductive,ying2018graph}, and the attention mechanism \cite{velivckovic2018graph}.  Many other GNN
models can be found in recent surveys or monographs; see, e.g. \cite{hamilton2017representation,battaglia2018relational,wu2020comprehensive,zhou2020graph,hamilton2020graph}.

Deep neural networks usually significantly outperform shallow counterparts, and a remarkable example is convolutional neural networks \cite{krizhevsky2017imagenet,He_2016_CVPR}. However, such a property does not hold for GCNs; it has been observed that deep GCNs that consist of multiple GCLs tend to perform significantly worse than shallow models \cite{chen2020measuring}. In particular, the node feature vectors learned by deep GCNs tend to be identical to each other over each connected component of the graph; this phenomenon is referred to as {\bf\emph{over-smoothing}}
\cite{li2018deeper,nt2019revisiting,oono2019graph,cai2020note,chen2020measuring,wu2022non}, which not only occurs for GCN but also for many other GNNs, e.g., GraphSage \cite{hamilton2017inductive} and MPNN \cite{gilmer2017}. Intuitively, each GCL smooths neighboring node features, which benefits node classification
\cite{li2018deeper,pmlr-v97-wu19e,chen2020measuring}. However, stacking these smoothing layers will inevitably homogenize node features. Algorithms have been developed to alleviate the over-smoothing issue of GNNs, including decoupling prediction and message passing \cite{gasteiger2018combining}, skip connection and batch normalization \cite{kawamoto2018mean,chen2018supervised,chen2020simple}, 
graph sparsification \cite{Rong2020DropEdge:}, jumping knowledge \cite{xu2018representation}, scattering transform \cite{min2020scattering}, PairNorm \cite{zhao2019pairnorm}, implicit layers using continuous-in-depth layers~\cite{pmlr-v139-chamberlain21a,thorpe2022grand} or fixed-point network~\cite{NEURIPS2020_8b5c8441,pmlr-v202-baker23a,baker2024monotone}, and controlling the Dirichlet energy of node features \cite{zhou2021dirichlet}. 

From a theoretical perspective, it is proved that deep GCNs using ReLU or leaky ReLU activation function tend to learn homogeneous node features across graph nodes of each connected component
\cite{oono2019graph,cai2020note}. In particular, Oono and Suzuki~\cite{oono2019graph} show that the distance of node features to the eigenspace $\gM$ -- corresponding to the largest eigenvalue 1 of the message-passing matrix $\mG$ in \eqref{eq:G} -- goes to zero when the depth of GCN with the ReLU activation function goes to infinity. Meanwhile, Oono and Suzuki \cite{oono2019graph} empirically study the intricate correlation between node classification accuracy and the ratio between the smooth and non-smooth components of GCN node features, i.e., projections of node features onto the eigenspace $\gM$ and its orthogonal complement $\gM^\perp$, respectively. The empirical results of \cite{oono2019graph} indicate that both smooth and non-smooth components of node features are crucial for accurate node classification, while the ratio between smooth and non-smooth components to achieve optimal accuracy is unknown and task-dependent. Furthermore, Cai and Wang \cite{cai2020note} prove that the Dirichlet energy -- another smoothness measure for node features -- goes to zero when the depth of GCN with ReLU or leaky ReLU activation function goes to infinity.

A crucial step in the proofs of the results in \cite{oono2019graph,cai2020note} is that both ReLU and leaky ReLU activation functions reduce the distance of node feature vectors to $\gM$ and their Dirichlet energy. However, Cai and Wang \cite{cai2020note} point out that over-smoothing -- characterized by the distance of features to eigenspace $\gM$ or the Dirichlet energy -- is a misnomer; the real smoothness of a graph signal should be characterized by a {\bf\emph{normalized smoothness}}, e.g., normalizing the Dirichlet energy by the magnitude of the features. Indeed, the ratio between smooth and non-smooth components of node features -- as studied by Oono and Suzuki \cite{oono2019graph} -- is closely related to the normalized smoothness. 
Nevertheless, analyzing the normalized smoothness of node features learned by GCN with ReLU or leaky ReLU activation functions remains an open problem \cite{cai2020note}. Moreover, it is interesting to ask if analyzing the normalized smoothness can result in any new understanding of GCN features and algorithms to improve the node classification accuracy for GCN.

\subsection{Our Contribution}
We aim to (1) establish a new geometric understanding of how GCL smooths GCN features and (2) develop an efficient algorithm to let GCN and related models learn node features with a desired ratio between their smooth and non-smooth components or normalized smoothness \cite{cai2020note} --  to improve node classification accuracy. We summarize our main contributions towards achieving our goal as follows:
\begin{itemize}
\item We prove that there is a high-dimensional sphere underlying the input and output vectors of ReLU or leaky ReLU activation functions. This geometric characterization not only implies theories in \cite{oono2019graph,cai2020note} but also informs that adjusting the projection of input vectors onto the eigenspace $\gM$ can alter the smoothness of the output vectors. See Section~\ref{sec:geometry} for details.

\item We show that both ReLU and leaky ReLU activation functions reduce the distance of node features to the eigenspace $\gM$, i.e., ReLU and leaky ReLU smooth their input vectors without considering their magnitude. In contrast, we show that when taking the magnitude into account, ReLU and leaky ReLU activation functions can increase, decrease, or preserve the normalized smoothness of each dimension of their input vectors; see Sections~\ref{sec:geometry} and \ref{sec:control}.

\item Inspired by our established geometric relationship between the input and output of ReLU or leaky ReLU, we study how adjusting the projection of input onto the eigenspace $\gM$ affects both normalized and unnormalized smoothness of the output vectors. We show that the distance of the output vectors to the to eigenspace $\gM$ is always no greater than that of the original input -- no matter how we adjust the input by changing its projection onto $\gM$. In contrast, adjusting the projection of input vectors onto $\gM$ can effectively change the normalized smoothness of output to any desired value; see details in Section~\ref{sec:control}.

\item 
Based on our theory, we propose a computationally efficient smoothness control term (SCT) to let GCN and related models learn node features with a desired (normalized) smoothness to improve node classification. We comprehensively validate the benefits of our proposed SCT in improving node classification -- for both homophilic and heterophilic graphs -- using a few of the most representative GCN-style models. See Sections~\ref{sec:SCT} and \ref{sec:experiments} for details.
\end{itemize}

As far as we know, our work is the first thorough study of how ReLU and leaky ReLU activation functions affect the smoothness of node features both with and without considering their magnitude. 

\subsection{Additional Related Works}
Controlling the smoothness of node features to improve the performance of GCNs is another line of related work. For instance, Zhao and Akoglu \cite{zhao2019pairnorm} design a normalization layer to prevent node features from becoming too similar to each other, and Zhou et al. \cite{zhou2021dirichlet} constrain the Dirichlet energy to control the smoothness of node features without considering the effects of nonlinear activation functions. While there has been effort in understanding and alleviating the over-smoothing of GCNs and controlling the smoothness of node features, there is a shortage of theoretical examination of how activation functions affect the smoothness of node features, specifically accounting for the magnitude of features.

\subsection{Notation and Organization}

\noindent{\bf Notation.}
We denote the $\ell_2$-norm of a vector $\vu$ as $\|\vu\|$. For vectors $\vu$ and $\vv$, we use $\langle \vu,\vv\rangle$, $\vu\odot \vv$, and $\vu\otimes \vv$ to denote their inner, Hadamard, and Kronecker product, respectively. For a matrix $\mA$, we denote its $(i,j)^{th}$ entry, transpose, and inverse as $A_{ij}$, $\mA^\top$, and $\mA^{-1}$, respectively. We denote the trace of $\mA\in\sR^{n\times n}$ as ${\rm Trace}(\mA)=\sum_{i=1}^nA_{ii}$.  For two matrices $\mA$ and $\mB$, we denote the Frobenius inner product as $\langle \mA,\mB\rangle_F:={\rm Trace}(\mA\mB^\top)$ and the Frobenius norm of $\mA$ as  $\|\mA\|_F:=\sqrt{\langle\mA,\mA\rangle}$.

\medskip
\noindent{\bf Organization.}
We provide preliminaries and a review of some existing related results
in Section~\ref{subsec:orthogonal-decomposition}. In Section~\ref{sec:geometry}, we establish a geometric characterization of how ReLU and leaky ReLU activation functions affect the smoothness of their input vectors. We study the smoothness of each dimension of node features and take their magnitude into account in Section~\ref{sec:control}. Our proposed SCT is presented in Section~\ref{sec:SCT}. We comprehensively verify the efficacy of the proposed SCT for improving node classification using the three most representative GCN-style models in Section~\ref{sec:experiments}. Technical proofs and more experimental results are provided in the appendix.

\section{Preliminaries and Existing Results}\label{subsec:orthogonal-decomposition}
From the spectral graph theory \cite{chung1997spectral}, we can sort eigenvalues of matrix $\mG$ in \eqref{eq:G} in descending order as $1=\lambda_1=\ldots=\lambda_m > \lambda_{m+1}\geq\ldots\geq \lambda_n>-1$, where $m$ is the number of connected components of the graph $G$. In other words, we can group the vertex set $V =\{v_k\}_{k=1}^n$ into $m$ connected components $V_1,\ldots,V_m$. Let $\vu_i = (\vone_{\{v_k\in V_i\}})_{1\leq k\leq n}$ be the indicator vector of the $i^{th}$ component $V_i$, i.e., the $k^{th}$ coordinate of $\vu_i$ is one if the $k^{th}$ node $v_k$ lies in the connected component $V_i$; otherwise, is zero. Moreover, let $\ve_i$ be the eigenvector associated with $\lambda_i$, then $\{\ve_i\}^n_{i=1}$ forms an orthonormal basis of $\sR^n$. Notice that $\{\ve_i\}_{i=1}^{m}$ spans the eigenspace $\gM$ -- corresponding to eigenvalue 1 of matrix $\mG$, and $\{\ve_i\}_{i=m+1}^{n}$ spans the orthogonal complement of $\gM$, denoted by $\gM^\perp$. Oono and Suzuki \cite{oono2019graph} connect the indicator vectors $\vu_i$s with the space $\gM$. In particular, we have the following result:

\begin{proposition}[\cite{oono2019graph}]\label{prop:eigenvector}
All eigenvalues of the matrix $\mG$ lie in the interval $(-1, 1]$. Furthermore, the nonnegative vectors 
$
\{\Tilde{\mD}^{\frac{1}{2}}\vu_i/\|\Tilde{\mD}^{\frac{1}{2}}\vu_i\|\}_{1\leq i\leq m}
$
form an orthonormal basis of $\gM$.
\end{proposition}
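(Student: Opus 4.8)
The plan is to exploit the symmetry of $\mG$ and to translate every claim into a statement about the combinatorial and signless graph Laplacians, whose spectra are classical. Since $\Tilde{\mD}^{-1/2}$ is diagonal and $\Tilde{\mA}$ is symmetric, $\mG$ is symmetric and therefore has real eigenvalues and an orthonormal eigenbasis; this already justifies speaking of the orthonormal $\{\ve_i\}$ and of the eigenspace $\gM$. For the eigenvalue bounds I would pass to the Rayleigh quotient and substitute $\vy=\Tilde{\mD}^{-1/2}\vx$, which turns $\vx^\top\mG\vx$ into $\vy^\top\Tilde{\mA}\vy$ and $\|\vx\|^2$ into $\vy^\top\Tilde{\mD}\vy$.

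For the upper bound I would use $\Tilde{\mD}-\Tilde{\mA}=(\mD+\mI)-(\mA+\mI)=\mD-\mA=\mL$, the unnormalized Laplacian, which is positive semidefinite since $\vy^\top\mL\vy=\sum_{\{i,j\}\in E}(y_i-y_j)^2\ge 0$; hence $\vy^\top\Tilde{\mA}\vy\le\vy^\top\Tilde{\mD}\vy$, giving $\lambda\le 1$. For the strict lower bound I would use the signless Laplacian: $\Tilde{\mA}+\Tilde{\mD}=(\mD+\mA)+2\mI$, and since $\vy^\top(\mD+\mA)\vy=\sum_{\{i,j\}\in E}(y_i+y_j)^2\ge 0$ we obtain $\vy^\top(\Tilde{\mA}+\Tilde{\mD})\vy\ge 2\|\vy\|^2>0$ for $\vy\neq\vzero$. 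This yields $\vx^\top\mG\vx>-\|\vx\|^2$, i.e. $\lambda>-1$ strictly. The self-loops built into $\Tilde{\mA}$ are exactly what produces the strict inequality: the extra $2\mI$ rules out the bipartite configuration that would otherwise force an eigenvalue at $-1$.

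For the second claim I would first check that each $\Tilde{\mD}^{1/2}\vu_i$ is an eigenvector with eigenvalue $1$. The key identity is $\Tilde{\mA}\vu_i=\Tilde{\mD}\vu_i$: because $V_i$ is a connected component, $(\mA\vu_i)_k=\sum_{j\in V_i}A_{kj}$ equals $D_{kk}$ when $k\in V_i$ (all neighbors lie in $V_i$) and $0$ otherwise, so $\mA\vu_i=\mD\vu_i$ and thus $\Tilde{\mA}\vu_i=(\mA+\mI)\vu_i=(\mD+\mI)\vu_i=\Tilde{\mD}\vu_i$. Then $\mG\,\Tilde{\mD}^{1/2}\vu_i=\Tilde{\mD}^{-1/2}\Tilde{\mA}\vu_i=\Tilde{\mD}^{-1/2}\Tilde{\mD}\vu_i=\Tilde{\mD}^{1/2}\vu_i$. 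Orthogonality is immediate from disjoint supports, since $\langle\Tilde{\mD}^{1/2}\vu_i,\Tilde{\mD}^{1/2}\vu_j\rangle=\vu_i^\top\Tilde{\mD}\vu_j=0$ for $i\neq j$, and nonnegativity holds because $\Tilde{\mD}^{1/2}$ has positive diagonal and $\vu_i\ge\vzero$; normalizing yields the stated orthonormal family.

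The main obstacle is the spanning claim, namely that the multiplicity of eigenvalue $1$ is exactly $m$, so that the $m$ orthonormal vectors already exhibited fill out all of $\gM$. I would reduce this to the null space of $\mL=\mD-\mA$ through the similarity $\mI-\mG=\Tilde{\mD}^{-1/2}(\Tilde{\mD}-\Tilde{\mA})\Tilde{\mD}^{-1/2}=\Tilde{\mD}^{-1/2}\mL\,\Tilde{\mD}^{-1/2}$: since $\Tilde{\mD}^{-1/2}$ is invertible, $\mG\vx=\vx$ is equivalent to $\mL\vz=\vzero$ with $\vz=\Tilde{\mD}^{-1/2}\vx$, whence $\dim\gM=\dim\ker\mL$. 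I would then invoke the classical fact that $\vz^\top\mL\vz=\sum_{\{i,j\}\in E}(z_i-z_j)^2=0$ forces $\vz$ to be constant on each connected component, so $\ker\mL=\mathrm{span}\{\vu_1,\dots,\vu_m\}$ has dimension $m$. Combining $\dim\gM=m$ with the $m$ orthonormal eigenvectors constructed above shows they form an orthonormal basis of $\gM$, completing the proof.
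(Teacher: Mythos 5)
Your proof is correct and complete. There is nothing in the paper to compare it against line by line: the paper states Proposition~\ref{prop:eigenvector} as an imported result from Oono and Suzuki and gives no proof (their original argument in turn defers to standard spectral graph theory in the style of Chung). Your write-up is essentially the canonical argument, and you handle correctly the only two points that genuinely require care. First, the strict bound $\lambda>-1$: reducing $\vx^\top\mG\vx+\|\vx\|^2$ to $\vy^\top(\mD+\mA+2\mI)\vy\geq 2\|\vy\|^2>0$ via the signless-Laplacian identity $\vy^\top(\mD+\mA)\vy=\sum_{\{i,j\}\in E}(y_i+y_j)^2$ is exactly right, and your remark that the $2\mI$ contributed by the self-loops is what excludes the bipartite eigenvalue at $-1$ is the correct structural explanation -- for the unaugmented normalized adjacency the strict bound would fail on bipartite graphs. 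Second, the spanning claim: exhibiting the $m$ orthonormal eigenvectors $\Tilde{\mD}^{\frac{1}{2}}\vu_i/\|\Tilde{\mD}^{\frac{1}{2}}\vu_i\|$ only gives $\dim\gM\geq m$, and a blind attempt could easily stop there; your similarity $\mI-\mG=\Tilde{\mD}^{-\frac{1}{2}}\mL\,\Tilde{\mD}^{-\frac{1}{2}}$, which identifies $\gM$ with $\Tilde{\mD}^{\frac{1}{2}}\ker\mL$ and then uses $\ker\mL=\mathrm{span}\{\vu_1,\dots,\vu_m\}$ (constancy on connected components from $\vz^\top\mL\vz=\sum_{\{i,j\}\in E}(z_i-z_j)^2=0$), closes that gap cleanly and pins $\dim\gM=m$. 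No errors found.
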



For any matrix $\mH:=[\vh_1,\vh_2,\ldots,\vh_n]\in\sR^{d\times n}$, we have the decomposition $\mH = \mH_\gM + \mH_{\gM^\perp}$ with 
$\mH_\gM = \sum_{i=1}^{m}\mH\ve_i\ve_i^\top$ and $\mH_{\gM^\perp} = \sum_{i=m+1}^{n}\mH\ve_i\ve_i^\top$
such that  
$$
\langle \mH_\gM, \mH_{\gM^\perp}\rangle_F = {\rm Trace}\Bigg(\sum_{i=1}^{m}\mH\ve_i\ve_i^\top\Big(\sum_{j=m+1}^{n}\mH\ve_j\ve_j^\top\Big)^\top\Bigg) = 0,
$$
which implies that $\|\mH\|^2_F  = \|\mH_\gM\|^2_F+\|\mH_{\gM^\perp}\|^2_F.$

\subsection{Existing Smoothness Notions of Node Features}
\paragraph{Distance to the eigenspace $\gM$}
Oono and Suzuki \cite{oono2019graph} study 
the smoothness of features $\mH:=[\vh_1,\vh_2,\ldots,\vh_n]\in\sR^{d\times n}$ using their distance to the eigenspace $\gM$ as an unnormalized smoothness notion; see the definition below:
\begin{definition}[\cite{oono2019graph}]\label{def:distance-to-eigenspace}
Let $\sR^d\otimes\gM $ be the subspace of $\sR^{d\times n}$ consisting of the sum $\sum_{i=1}^{m}\vw_i\otimes\ve_i$, where $\vw_i \in \sR^d$ and $\{\ve_i\}_{i=1}^{m}$ is an orthonormal basis of the eigenspace $\gM$. Then we define $\|\mH\|_{\gM^\perp}$ -- the distance of node features $\mH$ to the eigenspace $\gM$ -- as follows: 
$$
\begin{aligned}
\|\mH\|_{\gM^\perp} &\coloneqq \inf_{\mY\in\sR^d\otimes\gM} \|\mH - \mY\|_{F}
= \big\|\mH - \sum_{i=1}^{m} \mH\ve_i\ve_i^\top\big\|_F.
\end{aligned}
$$
\end{definition}
With the decomposition $\mH=\mH_\gM+\mH_{\gM^\perp}$, the two quantities $\|\cdot\|_{\gM^\perp}$ and $\|\cdot\|_F$ can be linked via the following equation:
\begin{equation}\label{eq:distance-in-F-norm}
\begin{aligned}
\|\mH\|_{\gM^\perp} &= \inf_{\mY\in\sR^d\otimes\gM} \|\mH - \mY\|_F\\
&
= \|\mH - \mH_\gM\|_{F} \\
&= \|\mH_{\gM^\perp}\|_{F}.
\end{aligned}
\end{equation}

\paragraph{\bf Dirichlet energy} 
Cai and Wang 
\cite{cai2020note} study the unnormalized smoothness of node features using Dirichlet energy, which is defined as follows: 
\begin{definition}[\cite{cai2020note}]
Let $\Tilde{\Delta}=\mI-\mG$ be the (augmented) normalized Laplacian, then the Dirichlet energy $\|\mH\|_E$ of node features $\mH$ is defined by
$ 
\begin{aligned}
\|\mH\|^2_E&\coloneqq{\rm Trace}(\mH\Tilde{\Delta}\mH^\top).
\end{aligned}
$
\end{definition}

\paragraph{Normalized Dirichlet energy} 
Cai and Wang \cite{cai2020note} also point out that the real smoothness of node features $\mH$ should be measured by the normalized Dirichlet energy 
${\rm Trace}(\mH\Tilde{\Delta}\mH^\top)/\|\mH\|^2_F$.
This normalized smoothness measurement is essential because data often originates from various sources with diverse measurement units or scales. By normalizing the measurement, we can effectively mitigate biases resulting from these different scales.

\subsection{Two Existing Theories of Over-smoothing}\label{subsec:two-theries}
Let $\{\lambda_i\}_{i=1}^n$ be the spectrum of $\mG$, $\lambda=\max\{|\lambda_i|\mid\lambda_i<1\}$ be the second largest magnitude of $\mG$'s eigenvalues, and $s_l$ be the largest singular value of weight matrix $\mW^l$ (the weight matrix of the $l^{th}$  GCL). 
Oono and Suzuki \cite{oono2019graph} show that $\|\mH^{l}\|_{\gM^\perp}\leq s_l \lambda\|\mH^{l-1}\|_{\gM^\perp}$ under 
GCL when $\sigma$ is ReLU. Therefore, $\|\mH^l\|_{\gM^\perp}\to 0$ as $l\to\infty$ 
if $s_l\lambda<1$, 
indicating that the node features converge to the eigenspace $\gM$ and result in over-smoothing. A crucial step in the analysis in \cite{oono2019graph} is that 
$
\begin{aligned}
\|\sigma(\mZ)\|_{\gM^\perp}\leq \|\mZ\|_{\gM^\perp},
\end{aligned}
$ 
for any matrix $\mZ$ when $\sigma$ is ReLU, i.e., ReLU reduces the distance to the eigenspace $\gM$. 
Nevertheless, Oono and Suzuki \cite{oono2019graph} point out that it is hard to extend the above result to other activation functions even the leaky ReLU activation function.

Instead of considering $\|\mH\|_{\gM^\perp}$, Cai and Wang \cite{cai2020note} prove that $\|\mH^{l}\|_E\leq s_l\lambda\|\mH^{l-1}\|_E$ under GCL 
when $\sigma$ is ReLU or leaky ReLU activation function. Hence, $\|\mH^l\|_E\rightarrow 0$ as $l\rightarrow \infty$, implying over-smoothing of GCNs. Note that $\|\mH\|_{\gM^\perp}=0$ or $\|\mH^l\|_E=0$ indicates the feature vectors are homogeneous across all graph nodes. 
The proof in \cite{cai2020note} applies to GCN with both ReLU and leaky ReLU activation functions by establishing the inequality that $\|\sigma(\mZ)\|_E \leq \|\mZ\|_E $ for any matrix $\mZ$.

\section{Effects of Activation Functions: A Geometric Characterization}
\label{sec:geometry}
In this section, we present a geometric relationship between the input and output vectors of ReLU or leaky ReLU activation functions. We use $\|\mH\|_{\gM^\perp}$ as the unnormalized smoothness notion for all subsequent analyses since we observe that  $\|\mH\|_{\gM^\perp}$ and $\|\mH\|_E$ are equivalent as seminorms. In particular, we have the following results:
\begin{proposition}\label{prop:equivalent-norms}
$\|\mH\|_{\gM^\perp}$ and $\|\mH\|_E$ are two equivalent seminorms, i.e., there exist two constants $\alpha,\beta>0$ s.t. 
$
\begin{aligned}
\alpha  \|\mH\|_{\gM^\perp} \leq  \|\mH\|_E \leq  \beta \|\mH\|_{\gM^\perp},
\end{aligned}
$ 
for any $\mH \in \sR^{d\times n}$.
\end{proposition}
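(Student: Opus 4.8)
\section*{Proof proposal}

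The plan is to diagonalize both seminorms simultaneously in the orthonormal eigenbasis $\{\ve_i\}_{i=1}^n$ of $\mG$. The key observation is that $\Tilde{\Delta}=\mI-\mG$ is symmetric and shares its eigenvectors with $\mG$: since $\mG\ve_i=\lambda_i\ve_i$, we have $\Tilde{\Delta}\ve_i=(1-\lambda_i)\ve_i$, so $\Tilde{\Delta}=\sum_{i=1}^n(1-\lambda_i)\ve_i\ve_i^\top$ is the spectral decomposition of the Laplacian. Because $1=\lambda_1=\cdots=\lambda_m$, the first $m$ eigenvalues of $\Tilde{\Delta}$ vanish, while for $i\geq m+1$ we have $\lambda_i\in(-1,1)$ and hence $1-\lambda_i\in(0,2)$. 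This is the structural fact that drives the whole argument.

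Next I would write both quantities as weighted sums of the same coordinates $\|\mH\ve_i\|^2$. Using the expansion $\mH=\sum_{i=1}^n(\mH\ve_i)\ve_i^\top$ and orthonormality of $\{\ve_i\}$, the distance to $\gM$ becomes $\|\mH\|_{\gM^\perp}^2=\|\mH_{\gM^\perp}\|_F^2=\sum_{i=m+1}^n\|\mH\ve_i\|^2$. For the Dirichlet energy, plugging the spectral decomposition of $\Tilde{\Delta}$ into ${\rm Trace}(\mH\Tilde{\Delta}\mH^\top)$ gives $\mH\Tilde{\Delta}\mH^\top=\sum_{i=1}^n(1-\lambda_i)(\mH\ve_i)(\mH\ve_i)^\top$, and taking the trace yields $\|\mH\|_E^2=\sum_{i=1}^n(1-\lambda_i)\|\mH\ve_i\|^2=\sum_{i=m+1}^n(1-\lambda_i)\|\mH\ve_i\|^2$, where the terms $i\leq m$ drop out exactly because $1-\lambda_i=0$ there. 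At this point the two seminorms are visibly sums over the \emph{same} index set $\{m+1,\dots,n\}$, differing only by the positive weights $1-\lambda_i$.

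The equivalence then follows by bounding these weights. For $m+1\leq i\leq n$ we have $0<1-\lambda_{m+1}\leq 1-\lambda_i\leq 1-\lambda_n<2$, so sandwiching the Dirichlet sum by its smallest and largest weights gives $(1-\lambda_{m+1})\|\mH\|_{\gM^\perp}^2\leq\|\mH\|_E^2\leq(1-\lambda_n)\|\mH\|_{\gM^\perp}^2$. Taking square roots produces the claimed constants $\alpha=\sqrt{1-\lambda_{m+1}}$ and $\beta=\sqrt{1-\lambda_n}$, both strictly positive because $\lambda_{m+1}<1$. I would also note that this computation simultaneously confirms that both are genuine seminorms with the \emph{same} kernel, namely $\sR^d\otimes\gM$ (the set where $\mH\ve_i=0$ for all $i\geq m+1$), which is the conceptual reason the two measures must be comparable.

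There is no substantial obstacle here: the argument is a clean spectral diagonalization. The only points demanding care are (i) verifying that $\mG$ and $\Tilde{\Delta}$ are simultaneously diagonalizable and that the $i\leq m$ contributions to $\|\mH\|_E$ vanish identically, and (ii) ensuring the lower constant is \emph{strictly} positive, which rests on the spectral gap $\lambda_{m+1}<1$ guaranteed by Proposition~\ref{prop:eigenvector}. Everything else is the routine trace manipulation sketched above.
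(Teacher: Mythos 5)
Your proposal is correct and follows essentially the same route as the paper's proof: both diagonalize $\Tilde{\Delta}=\mI-\mG$ in the eigenbasis $\{\ve_i\}$ of $\mG$, express $\|\mH\|_{\gM^\perp}^2$ and $\|\mH\|_E^2$ as weighted sums of the same quantities $\|\mH\ve_i\|^2$ over $i=m+1,\dots,n$, and obtain the lower constant $\alpha=\sqrt{1-\lambda_{m+1}}$ from the spectral gap. The only differences are cosmetic: you get the slightly sharper upper constant $\beta=\sqrt{1-\lambda_n}$ where the paper uses the cruder bound $1-\lambda_i\leq 2$ (hence $\beta=\sqrt{2}$), and you leave the routine seminorm axioms implicit, which your coordinate formulas in any case make immediate.
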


\subsection{ReLU}
Let $\sigma(x)=\max\{x,0\}$ be ReLU. The first main result of this paper is that there is a high-dimensional sphere underlying the input and output of ReLU. More precisely, we have the following proposition:
\begin{proposition}[ReLU]\label{prop:relu-smoothness-geometric}
For any $\mZ=\mZ_\gM+\mZ_{\gM^\perp}\in \sR^{d\times n}$, let $\mH=\sigma(\mZ)=\mH_\gM+\mH_{\gM^\perp}$.
Then
$\mH_{\gM^\perp}$ lies on the high-dimensional sphere centered at ${\mZ_{\gM^\perp}}/{2}$ with radius 
$$
{r\coloneqq \big({\|{\mZ_{\gM^\perp}}/{2}\|^2_F-
\langle \mH_{\gM}, \mH_{\gM}- \mZ_{\gM} \rangle_F}}\big)^{1/2}.
$$
In particular, 
$\mH_{\gM^\perp}$ lies inside the ball centered at ${\mZ_{\gM^\perp}}/{2}$ with radius $\|{\mZ_{\gM^\perp}}/{2}\|_F$ and hence we have $\|\mH\|_{\gM^\perp}\leq \|\mZ\|_{\gM^\perp}$.
\end{proposition}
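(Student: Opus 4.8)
The plan is to reduce the whole statement to one pointwise identity for ReLU, combined with the nonnegativity of the eigenvectors spanning $\gM$. The sphere claim turns out to be an exact reformulation of the identity $\langle \mH, \mH-\mZ\rangle_F=0$, while the ball (and hence the contraction) requires only one extra sign estimate.

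First I would record the elementary ReLU identity: for every real $z$ we have $\sigma(z)\bigl(\sigma(z)-z\bigr)=0$ (it is $z\cdot 0$ when $z\ge0$ and $0\cdot(-z)$ when $z<0$). Applying this entrywise to $\mZ$ and summing over all $d\times n$ coordinates gives the Frobenius identity $\langle \mH,\mH-\mZ\rangle_F=0$. Next I would split this along $\gM\oplus\gM^\perp$: writing $\mH=\mH_\gM+\mH_{\gM^\perp}$, $\mZ=\mZ_\gM+\mZ_{\gM^\perp}$ and using that the two components are orthogonal under $\langle\cdot,\cdot\rangle_F$, the cross terms drop out and the identity becomes $\langle \mH_\gM,\mH_\gM-\mZ_\gM\rangle_F+\langle \mH_{\gM^\perp},\mH_{\gM^\perp}-\mZ_{\gM^\perp}\rangle_F=0$.

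To obtain the sphere, I would expand $\|\mH_{\gM^\perp}-\mZ_{\gM^\perp}/2\|_F^2=\|\mH_{\gM^\perp}\|_F^2-\langle \mH_{\gM^\perp},\mZ_{\gM^\perp}\rangle_F+\|\mZ_{\gM^\perp}/2\|_F^2$, observe that the first two terms equal $\langle \mH_{\gM^\perp},\mH_{\gM^\perp}-\mZ_{\gM^\perp}\rangle_F$, and substitute the split identity to replace this by $-\langle \mH_\gM,\mH_\gM-\mZ_\gM\rangle_F$. This yields exactly $\|\mH_{\gM^\perp}-\mZ_{\gM^\perp}/2\|_F^2=\|\mZ_{\gM^\perp}/2\|_F^2-\langle \mH_\gM,\mH_\gM-\mZ_\gM\rangle_F=r^2$, i.e. $\mH_{\gM^\perp}$ lies on the stated sphere. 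I expect the one genuinely substantive step to be the sign estimate $\langle \mH_\gM,\mH_\gM-\mZ_\gM\rangle_F\ge0$, which is what upgrades the sphere to a ball of radius $\|\mZ_{\gM^\perp}/2\|_F$ and is not forced by the identity alone. To prove it I would expand $\mH_\gM=\sum_{i=1}^m(\mH\ve_i)\ve_i^\top$ and use orthonormality of $\{\ve_i\}$ to rewrite the inner product as $\sum_{i=1}^m\langle \mH\ve_i,(\mH-\mZ)\ve_i\rangle$. Here I would invoke three nonnegativities: (i) each $\ve_i$ with $1\le i\le m$ has nonnegative entries by Proposition~\ref{prop:eigenvector}; (ii) $\mH=\sigma(\mZ)\ge0$ entrywise; and (iii) the ReLU residual $\mH-\mZ=\sigma(\mZ)-\mZ\ge0$ entrywise, since it equals $\max(-Z_{ij},0)$. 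Consequently $\mH\ve_i$ and $(\mH-\mZ)\ve_i$ are nonnegative vectors, so every summand is $\ge0$ and the full sum is $\ge0$.

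Finally, since $r\le\|\mZ_{\gM^\perp}/2\|_F$, the point $\mH_{\gM^\perp}$ sits in the ball centered at $\mZ_{\gM^\perp}/2$ with radius $\|\mZ_{\gM^\perp}/2\|_F$, whose boundary passes through the origin; the triangle inequality then gives $\|\mH_{\gM^\perp}\|_F\le\|\mH_{\gM^\perp}-\mZ_{\gM^\perp}/2\|_F+\|\mZ_{\gM^\perp}/2\|_F\le\|\mZ_{\gM^\perp}\|_F$, which is precisely $\|\mH\|_{\gM^\perp}\le\|\mZ\|_{\gM^\perp}$ via \eqref{eq:distance-in-F-norm}. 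The main obstacle is thus isolated in the nonnegativity argument of the previous paragraph: it is the only place where the structure of $\gM$ (nonnegative basis) and the specific shape of ReLU (both the output and the residual being nonnegative) are both essential; everything else is orthogonal-decomposition bookkeeping and an expansion of a squared norm.
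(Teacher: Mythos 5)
Your proof is correct and follows essentially the same route as the paper's: the identity $\langle \mH,\mH-\mZ\rangle_F=0$ (which the paper derives via the decomposition $\mZ=\mZ^+-\mZ^-$ with $\langle\mZ^+,\mZ^-\rangle_F=0$, equivalent to your pointwise identity $\sigma(z)\bigl(\sigma(z)-z\bigr)=0$), the orthogonal split along $\gM\oplus\gM^\perp$, and the sign estimate $\langle\mH_\gM,\mH_\gM-\mZ_\gM\rangle_F=\langle\mZ^+_\gM,\mZ^-_\gM\rangle_F\geq 0$ from the nonnegativity of $\mZ^{\pm}$ and of the basis vectors $\ve_i$. Your diagonalized form $\sum_{i=1}^m\langle\mH\ve_i,(\mH-\mZ)\ve_i\rangle\geq 0$ is the same computation as the paper's entrywise-nonnegativity argument for $\mZ^+_\gM$ and $\mZ^-_\gM$, since $\mH=\mZ^+$ and $\mH-\mZ=\mZ^-$.
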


\subsection{Leaky ReLU}
Now we consider the leaky ReLU activation function, given by $\sigma_a(x)=\max\{x,ax\}$, where $0<a<1$ is a positive scalar. Similar to ReLU, we have the following result for leaky ReLU:

\begin{proposition}[Leaky ReLU]\label{prop:leaky-relu-smoothness-geometric}
For any 
$\mZ=\mZ_\gM+\mZ_{\gM^\perp}\in\sR^{d\times n}$, 
let $\mH=\sigma_a(\mZ)=\mH_\gM+\mH_{\gM^\perp}$.
Then $\mH_{\gM^\perp}$ lies on the high-dimensional sphere centered at $  (1+a)\mZ_{\gM^\perp}/2$ with radius 
$$
\begin{aligned}
r_a\coloneqq \big(\|{{(1-a)}\mZ_{\gM^\perp}/{2}}\|^2_F-
\langle \mH_{\gM}- \mZ_{\gM} , \mH_{\gM}- a\mZ_{\gM} \rangle_F 
\big)^{1/2}. 
\end{aligned}
$$ 
In particular, 
$\mH_{\gM^\perp}$  lies inside the ball centered at $  (1+a)\mZ_{\gM^\perp}/2$ with radius $  \|{(1-a)\mZ_{\gM^\perp}}/{2}\|_F$ and hence we see that $a\|\mZ\|_{\gM^\perp}\leq \|\mH\|_{\gM^\perp} \leq  \|\mZ\|_{\gM^\perp}.$
\end{proposition}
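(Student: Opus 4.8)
The plan is to mirror the structure of the ReLU case in Proposition~\ref{prop:relu-smoothness-geometric}, replacing the single entrywise orthogonality relation for ReLU by a two-factor relation tailored to leaky ReLU. The starting point is a purely entrywise observation about $\sigma_a$. Writing $\mH=\sigma_a(\mZ)$, I would split on the sign of each entry $Z_{ij}$: if $Z_{ij}\geq 0$ then $H_{ij}=Z_{ij}$, so $H_{ij}-Z_{ij}=0$ and $H_{ij}-aZ_{ij}=(1-a)Z_{ij}\geq 0$; if $Z_{ij}<0$ then $H_{ij}=aZ_{ij}$, so $H_{ij}-aZ_{ij}=0$ and $H_{ij}-Z_{ij}=-(1-a)Z_{ij}>0$. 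This single case analysis yields two facts at once: the entrywise product vanishes, i.e. $(\mH-\mZ)\odot(\mH-a\mZ)=\vzero$, hence
\begin{equation*}
\langle \mH-\mZ,\,\mH-a\mZ\rangle_F = 0;
\end{equation*}
and moreover both matrices $\mH-\mZ$ and $\mH-a\mZ$ are entrywise nonnegative. The first fact drives the sphere identity, the second drives the ball containment.

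Next I would substitute the orthogonal decompositions $\mH=\mH_\gM+\mH_{\gM^\perp}$ and $\mZ=\mZ_\gM+\mZ_{\gM^\perp}$ into the vanishing inner product. Since the $\gM$- and $\gM^\perp$-components are Frobenius-orthogonal (the projection $\mP_\gM=\sum_{i=1}^m\ve_i\ve_i^\top$ acting on the node index satisfies $\mP_\gM\mP_{\gM^\perp}=0$), all cross terms drop and the identity splits as
\begin{equation*}
\langle \mH_\gM-\mZ_\gM,\,\mH_\gM-a\mZ_\gM\rangle_F + \langle \mH_{\gM^\perp}-\mZ_{\gM^\perp},\,\mH_{\gM^\perp}-a\mZ_{\gM^\perp}\rangle_F = 0.
\end{equation*}
Completing the square on the $\gM^\perp$-term (expanding $\langle \mH_{\gM^\perp}-\mZ_{\gM^\perp},\mH_{\gM^\perp}-a\mZ_{\gM^\perp}\rangle_F$ and collecting the $\|\mZ_{\gM^\perp}\|_F^2$ coefficient, which reduces to $(1-a)^2/4$) rewrites it as $\|\mH_{\gM^\perp}-(1+a)\mZ_{\gM^\perp}/2\|_F^2-\|(1-a)\mZ_{\gM^\perp}/2\|_F^2$. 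Substituting back and solving for the distance to the center gives
\begin{equation*}
\big\|\mH_{\gM^\perp}-(1+a)\mZ_{\gM^\perp}/2\big\|_F^2 = \big\|(1-a)\mZ_{\gM^\perp}/2\big\|_F^2 - \langle \mH_\gM-\mZ_\gM,\,\mH_\gM-a\mZ_\gM\rangle_F = r_a^2,
\end{equation*}
which is exactly the claimed sphere.

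To upgrade this to the ball statement I must show $r_a\leq\|(1-a)\mZ_{\gM^\perp}/2\|_F$, equivalently that the correction term $\langle \mH_\gM-\mZ_\gM,\,\mH_\gM-a\mZ_\gM\rangle_F$ is nonnegative. I expect this to be the main obstacle, and it is where the geometry of $\gM$ enters. Using $\mH_\gM-\mZ_\gM=(\mH-\mZ)\mP_\gM$ and $\mH_\gM-a\mZ_\gM=(\mH-a\mZ)\mP_\gM$ together with $\mP_\gM^\top=\mP_\gM=\mP_\gM^2$, this inner product equals ${\rm Trace}\big((\mH-\mZ)\mP_\gM(\mH-a\mZ)^\top\big)=\sum_{k,l}(\mP_\gM)_{kl}\langle \vh_k-\vz_k,\,\vh_l-a\vz_l\rangle$, where $\vh_k,\vz_k$ denote the $k$-th columns. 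Choosing the orthonormal basis of $\gM$ from Proposition~\ref{prop:eigenvector} (the projection is basis-independent, so this is no loss), the $\ve_i$ are nonnegative, hence $\mP_\gM$ has nonnegative entries; combined with the entrywise nonnegativity of $\mH-\mZ$ and $\mH-a\mZ$ from the first step, every summand is a product of nonnegative quantities and the whole sum is $\geq 0$. This gives $r_a\leq\|(1-a)\mZ_{\gM^\perp}/2\|_F$, so $\mH_{\gM^\perp}$ lies in the stated ball.

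Finally, the seminorm bounds follow from the triangle inequality applied to this ball: its center has norm $\tfrac{1+a}{2}\|\mZ_{\gM^\perp}\|_F$ and its radius is $\tfrac{1-a}{2}\|\mZ_{\gM^\perp}\|_F$, and since $a>0$ the origin lies strictly outside the ball. Hence the distance from the origin to any point of the ball lies between $\tfrac{1+a}{2}\|\mZ_{\gM^\perp}\|_F-\tfrac{1-a}{2}\|\mZ_{\gM^\perp}\|_F=a\|\mZ_{\gM^\perp}\|_F$ and $\tfrac{1+a}{2}\|\mZ_{\gM^\perp}\|_F+\tfrac{1-a}{2}\|\mZ_{\gM^\perp}\|_F=\|\mZ_{\gM^\perp}\|_F$. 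Recalling that $\|\mH\|_{\gM^\perp}=\|\mH_{\gM^\perp}\|_F$ and $\|\mZ\|_{\gM^\perp}=\|\mZ_{\gM^\perp}\|_F$ by \eqref{eq:distance-in-F-norm} yields $a\|\mZ\|_{\gM^\perp}\leq\|\mH\|_{\gM^\perp}\leq\|\mZ\|_{\gM^\perp}$. Everything except the sign of the correction term is elementary algebra and the triangle inequality, so that nonnegativity argument is the one genuinely nonroutine ingredient.
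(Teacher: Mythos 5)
Your proposal is correct and follows essentially the same route as the paper: the entrywise case analysis yielding $\langle \mH-\mZ,\,\mH-a\mZ\rangle_F=0$ with both factors nonnegative is exactly the paper's observation that $\mH-\mZ=(1-a)\mZ^-$ and $\mH-a\mZ=(1-a)\mZ^+$ (Lemmas on the positive/negative parts and the circle condition), the orthogonal split plus completion of the square reproduces the paper's sphere identity, and your nonnegativity argument for the correction term via the entrywise-nonnegative projector $\mP_\gM=\sum_{i=1}^m\ve_i\ve_i^\top$ is the same use of Proposition~\ref{prop:eigenvector} as the paper's factorization $(1-a)^2\langle\mZ^-_\gM,\mZ^+_\gM\rangle_F\geq 0$. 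The concluding triangle-inequality step matches the paper's derivation of $a\|\mZ\|_{\gM^\perp}\leq\|\mH\|_{\gM^\perp}\leq\|\mZ\|_{\gM^\perp}$, so there is nothing to correct.
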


\subsection{Implications of the Above Geometric Characterizations}
\label{subsec:implication}
Propositions \ref{prop:relu-smoothness-geometric} and \ref{prop:leaky-relu-smoothness-geometric} imply that the precise location of $\mH_{\gM^\perp}$ (or the unnormalized smoothness $\|\mH_{\gM^\perp}\|_F=\|\mH\|_{\gM^\perp}$) 
depends on the center and the radius 
$r$ or $r_a$.
Given a fixed $\mZ_{\gM^\perp}$, the center of the spheres remains unchanged, and their radii $r$ and $r_a$ are only affected by changes in $\mZ_\gM$. This observation motivates us to investigate {\bf\emph{how changes in $\mZ_\gM$ impact $\|\mH\|_{\gM^\perp}$, i.e., the unnormalized smoothness of node features}}. 

Propositions \ref{prop:relu-smoothness-geometric} and \ref{prop:leaky-relu-smoothness-geometric} imply both ReLU and leaky ReLU activation functions reduce the distance of node features to the eigenspace $\gM$, i.e. $\|\mH\|_{\gM^\perp} \leq \|\mZ\|_{\gM^\perp}$. 
Moreover, 
this inequality is independent of $\mZ_{\gM}$; consider two node features $\mZ,\mZ'\in\sR^{d\times n}$ s.t. $\mZ_{\gM^\perp}=\mZ'_{\gM^\perp}$ but $\mZ_\gM \neq \mZ'_\gM$. Let $\mH$ and $\mH'$ be the output of $\mZ$ and $\mZ'$ via ReLU or leaky ReLU activation function, respectively. Then we have the inequalities $\|\mH\|_{\gM^\perp}\leq \|\mZ\|_{\gM^\perp}$ and $\|\mH'\|_{\gM^\perp}\leq \|\mZ'\|_{\gM^\perp}$. 
Since $\mZ_{\gM^\perp}=\mZ'_{\gM^\perp}$,
we deduce that $\|\mH'\|_{\gM^\perp}\leq \|\mZ\|_{\gM^\perp}$.
In other words, when $\mZ_{\gM^\perp}=\mZ'_{\gM^\perp}$ is fixed, changing $\mZ_\gM$ to $\mZ'_\gM$ can change the unnormalized smoothness of the output features but cannot change the fact that both ReLU and leaky ReLU activation functions smooth node features;
we demonstrate this result in Fig.~\ref{fig:sec5:normalized-smoothness}a) in Section~\ref{sec:empirical-study}. Notice that without considering the nonlinear activation function, changing $\mZ_\gM$ does not affect the unnormalized smoothness of node features measured by $\|\mH\|_{\gM^\perp}$.

In contrast to the unnormalized smoothness, {\bf\emph{if one considers the normalized smoothness,
we find that adjusting $\mZ_{\gM}$ 
can result in a less smooth output}};
we will discuss this in detail in Section~\ref{sec:empirical-study}.





\section{
How Adjusting \texorpdfstring{$\mZ_\gM$}{TEXT} Affects the Smoothness of the Output}
\label{sec:control}
Throughout this section, we let $\mZ$ and $\mH$ be the input and output of ReLU or leaky ReLU activation function. The smoothness notions based on the distance of feature vectors to the eigenspace $\gM$ or their Dirichlet energy do not account for the magnitude of each dimension of the learned node features. Indeed, Cai and Wang \cite{cai2020note} points out that analyzing the normalized smoothness of features vectors $\mZ$, given by $\|\mZ\|_E/\|\mZ\|_F$, is an open problem. However, these two smoothness notions aggregate the smoothness of node features across all dimensions; when the magnitude of some dimensions is much larger than others, the smoothness will be dominated by these dimensions.

Motivated by the discussion in Section~\ref{subsec:implication}, 
we study the disparate effects of adjusting $\mZ_\gM$ on the normalized and unnormalized smoothness in this section. 
For the sake of simplicity, we assume the graph is connected ($m=1$); all the following results can be extended to graphs with multiple connected components ($m\geq 2$) easily. Due to the equivalence between seminorms $\|\cdot\|_\gM$ and $\|\cdot\|_E$, we introduce the following definition of the dimension-wise normalized smoothness of node features.

\begin{definition}\label{def:normalized-smoothness}
Let $\mZ\in\sR^{d\times n}$ be the features over $n$ nodes with $\vz^{(i)}\in\sR^n$ being its $i^{th}$ row, i.e., the $i^{th}$ dimension of the features over all nodes. We define the normalized smoothness of $\vz^{(i)}$ as follows:
$$
\begin{aligned}
s(\vz^{(i)})\coloneqq {\|\vz^{(i)}_{\gM}\|}/{\|\vz^{(i)}\|},
\end{aligned}
$$
where we set $s(\vz^{(i)}) = 1$ when $\vz^{(i)} = {\bf 0}$.
\end{definition}

\begin{remark}
Notice that the normalized smoothness $s(\vz^{(i)})={\|\vz^{(i)}_{\gM}\|}/{\|\vz^{(i)}\|}$ is closely related to the ratio between the smooth and non-smooth components of node features ${\|\vz^{(i)}_{\gM}\|}/{\|\vz^{(i)}_{\gM^\perp}\|}$. 
\end{remark}

The graph is connected ($m=1$) implies that $\vz^{(i)}_{\gM} = \langle \vz^{(i)}, \ve_1\rangle \ve_1$ and hence $\|\vz^{(i)}_{\gM}\| = |\langle \vz^{(i)}, \ve_1\rangle|$. Without ambiguity, we may drop the index and write $\vz$ for $\vz^{(i)}$ and $\ve$ for $\ve_1$ -- the only eigenvector of $\mG$ associated with the eigenvalue $1$. Moreover, we have
\begin{equation} \label{eq:node-feature-smoothness}
\begin{aligned}
s(\vz) = \frac{\|\vz_{\gM}\|}{\|\vz\|} = \frac{|\langle \vz, \ve \rangle|}{\|\vz\|} = \frac{|\langle \vz, \ve \rangle|}{\|\vz\|\cdot\|\ve\|} \Rightarrow 0 \leq s(\vz) \leq 1,
\end{aligned}
\end{equation}
It is evident that the larger $s(\vz)$ is, the smoother the node feature $\vz$ is\footnote{Here, $\vz\in\sR^n$ is a vector whose $i^{th}$ entry is the 1D feature associated with node $i$.}. In fact, we have
$$ 
\begin{aligned}
  s(\vz)^2 + \Big(\frac{\|\vz\|_{\gM^\perp}}{\|\vz\|}\Big)^2 =  \frac{\|\vz_{\gM}\|^2}{\|\vz\|^2} + \frac{\|\vz_{\gM^\perp}\|^2}{\|\vz\|^2} = 1,
\end{aligned}
$$
where 
$\|\vz\|_{\gM^\perp}/\|\vz\|$ decreases as $s(\vz)$ increases.

\begin{figure}[!ht]
\begin{center}
\begin{tabular}{cc}
\includegraphics[width=0.48\linewidth]{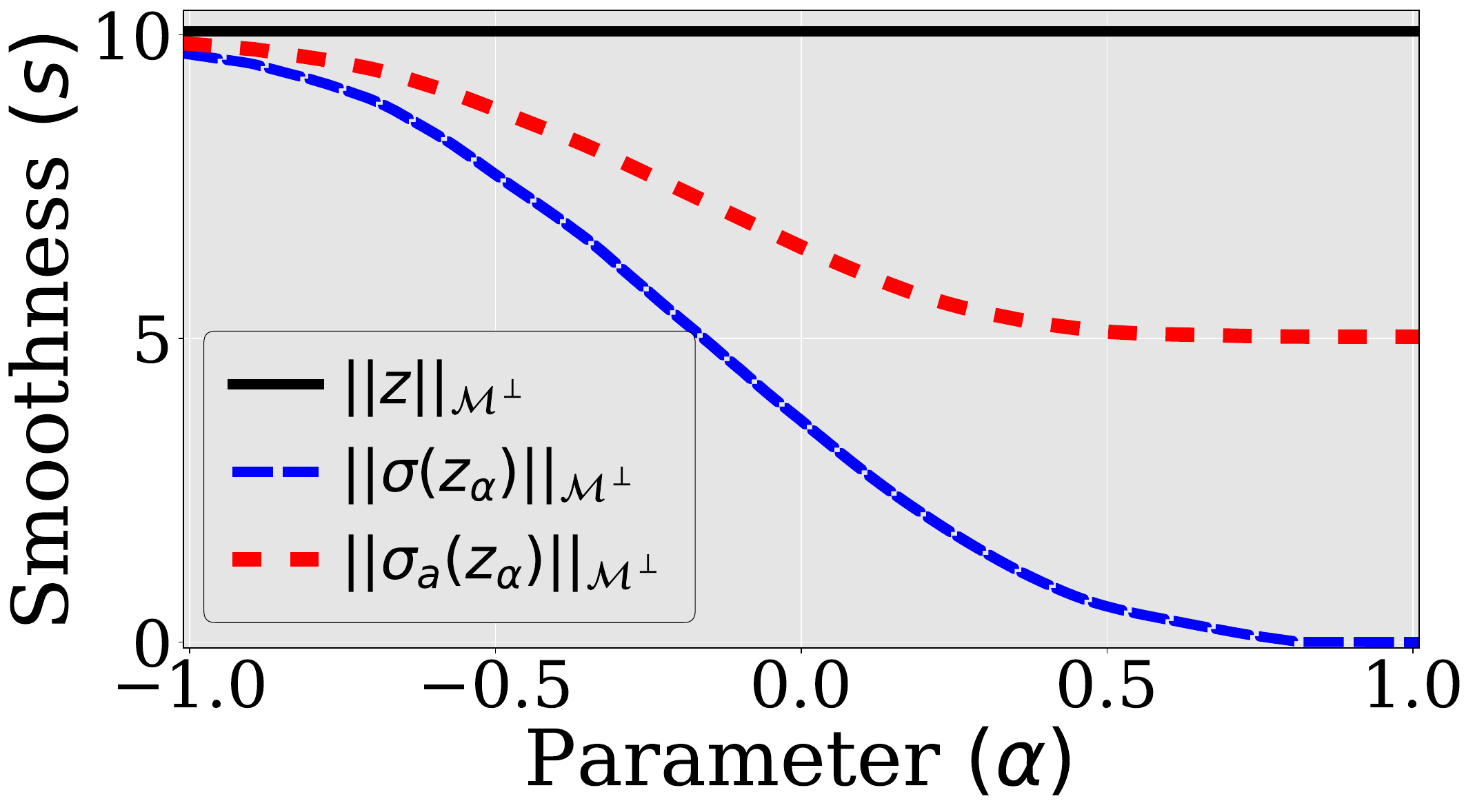}&
\includegraphics[width=0.48\linewidth]{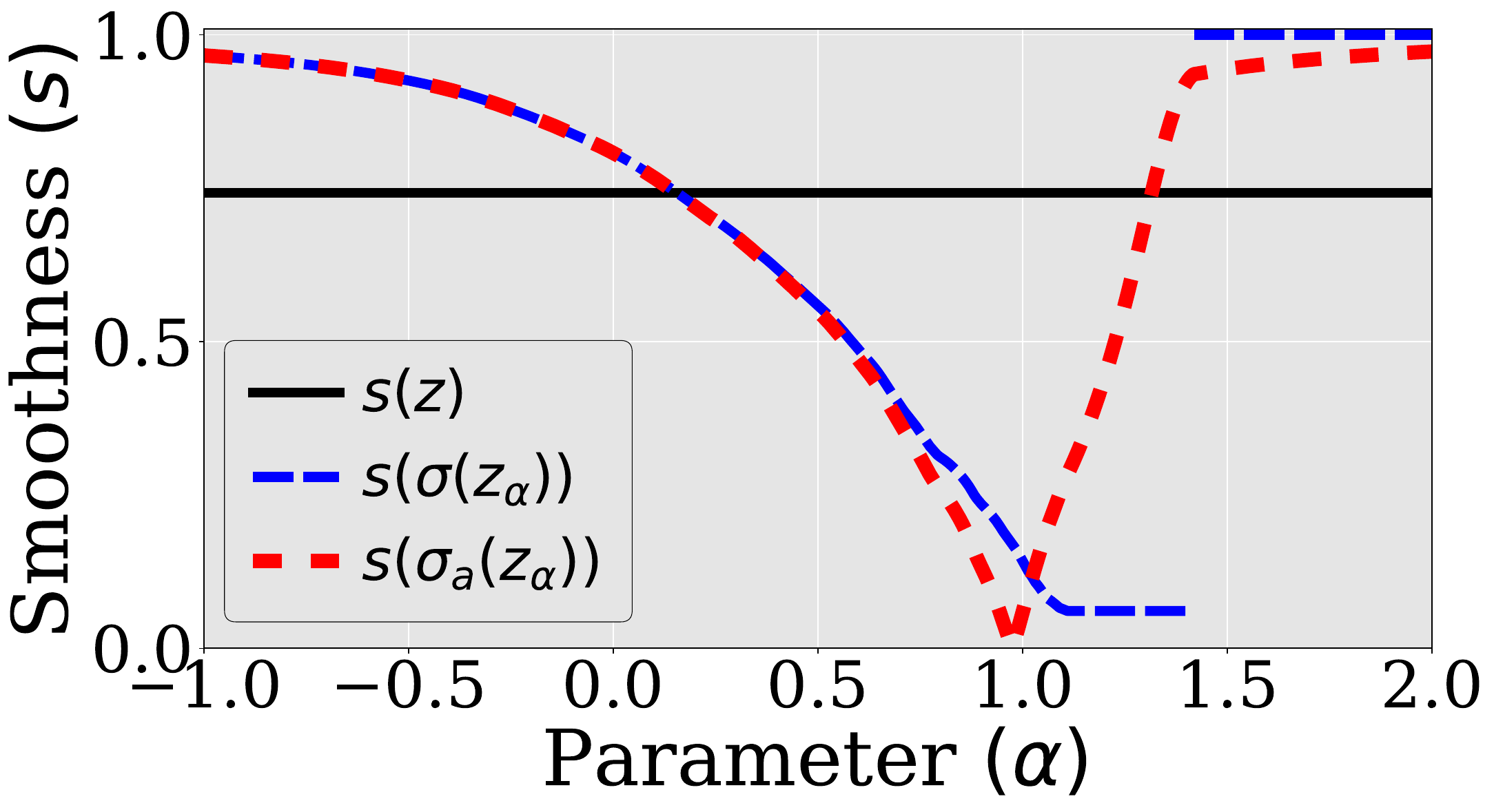}\\
{ a) Smoothness} & { b) Normalized smoothness}
\end{tabular}
  \end{center}
  \caption{
  Contrasting the effects of varying parameter $\alpha$ on the smoothness and normalized smoothness of output node 
  features $\sigma(\vz_\alpha)$ and $\sigma_a(\vz_\alpha)$. Notice that the discontinuity of $s(\sigma(\vz_\alpha))$ in panel 
  b) comes from the definition of normalized smoothness. Moreover, note that $s(\vz)=1$ if $\vz=\bf 0$, and $\sigma(\vz_\alpha)$ can become $\bf 0$ when $\alpha$ is large enough. 
  }\label{fig:sec5:normalized-smoothness}
\end{figure}
To discuss how the smoothness $s(\vh) = s(\sigma(\vz))$ or $s(\sigma_a(\vz))$ can be adjusted by changing $\vz_\gM$, we consider the following function: 
$$
\vz(\alpha) = \vz - \alpha\ve.
$$ 
It is clear that 
$$
\begin{aligned}
\vz(\alpha)_{\gM^\perp} = \vz_{\gM^\perp} \text{ and } \vz(\alpha)_{\gM} = \vz_{\gM} - \alpha\ve,
\end{aligned}
$$
where we see that $\alpha$ only 
alters $\vz_\gM$ 
while preserves $\vz_{\gM^\perp}$. Moreover, it is evident that
\begin{equation*} 
\begin{aligned}s(\vz(\alpha)) = \sqrt{1 - \frac{\|\vz(\alpha)_{\gM^\perp}\|^2}{\|\vz(\alpha)\|^2}} = \sqrt{1 - \frac{\|\vz_{\gM^\perp}\|^2}{\|\vz(\alpha)\|^2}}.
\end{aligned}
\end{equation*}
It follows that ${  s(\vz(\alpha))=1}$ if and only if ${  \vz_{\gM^\perp}=\bf 0}$ (include the case ${  \vz=\bf 0}$), 
{showing that when $\vz_{\gM^\perp}=\bf 0$, the vector $\vz$ is the smoothest one.}

\subsection{The Disparate Effects of \texorpdfstring{$\alpha$}{TEXT} on 
\texorpdfstring{$\|\cdot\|_{\gM^\perp}$}{TEXT} and \texorpdfstring{$s(\cdot)$}{TEXT}: Empirical results}\label{sec:empirical-study}
Let us conduct a simple empirical study to investigate possible values that the unnormalized smoothness $\|\sigma(\vz(\alpha))\|_{\gM^{\perp}}$, $\|\sigma_a(\vz(\alpha))\|_{\gM^{\perp}}$ and 
the normalized smoothness $s(\sigma(\vz(\alpha)))$, $s(\sigma_a(\vz(\alpha)))$ can take when $\alpha$ varies. 
In this subsection, we denote $\vz_\alpha\coloneqq\vz(\alpha) = \vz-\alpha\ve$. We consider a connected synthetic graph with $100$ nodes, and each node is assigned a random degree between $2$ to $10$. Then we assign an initial node feature vector $\vz\in\sR^{100}$, sampled uniformly on the interval $[-1.5,1.5]$, to the graph with each node feature being a scalar. Also, we compute $\ve$ by the formula $\ve = \Tilde{\mD}^{\frac{1}{2}}\vu/\|\Tilde{\mD}^{\frac{1}{2}}\vu\|$ from Proposition~\ref{prop:eigenvector}, where $\vu\in\sR^{100}$ is the vector whose entries are all ones and $\Tilde{\mD}$ is the (augmented) degree matrix of the graph. 
We examine two different smoothness notions for the input node features $\vz$ and the output node features $\sigma(\vz_\alpha)$ and $\sigma_a(\vz_\alpha)$, where the smoothness is measured for various values of the smoothness control parameter $\alpha$ in the range $[-1.5, 1.5]$. 
In Fig.~\ref{fig:sec5:normalized-smoothness}a), we study the unnormalized smoothness measured by $\|\cdot\|_{\gM^{\perp}}$; we see that $\|\sigma(\vz_\alpha)\|_{\gM^{\perp}}$ and $\|\sigma_a(\vz_\alpha)\|_{\gM^{\perp}}$, the smoothness of the output nodes features, are always no greater than $\|\vz\|_{\gM^{\perp}}$ -- the smoothness of the input. 
This coincides with the discussion in Section~\ref{subsec:implication}; adjusting the projection of $\vz$ onto the eigenspace $\gM$ can not change the fact that $\|\sigma(\vz_\alpha)\|_{\gM^{\perp}} \leq \|\vz\|_{\gM^{\perp}}$ and $\|\sigma_a(\vz_\alpha)\|_{\gM^{\perp}} \leq \|\vz\|_{\gM^{\perp}}$. Nevertheless, an interesting result is that {\bf\emph{altering the eigenspace projection can adjust the unnormalized smoothness of the output}}: notice that altering the eigenspace projection does not change its distance to $\gM$, i.e., the smoothness of the input is unchanged, but the smoothness of the output after activation function can be changed.

In contrast, when studying the normalized smoothness measured by $s(\cdot)$ in Fig.~\ref{fig:sec5:normalized-smoothness}b), we find that $s(\sigma(\vz(\alpha)))$ and $s(\sigma_a(\vz(\alpha)))$, the normalized smoothness of the output nodes features, can be adjusted by $\alpha$ to values smaller than $s(\vz)$. 
More precisely, we see that by adjusting $\alpha$, $s(\sigma(\vz(\alpha)))$ and $s(\sigma_a(\vz(\alpha)))$ can achieve most of the values in $[0,1]$. In other words, both smoother and less smooth features can be obtained by adjusting $\alpha$.

\subsection{Theoretical Results on the Smooth Effects of ReLU and Leaky ReLU}
In this subsection, we build theoretical understandings of the above empirical findings on the achievable smoothness of $\sigma(\vz(\alpha))$ and $\sigma_a(\vz(\alpha))$ by adjusting $\alpha$ -- shown in Fig.~\ref{fig:sec5:normalized-smoothness}.
Notice that if $\vz_{\gM^\perp}={\bf 0}$, the inequalities presented in Propositions~\ref{prop:relu-smoothness-geometric} and \ref{prop:leaky-relu-smoothness-geometric} indicate that $\|\sigma(\vz(\alpha))\|_{\gM^\perp}$ and $\|\sigma_a(\vz(\alpha))\|_{\gM^\perp}$ 
vanish. So we have 
$s(\sigma(\vz(\alpha))) = 1$ for any $\alpha$ when $\vz_{\gM^\perp} = {\bf 0}$. Then we may assume $\vz_{\gM^\perp} \neq {\bf 0}$ for the following study.

\begin{proposition}[ReLU] 
\label{prop:smoothness-control-relu} 
Suppose $\vz_{\gM^\perp} \neq {\bf 0}$. Let $\vh(\alpha)=\sigma(\vz(\alpha))$ with $\sigma$ being ReLU, then we have
\[ 
\begin{aligned}
\min_\alpha s(\vh(\alpha)) = \sqrt{\frac{\sum_{x_i = \max \vx}d_i}{\sum^n_{j =1}d_j}},
\end{aligned}
\]
and 
\[ 
\begin{aligned}
\max_\alpha s(\vh(\alpha)) = 1,
\end{aligned}
\]
where $\vx\coloneqq\Tilde{\mD}^{-\frac{1}{2}}\vz$, $\max\vx=\max_{1\leq i\leq n}x_i$, and $\Tilde{\mD}$ is the augmented degree matrix, whose diagonal entries are $d_1,d_2,\ldots,d_n$.
In particular, the normalized smoothness $s(\vh(\alpha))$ is monotone increasing as $\alpha$ decreases whenever $\alpha< \|\Tilde{\mD}^{\frac{1}{2}}\vu_n\| \max\vx$ and it has range $[\min_\alpha s(\vh(\alpha)), 1]$.
\end{proposition}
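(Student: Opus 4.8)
The plan is to reduce the problem to a one‑dimensional optimization over a single scalar shift and then analyze one rational function of that scalar piecewise. First I would diagonalize the dependence on $\alpha$ using Proposition~\ref{prop:eigenvector} in the connected case $m=1$: write $\ve=\Tilde{\mD}^{1/2}\vu/\|\Tilde{\mD}^{1/2}\vu\|$ with $\vu$ the all‑ones vector, set $N:=\|\Tilde{\mD}^{1/2}\vu\|$ so that $N^2=\sum_{j=1}^n d_j$, and put $\beta:=\alpha/N$. With $\vx=\Tilde{\mD}^{-1/2}\vz$ the $i$-th entry of $\vz(\alpha)=\vz-\alpha\ve$ becomes $\sqrt{d_i}\,(x_i-\beta)$, and since $\sqrt{d_i}>0$ ReLU commutes with this positive scaling, so $h_i=\sqrt{d_i}\,(x_i-\beta)_+$ where $(t)_+:=\max\{t,0\}$. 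Using $e_i=\sqrt{d_i}/N$ in $s(\vh)^2=\langle\vh,\ve\rangle^2/\|\vh\|^2$ yields the closed form
\[
s(\vh(\alpha))^2=\frac{\big(\sum_i d_i (x_i-\beta)_+\big)^2}{\big(\sum_j d_j\big)\big(\sum_i d_i (x_i-\beta)_+^2\big)} .
\]
By Cauchy--Schwarz this is at most $1$, and it depends on $\alpha$ only through $\beta$ and through the active set $S(\beta):=\{i:x_i>\beta\}$.

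Next I would fix an interval of $\beta$ on which $S(\beta)$ is constant and introduce the $d$-weighted mean $\mu_S$ and variance $\Var_S$ of $\{x_i:i\in S\}$, with $W_S:=\sum_{i\in S}d_i$. The identities $\sum_{i\in S}d_i(x_i-\beta)=W_S(\mu_S-\beta)$ and $\sum_{i\in S}d_i(x_i-\beta)^2=W_S\big(\Var_S+(\mu_S-\beta)^2\big)$ collapse the closed form to
\[
s(\vh(\alpha))^2=\frac{W_S}{\sum_j d_j}\cdot\frac{(\mu_S-\beta)^2}{\Var_S+(\mu_S-\beta)^2}.
\]
Since $\beta<\min_{i\in S}x_i\le\mu_S$ on this interval, setting $t:=\mu_S-\beta>0$ makes the second factor $t^2/(\Var_S+t^2)$ strictly increasing in $t$, hence strictly decreasing in $\beta$, whenever $\Var_S>0$, and constant when $\Var_S=0$. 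This is the per‑piece monotonicity: on each interval $s$ increases as $\beta$, equivalently $\alpha$, decreases.

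Then I would upgrade this to global monotonicity. Both $\sum_i d_i(x_i-\beta)_+$ and $\sum_i d_i(x_i-\beta)_+^2$ are continuous in $\beta$, and the denominator is positive exactly while $\beta<\max\vx$ (some $x_i=\max\vx$ stays active), i.e. while $\alpha<N\max\vx$; so $s^2$ is continuous there. Continuity at the interior break‑points, chained with the decreasing pieces, shows $s^2$ is non‑increasing in $\beta$ on $(-\infty,\max\vx)$ — precisely the claimed statement that $s(\vh(\alpha))$ increases as $\alpha$ decreases for $\alpha<N\max\vx$. The three quantitative claims then fall out. On the top piece, where $\beta$ lies between the second‑largest value of $\vx$ and $\max\vx$, the active set is the argmax set $\{i:x_i=\max\vx\}$ with $\Var_S=0$, so $s^2=W_S/\sum_j d_j=\sum_{x_i=\max\vx}d_i/\sum_j d_j$, which by monotonicity is the minimum. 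As $\beta\to-\infty$ the active set is everything and $\Var_S>0$ (using $\vz_{\gM^\perp}\ne\mathbf{0}$, i.e. $\vx$ non‑constant), so $s^2\to1$; while for $\alpha\ge N\max\vx$ one has $\vh=\mathbf{0}$ and $s=1$ by convention, so $\max_\alpha s=1$ is attained. Applying the intermediate value theorem to the continuous decreasing $s$ gives range $[\min_\alpha s,1]$.

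I expect the main obstacle to be this global monotonicity across break‑points rather than any single computation. The closed form is only piecewise smooth, the active set changes as $\beta$ sweeps the distinct values of $\vx$, and there is a genuine jump at $\beta=\max\vx$ produced by the convention $s(\mathbf{0}):=1$ (the discontinuity seen in Fig.~\ref{fig:sec5:normalized-smoothness}b). The delicate points are verifying continuity of $s^2$ at the interior break‑points so that the decreasing pieces splice together, and confirming that the smallest attainable nonempty active set is exactly the argmax set, which is what pins the minimum to $\sum_{x_i=\max\vx}d_i/\sum_j d_j$; the rest is the elementary estimate for $t^2/(\Var_S+t^2)$ together with bookkeeping.
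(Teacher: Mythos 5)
Your proposal is correct, and its skeleton matches the paper's proof: pull $\Tilde{\mD}^{\frac{1}{2}}$ through ReLU to reduce to the shifted vector $\sigma(\vx-\frac{\alpha}{c}\vu_n)$, analyze $s$ piecewise over intervals where the active set is constant, identify the top piece (active set equal to the argmax set) as giving the minimum $\sqrt{\sum_{x_i=\max\vx}d_i/\sum_j d_j}$, send $\alpha\to-\infty$ for the supremum $1$, and use the convention $s(\mathbf{0})=1$ for $\alpha\geq c\max\vx$ plus the intermediate value theorem to obtain the range $[\min_\alpha s,1]$. Where you genuinely diverge is the monotonicity engine. The paper differentiates $s(\vh(\alpha))$ in $\epsilon$ on each piece and shows the numerator of $\partial_\epsilon s$ is positive via a strict Cauchy--Schwarz inequality between the vectors $(K_j)_j$ and $(K_j(\epsilon-\delta_j))_j$; it must then treat the final all-active regime $\alpha<cx_n$ separately, falling back on Proposition~\ref{prop:smoothness-control-linear} (the identity-activation case) together with the verification $cx_n\leq\langle\vz,\ve\rangle$. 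Your weighted mean/variance reparametrization $s^2=\frac{W_S}{\sum_j d_j}\cdot\frac{t^2}{\Var_S+t^2}$ with $t=\mu_S-\beta>0$ handles both at once: per-piece monotonicity becomes the calculus-free observation that $t^2/(\Var_S+t^2)$ increases in $t$ (and is constant exactly when $\Var_S=0$, i.e., on the top piece), while the all-active piece needs no separate treatment, since $\beta<\min_{i\in S}x_i\leq\mu_S$ automatically yields $t>0$, subsuming the paper's check $cx_n\leq\langle\vz,\ve\rangle$. Algebraically your variance identity is precisely the Cauchy--Schwarz gap the paper computes, so the two arguments have the same content, but your packaging makes the strictness ($\Var_S>0$ whenever $S$ strictly contains the argmax set, guaranteed by $\vz_{\gM^\perp}\neq\mathbf{0}$) and the splicing across break-points (continuity of both sums in $\beta$) immediate; you also correctly read the proposition's ``monotone increasing'' non-strictly, since $s$ is constant on the top piece, which is consistent with the paper's own computation there.
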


\begin{proposition}[Leaky ReLU] 
\label{prop:smoothness-control-leaky-relu}
Suppose $\vz_{\gM^\perp} \neq \bf 0$. Let $\vh(\alpha)=\sigma_a(\vz(\alpha))$ with $\sigma_a$ being leaky ReLU, then (1) $\min_\alpha s(\vh(\alpha)) = 0$, and (2) $\sup_\alpha s(\vh(\alpha)) = 1$ 
and $s(\vh(\alpha))$ has range $[0, 1)$.
\end{proposition}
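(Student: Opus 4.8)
The plan is to collapse everything to a one-dimensional problem through the change of variables already used for the ReLU case. Writing $\vx := \Tilde{\mD}^{-\frac{1}{2}}\vz$ with entries $x_i$, and setting $N := \|\Tilde{\mD}^{\frac{1}{2}}\vu\|$ so that $\ve = \Tilde{\mD}^{\frac{1}{2}}\vu/N$ has entries $e_i = \sqrt{d_i}/N$, the $i$-th entry of $\vz(\alpha) = \vz - \alpha\ve$ becomes $\sqrt{d_i}\,(x_i - \beta)$ with $\beta := \alpha/N$. Since leaky ReLU is positively homogeneous, $\sigma_a(\sqrt{d_i}\,t) = \sqrt{d_i}\,\sigma_a(t)$, so that $h_i(\alpha) = \sqrt{d_i}\,\sigma_a(x_i - \beta)$. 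Abbreviating $y_i := \sigma_a(x_i - \beta)$, direct evaluation of $\langle\vh(\alpha),\ve\rangle$ and $\|\vh(\alpha)\|$ gives
\[
s(\vh(\alpha)) = \frac{\bigl|\sum_i d_i\, y_i\bigr|}{\sqrt{\sum_i d_i}\,\sqrt{\sum_i d_i\, y_i^2}},
\]
which by Cauchy--Schwarz applied to $(\sqrt{d_i})_i$ and $(\sqrt{d_i}\,y_i)_i$ is always at most $1$. The whole proposition is now a statement about this scalar function of $\beta$.

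For the minimum, consider $g(\beta) := \sum_i d_i\,\sigma_a(x_i - \beta)$, which is continuous in $\beta$. As $\beta \to -\infty$ every argument $x_i - \beta \to +\infty$, so $\sigma_a(x_i-\beta) = x_i - \beta$ and $g(\beta)\to+\infty$; as $\beta\to+\infty$ every argument tends to $-\infty$, so $\sigma_a(x_i-\beta) = a(x_i-\beta)$ and $g(\beta)\to-\infty$. By the intermediate value theorem there is $\beta^{*}$ with $g(\beta^{*}) = 0$, i.e. the numerator vanishes. Because $\sigma_a(t) = 0$ only at $t=0$, the output could vanish only if $x_i = \beta^{*}$ for every $i$; but $\vz_{\gM^\perp}\neq{\bf 0}$ forces the $x_i$ to be non-constant, so $\vh(\beta^{*})\neq{\bf 0}$ and $s(\vh(\beta^{*})) = 0$ genuinely. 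Since $s\ge 0$ everywhere, $\min_\alpha s(\vh(\alpha)) = 0$.

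For the supremum I would analyze $\beta\to-\infty$ (the limit $\beta\to+\infty$ is identical up to a factor $a$ that cancels in the quotient). There $y_i = x_i - \beta$, and expanding numerator and denominator as polynomials in $\beta$ shows the leading $\beta^2$ terms dominate, giving $s(\vh(\alpha))\to \sum_i d_i / \bigl(\sqrt{\sum_i d_i}\,\sqrt{\sum_i d_i}\bigr) = 1$; hence $\sup_\alpha s = 1$. To see that $1$ is never attained, use the equality case of Cauchy--Schwarz: $s=1$ would force $(\sqrt{d_i}\,y_i)_i$ to be proportional to $(\sqrt{d_i})_i$, i.e. $\sigma_a(x_i-\beta)$ constant in $i$. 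Since $0<a<1$ makes $\sigma_a$ strictly increasing and hence injective, this requires all $x_i$ equal, contradicting $\vz_{\gM^\perp}\neq{\bf 0}$; thus $s(\vh(\alpha))<1$ for every finite $\alpha$. Finally $\vh(\alpha)$ never vanishes (same injectivity argument), so $s(\vh(\cdot))$ is continuous on all of $\sR$; being continuous, attaining $0$, and having unattained supremum $1$, its image is exactly $[0,1)$.

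The main obstacle is the strict inequality $s<1$, which is precisely where leaky ReLU departs from ReLU: the decisive ingredient is the injectivity of $\sigma_a$, valid only because $a>0$, without which constant outputs and hence $s=1$ could occur. The sign analysis of $g$ for the minimum and the asymptotic computation for the supremum are routine once the reduction to $g(\beta)$ is in place; the only additional care needed is to confirm that the denominator $\|\vh(\alpha)\|$ stays nonzero, which again rests on the non-constancy of $\vx$ guaranteed by $\vz_{\gM^\perp}\neq{\bf 0}$.
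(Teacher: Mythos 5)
Your proof is correct, and while it shares the intermediate-value-theorem idea for the minimum, it takes a genuinely different route overall. The paper's proof is deliberately axiomatic: it isolates two qualitative properties of $\sigma_a$ (it is eventually positive for $x\gg 0$ and negative for $x\ll 0$, and eventually a non-trivial linear map), gets the minimum by applying IVT to $\alpha\mapsto\langle\vh(\alpha),\ve\rangle$, and gets the supremum by noting that once all entries of $\vz(\alpha)$ land in the linear regime, $s(\vh(\alpha))=s(\vz(\alpha))$, so the separately proved linear case (Proposition~\ref{prop:smoothness-control-linear}) finishes the job; this abstraction is exactly what lets the paper's subsequent remark extend the result to ELU, SELU, and similar activations. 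You instead import the change of variables $\vx=\Tilde{\mD}^{-\frac{1}{2}}\vz$ from the paper's ReLU proof (Proposition~\ref{prop:smoothness-control-relu}) and exploit positive homogeneity, $\sigma_a(ct)=c\,\sigma_a(t)$ for $c>0$ --- a property specific to ReLU-type maps --- to obtain a closed-form scalar quotient, after which IVT, a direct asymptotic expansion, and the equality case of Cauchy--Schwarz do all the work. Your concreteness buys three things the paper's write-up leaves implicit or garbles: (i) you check $\vh(\alpha)\neq{\bf 0}$ at the zero of the numerator, which is genuinely needed because Definition~\ref{def:normalized-smoothness} sets $s({\bf 0})=1$, so attaining the minimum $0$ is not established without it; (ii) your injectivity/Cauchy--Schwarz argument yields $s(\vh(\alpha))<1$ for \emph{every} $\alpha$, not only in the eventually-linear regime, and combined with continuity this is what actually pins down the range $[0,1)$ (in the paper this strictness is only available indirectly, e.g.\ from the lower bound $a\|\vz\|_{\gM^\perp}\leq\|\vh\|_{\gM^\perp}$ in Proposition~\ref{prop:leaky-relu-smoothness-geometric}, and the printed proof does not spell the range claim out); (iii) your explicit sign bookkeeping avoids the slips in the paper's proof, where $\alpha_1$ and $\alpha_2$ point the wrong way relative to $\vz(\alpha)=\vz-\alpha\ve$ and the impossible inequality $|\langle\vh(\alpha_2),\ve\rangle|<0$ appears. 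What you give up is generality: homogeneity fails for ELU and SELU, so your reduction does not cover the extensions the paper's two-property scheme handles for free.
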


Proposition~\ref{prop:smoothness-control-leaky-relu} 
also 
holds for other variants of ReLU, e.g., ELU\footnote{The ELU function is defined by $f(x) = \max(x, 0) + \min(0,a\cdot (e^x -1))$ where $a>0$.} and SELU\footnote{The SELU function is defined by $f(x) = c(\max(x, 0) + \min(0,a\cdot (e^x -1 )))$ where $a,c>0$.}.; see 
\ref{sec:sct}. 
%
%
We summarize Propositions~\ref{prop:relu-smoothness-geometric}, \ref{prop:leaky-relu-smoothness-geometric}, \ref{prop:smoothness-control-relu}, and \ref{prop:smoothness-control-leaky-relu} in the following corollary, which qualitatively explains the empirical results in Fig.~\ref{fig:sec5:normalized-smoothness}.

\begin{corollary}\label{cor:smoothness effective control}
Suppose $\vz_{\gM^\perp} \neq {\bf 0}$. Let $\vh(\alpha)=\sigma(\vz(\alpha))$ or $\sigma_a(\vz(\alpha))$ with $\sigma$ being ReLU and $\sigma_a$ being leaky ReLU. Then we have $\|\vz\|_{\gM^\perp} 
\geq \|\vh(\alpha)\|_{\gM^\perp}$ for any $\alpha\in\sR$; however, $s(\vh(\alpha))$ is not always smaller than $s(\vz)$. In particular, 
$s(\vh(\alpha))$ can be smaller than, larger than, or equal to $s(\vz)$ for different values of $\alpha$.
\end{corollary}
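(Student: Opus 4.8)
The plan is to handle the two assertions separately. The inequality $\|\vz\|_{\gM^\perp}\geq\|\vh(\alpha)\|_{\gM^\perp}$ concerns the unnormalized smoothness and is an immediate consequence of the geometric Propositions~\ref{prop:relu-smoothness-geometric} and \ref{prop:leaky-relu-smoothness-geometric}, while the non-monotone behavior of $s(\cdot)$ is packaged from the range computations in Propositions~\ref{prop:smoothness-control-relu} and \ref{prop:smoothness-control-leaky-relu}. The whole argument is thus a matter of assembling the four preceding propositions, with a little care taken on the "$=s(\vz)$" clause.

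For the first assertion, the key observation is that the shift by $\alpha\ve$ leaves the $\gM^\perp$-component untouched: since $\ve\in\gM$, we have $\vz(\alpha)_{\gM^\perp}=\vz_{\gM^\perp}$, and therefore $\|\vz(\alpha)\|_{\gM^\perp}=\|\vz_{\gM^\perp}\|=\|\vz\|_{\gM^\perp}$ for every $\alpha$. Applying Proposition~\ref{prop:relu-smoothness-geometric} (for ReLU) or the upper bound in Proposition~\ref{prop:leaky-relu-smoothness-geometric} (for leaky ReLU) to the input $\vz(\alpha)$, viewed as a one-row matrix, then yields $\|\vh(\alpha)\|_{\gM^\perp}=\|\sigma(\vz(\alpha))\|_{\gM^\perp}\leq\|\vz(\alpha)\|_{\gM^\perp}=\|\vz\|_{\gM^\perp}$, which is exactly the claimed bound, uniformly in $\alpha$.

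For the second assertion, I would first record that $\vz_{\gM^\perp}\neq{\bf 0}$ forces $s(\vz)<1$ by \eqref{eq:node-feature-smoothness}. Propositions~\ref{prop:smoothness-control-relu} and \ref{prop:smoothness-control-leaky-relu} show that, as $\alpha$ ranges over $\sR$, $s(\vh(\alpha))$ attains values arbitrarily close to (indeed equal to, in the ReLU case) $1$; choosing such an $\alpha$ gives $s(\vh(\alpha))>s(\vz)$, which already contradicts "always smaller." The same propositions identify the full range of $s(\vh(\alpha))$ — namely $[\min_\alpha s(\vh(\alpha)),1]$ for ReLU and $[0,1)$ for leaky ReLU — whose lower end lies strictly below $s(\vz)$ in the generic case, so some $\alpha$ produces $s(\vh(\alpha))<s(\vz)$. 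Finally, to realize $s(\vh(\alpha))=s(\vz)$ I would invoke the continuous, monotone branch of $\alpha\mapsto s(\vh(\alpha))$ isolated in Proposition~\ref{prop:smoothness-control-relu} together with the intermediate value theorem: on that branch $s(\vh(\alpha))$ sweeps continuously through an interval straddling $s(\vz)$ and hence attains it exactly.

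I expect the "$=s(\vz)$" clause to be the main obstacle, because the map $\alpha\mapsto s(\vh(\alpha))$ is not globally continuous — as noted in the caption of Figure~\ref{fig:sec5:normalized-smoothness}, $s(\sigma(\vz(\alpha)))$ jumps to $1$ once $\vz(\alpha)$ becomes entirely nonpositive and $\sigma(\vz(\alpha))={\bf 0}$. The care needed is to confine the intermediate-value argument to the continuous monotone branch supplied by Proposition~\ref{prop:smoothness-control-relu}, and to regard the boundary cases (for instance $s(\vz)=0$, where a \emph{strictly} smaller value cannot occur) as degenerate exceptions to the qualitative picture rather than as counterexamples. The genuine content of the corollary is the sharp contrast with the unnormalized case: whereas $\|\vh(\alpha)\|_{\gM^\perp}\leq\|\vz\|_{\gM^\perp}$ holds for all $\alpha$, no inequality of the form $s(\vh(\alpha))\leq s(\vz)$ can hold for all $\alpha$.
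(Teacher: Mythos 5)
Your proposal is correct and takes essentially the same route as the paper's proof: the unnormalized inequality follows from $\vz(\alpha)_{\gM^\perp}=\vz_{\gM^\perp}$ combined with Propositions~\ref{prop:relu-smoothness-geometric} and \ref{prop:leaky-relu-smoothness-geometric}, and the normalized claim from $s(\vz)\in[0,1)$ together with the achievable ranges in Propositions~\ref{prop:smoothness-control-relu} and \ref{prop:smoothness-control-leaky-relu}. If anything, you are more careful than the paper's terse proof, which leaves the intermediate-value step on the continuous monotone branch and the degenerate boundary cases (e.g.\ $s(\vz)=0$, or a ReLU input with $s(\vz)$ below the minimum $\sqrt{\sum_{x_i=\max\vx}d_i/\sum_j d_j}$, where a strictly smaller value is unattainable) entirely implicit.
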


Propositions \ref{prop:smoothness-control-relu} and \ref{prop:smoothness-control-leaky-relu}, as well as Corollary \ref{cor:smoothness effective control}, provide a theoretical basis for the empirical results presented in Fig.~\ref{fig:sec5:normalized-smoothness}. Moreover, the above theoretical results indicate that for any given vector $\vz$, altering its projection $\vz_\gM$ can effectively change both the unnormalized and the normalized smoothness of the output vector $\vh=\sigma(\vz)$ or $\sigma_a(\vz)$. In particular, the normalized smoothness of the output vector 
$\vh=\sigma(\vz)$ or $\sigma_a(\vz)$ can be adjusted to any value in the range shown in Propositions~\ref{prop:smoothness-control-relu} and \ref{prop:smoothness-control-leaky-relu}. This provides us with insights to design algorithms to control the smoothness of feature vectors to improve the performance of GCN 
control the normalized smoothness of each dimension of the feature vectors, and we will discuss this in the next section.

\section{Controlling Smoothness of Node Features}
\label{sec:SCT}
We do not know how smooth features are ideal for a given node classification task. Nevertheless, our theory indicates that both normalized and unnormalized smoothness of the output of each GCL can be adjusted by altering the input's projection onto $\gM$. 
As such, we propose the following learnable smoothness control term to modulate the smoothness of each dimension of the learned node features: 
\begin{equation} 
\label{eq:smoothness-control-term}
\begin{aligned}
\mB^{l}_{\bm\alpha} = \sum_{i=1}^m {\bm\alpha}_i^l \ve_i^\top,
\end{aligned}
\end{equation}
where $l$ is the layer index, $\{\ve_i\}_{i=1}^m$ is the orthonormal basis of the eigenspace $\gM$ -- provided in Proposition~\ref{prop:eigenvector}, 
and $\bm\alpha^l:=\{{\bm\alpha}_i^l\}_{i=1}^m$ is a collection of learnable vectors with ${\bm\alpha}_i^l\in\sR^d$ being approximated by a multi-layer perceptron (MLP).
The detailed configuration of ${\bm\alpha}_i^l$ will be specified in each experiment later. One can see that $\mB^{l}_{\bm\alpha}$ always lies in $\sR^{d}\otimes\gM$.
We integrate SCT into GCL, resulting in the following update equation:
\begin{equation}
\label{eq:bias-design}
    \begin{aligned}
        \mH^{l} &= \sigma(\mW^{l}\mH^{l-1}\mG+\mB^{l}_{\bm\alpha}).
    \end{aligned}
\end{equation}
We call the corresponding model GCN-SCT. 
Again, the idea here is that {\bf\emph{we alter the component in eigenspace to control the smoothness of node features}}. In particular, each dimension of the output $\mH^l$ of \eqref{eq:bias-design} can be smoother, less smooth, or the same as that of $\mH^{l-1}$ in terms of normalized smoothness, though $\mH^l$ 
gets closer to the eigenspace $\gM$ than $\mH^{l-1}$.

Next, we elaborate on the proposed SCT. To design SCT, we introduce a learnable matrix $\mA^l\in\sR^{d\times m}$ for layer $l$, whose columns are ${\bm\alpha}^l_i$, where $m$ is the dimension of the eigenspace $\gM$ and $d$ is the dimension of the features. We observe in our experiments that the SCT performs best when informed by degree pooling over the subcomponents of the graph. The matrix of the orthogonal basis vectors, denoted by $\mQ\coloneqq[\ve_1,\ldots,\ve_m]\in\sR^{n\times m}$, is used to perform pooling $\mH^l\mQ$ for input $\mH^l$. 
In particular, for the first architecture, we let ${\mA^l}=\mW\odot(\mH^l\mQ)$, where  $\mW\in\sR^{d\times m}$ is learnable and performs pooling over $\mH^l$ using the eigenvectors $\mQ$. 
The second architecture uses a residual connection with hyperparameter $\beta_l=\log({\theta}/{l}+1)$ and 
learnable matrices $\mW_0,\mW_1\in\sR^{d\times d}$ and the softmax function $\phi$. Resulting in 
$
\begin{aligned}
{\mA^l}=\phi(\mH^l\mQ)\odot(\beta_l\mW_0\mH^0\mQ + (1-\beta_l)\mW_1\mH^l\mQ)
\end{aligned}
$.
In Section~\ref{sec:experiments}, we use the first architecture for GCN-SCT as GCN uses only $\mH^l$ information at each layer. We use the second architecture for GCNII-SCT and EGNN-SCT which use both $\mH^0$ and $\mH^l$ information at each layer. There are two particular advantages of the above design of SCT: (1) it can effectively change the normalized smoothness of the learned node features, and (2) it is computationally efficient since we only use the eigenvectors corresponding to the eigenvalue 1 of the message-passing matrix $\mG$, which is determined based on the connectivity of the graph.



\begin{remark}
It is worth noting that computing the basis of eigenspace $\gM$ -- corresponding to the eigenvalue 1 of the message-passing matrix $\mG$ -- does not introduce substantial computational overhead. In particular, the basis of the space $\mathcal{M}$ is given by the indicator functions of each connected component of the graph. Therefore, the problem reduces to finding connected components of the graph, and we can identify connected components for undirected graphs using disjoint set union (DSU) \cite{galil1991data}. Initially, declare all the nodes as individual subsets and then visit them. When a new unvisited node is encountered, unite it with the under. In this manner, a single component will be visited in each traversal. The time complexity is linear with respect to the number of nodes.
\end{remark}

\subsection{Integrating SCT into Other GCN-style Models}
In this subsection, we present other usages of the proposed SCT. 
We carefully select two other most representative usages of the proposed SCT. The first example is GCNII \cite{chen2020simple}, GCNII extends GCN to express an arbitrary polynomial filter rather than the Laplacian polynomial filter and has been shown to achieve state-of-the-art (SOTA) performance among GCN-style models on various benchmark tasks \cite{chen2020simple,luan2022revisiting}, and we aim to show that the proposed SCT -- that enables GCNII to learn node features with a better smoothness -- can even benefit node classification for improving the accuracy of the GCN-style model that achieves SOTA performance on many node classification tasks. 
The second example is energetic GNN (EGNN) \cite{zhou2021dirichlet}, which controls the smoothness of node features by constraining the lower and upper bounds of the Dirichlet energy of node features and assuming the activation function is linear. However, in practice, the activation function is ReLU or possibly other nonlinear functions. In this case, we aim to show that our established new theoretical understanding of the role of activation functions and the proposed SCT can boost the performance of EGNN with consideration of nonlinear activation functions.

\medskip
\noindent{\bf GCNII.}
Each GCNII layer uses a skip connection to the initial layer $\mH^0$ and 
given as follows:
\begin{equation*}
\begin{aligned}
\mH^{l} = \sigma\big(((1-\alpha_l)\mH^{l-1}\mG + \alpha_l\mH^{0}) ((1-\beta_l)\mI + \beta_l\mW^{l})\big),
\end{aligned}
\end{equation*}
where $\alpha_l,\beta_l\in(0,1)$ are learnable scalars.
%
We integrate SCT $\mB^{l}_{\bm\alpha}$ into GCNII, resulting in the following GCNII-SCT layers:
\begin{equation*}
\begin{aligned}
\mH^{l} = \sigma\big(( (1-\alpha_l)\mH^{l-1}\mG + \alpha_l\mH^{0} )
            ((1-\beta_l)\mI + \beta_l\mW^{l}) +\mB^{l}_{\bm\alpha}\big),
\end{aligned}
\end{equation*}
where the residual connection and identity mapping are consistent with GCNII. We call the resulting model GCNII with a smoothness control term (GCNII-SCT).


\medskip
\noindent{\bf EGNN.}
EGNN \cite{zhou2021dirichlet} controls the smoothness of node features by constraining the lower and upper bounds of the Dirichlet energy of node features without considering the nonlinear activation function.
Each EGNN layer can be written as follows:
\begin{equation}
\begin{aligned}
        \mH^{l} = \sigma\big(\mW^{l}(c_1\mH^{0} + c_2\mH^{l-1} + (1-c_{\min})\mH^{l-1}\mG)\big),
\end{aligned}
\end{equation}
where $c_1,c_2$ are learnable weights that satisfy $c_1+c_2 = c_{\min}$ with $ c_{\min}$ being a hyperparameter.
%
To constrain Dirichlet energy, EGNN initializes trainable weights $\mW^{l}$ as a diagonal matrix with explicit singular values and regularizes them to keep the orthogonality during the model training. Ignoring the activation function $\sigma$, $\mH^l$ -- node features at layer $l$ of EGNN satisfies:
$$
\begin{aligned}
c_{\min}\|\mH^0\|_E \leq \|\mH^l\|_E  \leq c_{\max}\|\mH^0\|_E,
\end{aligned}
$$
where $c_{\max}$ is {the square of the maximal singular value of the initialization of $\mW^1$}.
%
Similarly, we modify EGNN to result in the following EGNN-SCT layer:
\begin{equation*}
\begin{aligned}
\mH^{l} = \sigma\big(\mW^{l}((1-c_{\min})\mH^{l-1}\mG+ c_1\mH^{0} + c_2\mH^{l-1} )+\mB^{l}_{\bm\alpha}\big),
\end{aligned}
\end{equation*}
where everything remains the same as the EGNN layer except that we include our proposed SCT $\mB^{l}_{\bm\alpha}$. 


\section{Experiments}\label{sec:experiments}
In this section, we comprehensively demonstrate the effects of SCT -- in the three most representative GCN-style models discussed in Section~\ref{sec:SCT} -- using various node classification benchmarks. The purpose of all experiments in this section is to verify the efficacy of the proposed SCT 
-- motivated by our theoretical results -- 
for GCN-style models. 
Exploring the effects of SCT on non-GCN-style models and pushing for SOTA accuracy is an interesting future direction. 
We consider the citation datasets (Cora, Citeseer, PubMed, Coauthor-Physics, Ogbn-arxiv), web knowledge-base datasets (Cornell, Texas, Wisconsin), and Wikipedia network datasets (Chameleon, Squirrel). We provide additional dataset details in 
\ref{appendix:datasets}. We implement baseline GCN \cite{kipf2017semisupervised} 
and GCNII \cite{chen2020simple} 
(without weight sharing) using PyG (Pytorch Geometric) 
\cite{Fey-Lenssen-2019}. Baseline EGNN \cite{zhou2021dirichlet} 
is implemented using the public 
code\footnote{https://github.com/Kaixiong-Zhou/EGNN}. 

\subsection{
Node Feature Trajectory}\label{sec:experiments:smoothness}
\begin{figure}[!ht]
\begin{center}
\begin{tabular}{ccc}
\includegraphics[width=0.31\linewidth]{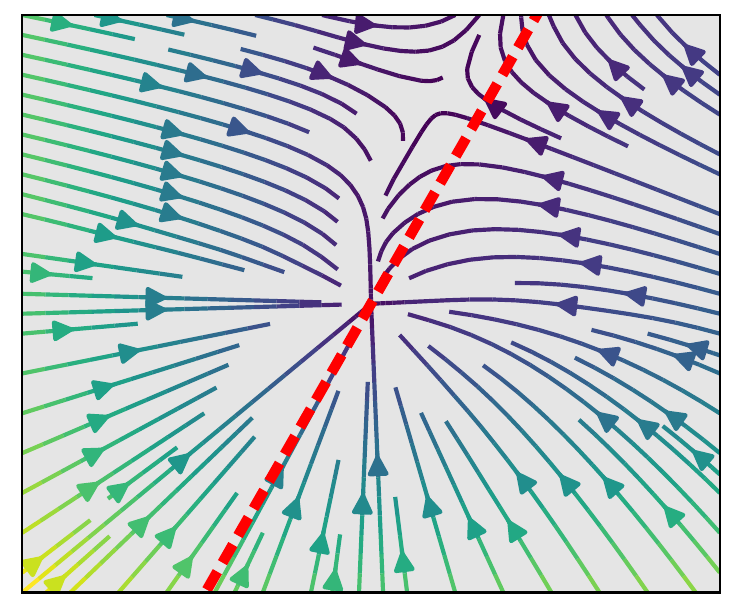}&
\includegraphics[width=0.31\linewidth]{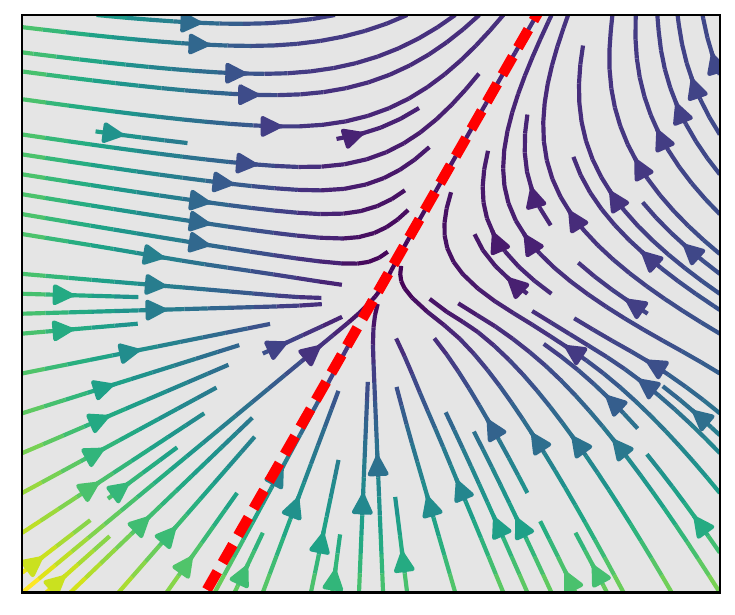}&
\includegraphics[width=0.31\linewidth]{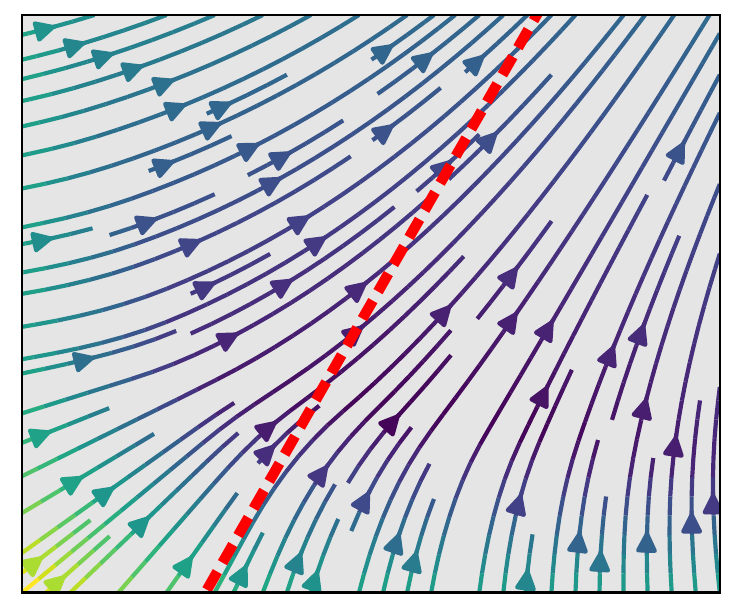}\\ 
{\small  a) $\alpha=-0.25$} & {\small  b) $\alpha=0.0$} & {\small  c) $\alpha=1.0$}\\
  \end{tabular}
  \end{center}
  \caption{ 
Node feature trajectories, with colorized magnitude, for varying smoothness control parameter $\alpha$. For classical GCN b), the node features converge to the eigenspace $\gM$ (red dashed line).
  }\label{fig:sec5:trajectory}
\end{figure}
We visualize the trajectory of the node features, following \cite{oono2019graph}, for a graph with two nodes connected by an edge and 1D node feature. In this case, \eqref{eq:bias-design} becomes $\vh^1=\sigma(w\vh^0\mG+\vb_\alpha)$, where $w=1.2$ in our experiment, 
$\vh^0,\vh^1,\vb_{\alpha}\in\sR^2$, and $\mG\in\sR^{2\times 2}$. We use a positive definite matrix $\mG=[0.592, 0.194; 0.194, 0.908]$ whose largest eigenvalue is 1. 
Twenty initial node feature vectors $\vh^0$ are sampled evenly in the domain $[-1,1]\times[-1,1]$. Figure \ref{fig:sec5:trajectory} shows the trajectories in relation to the eigenspace $\gM$ (red dashed line). In Fig~\ref{fig:sec5:trajectory}a), one can see that some trajectories do not directly converge to $\gM$.
In Fig.~\ref{fig:sec5:trajectory}b) when $\alpha=0.0$, 
GCL is recovered and all trajectories converge to $\gM$. In Fig.~\ref{fig:sec5:trajectory}c), large positive values of $\alpha$ (e.g. $1.0$) enable the node features to significantly deviate from $\gM$ initially. We observe that the parameter $\alpha$ can effectively change the trajectory of features.

\begin{table}[!ht]
\fontsize{8.5}{8.5}\selectfont
\centering
\begin{tabular}{c|ccccc}
\specialrule{1.2pt}{1pt}{1pt}
     Layers & 
     2
     &
     4
     &
     16
     &
     32
     \\
 \specialrule{1.2pt}{1pt}{1pt}
     \multicolumn{5}{c}{\textbf{Cora}}\\
     \hline
         GCN
         /GCN-SCT & $81.1$/
         {$\mathbf{82.9}$} & $80.4$/
         {$\mathbf{82.8}$} 
         & $64.9$/
         {$\mathbf{71.4}$}  & $60.3$/
         {$\mathbf{67.2}$}  
         \\
         GCNII
         /GCNII-SCT & $82.2$/
         {$\mathbf{83.8}$} & $82.6$/
         {$\mathbf{84.3}$} 
         & 
         {{$84.6$}/{$\mathbf{84.8}$}} & {${85.4}$}/
         {$\mathbf{85.5}$} 
         \\
         EGNN
         /EGNN-SCT & $83.2$/$\mathbf{84.1}$ & {$84.2$/$\mathbf{84.5}$}
         & 
         {{$\mathbf{85.4}$}/$83.3$} & 
         {$\mathbf{85.3}$/$82.0$} 
         \\
\specialrule{1.2pt}{1pt}{1pt}
     \multicolumn{5}{c}{\textbf{Citeseer}}\\
 \hline
         GCN/GCN-SCT & 
          {{$\mathbf{70.3}$}/$69.9$}
         & $67.6$/
         {$\mathbf{67.7}$} 
         & $18.3$/
         {$\mathbf{55.4}$} & $25.0$/
         {$\mathbf{51.0}$} 
         \\
         GCNII/GCNII-SCT & $68.2$/
         {$\mathbf{72.8}$} & $68.9$/
         {$\mathbf{72.8}$} 
         & {$72.9$}/
         {$\mathbf{73.8}$} & 
         {{$\mathbf{73.4}$}/{{$\mathbf{73.4}$}}} 
         \\
         EGNN/EGNN-SCT & $72.0$/$\mathbf{73.1}$ & $71.9$/$\mathbf{72.0}$ 
         & $72.4$/$\mathbf{72.6}$  & $72.3$/$\mathbf{72.9}$\\ 
\specialrule{1.2pt}{1pt}{1pt}
     \multicolumn{5}{c}{\textbf{PubMed}}\\
 \hline
         GCN/GCN-SCT &  $79.0$/
         {$\mathbf{79.8}$} & $76.5$/
         {$\mathbf{78.4}$} 
         & $40.9$/
         {$\mathbf{76.1}$} & $22.4$/
         {$\mathbf{77.0}$}  
         \\
         GCNII/GCNII-SCT & $78.2$/
         {$\mathbf{79.7}$} & $78.8$/
         {$\mathbf{80.1}$} 
         & $80.2$/
         {$\mathbf{80.7}$} & $79.8$/
         {$\mathbf{80.7}$} 
         \\
         EGNN/EGNN-SCT & $79.2$/$\mathbf{79.8}$ & $79.5$/$\mathbf{80.4}$ 
         & $80.1$/$\mathbf{80.3}$ & $80.0$/$\mathbf{80.4}$ 
         \\ 
    \specialrule{1.2pt}{1pt}{1pt}
     \multicolumn{5}{c}{\textbf{Coauthor-Physics}}\\
 \hline
         GCN/GCN-SCT & $92.4$/
         {$\mathbf{92.6\pm1.6}$}  & $92.1$/
         {$\mathbf{92.5\pm5.9}$}  
         &  $13.5$/
         {$\mathbf{50.9\pm15.0}$} &   $13.1$/
         {$\mathbf{43.6\pm16.0}$} 
         \\
         GCNII/GCNII-SCT & $92.5$/
         {$\mathbf{94.4\pm0.4}$} & $92.9$/
         {$\mathbf{94.2\pm0.3}$} 
         & $92.9$/
         {$\mathbf{93.7\pm0.7}$} &  $92.9$/
         {$\mathbf{94.1\pm0.3}$} 
         \\
         EGNN/EGNN-SCT & $92.6$/$\mathbf{93.9\pm0.7}$ & $92.9$/$\mathbf{94.1\pm0.4}$ 
         & $93.1$/$\mathbf{94.0\pm0.7}$  &  $93.3$/$\mathbf{93.8\pm1.3}$  
         \\
    \specialrule{1.2pt}{1pt}{1pt}
     \multicolumn{5}{c}{\textbf{Ogbn-arxiv}}\\
 \hline
         GCN/GCN-SCT & $70.4$/
         {$\mathbf{72.1\pm0.3}$}  &  $71.7$/
         {$\mathbf{72.7\pm0.3}$} 
         & $70.6$/
         {$\mathbf{72.3\pm0.2}$} & $68.5$/
         {$\mathbf{72.3\pm0.3}$}  
         \\
         GCNII/GCNII-SCT & $70.1$/
         {$\mathbf{72.0\pm0.3}$}  & $71.4$/
         {$\mathbf{72.2\pm0.2}$} 
         & $71.5$/
         {$\mathbf{72.4\pm0.3}$} &  $70.5$/
         {$\mathbf{72.1\pm0.3}$} 
         \\
         EGNN/EGNN-SCT &  $68.4$/$\mathbf{68.5\pm0.6}$ & $71.1$/{$\mathbf{71.3\pm0.5}$} 
         & $72.7$/$\mathbf{72.8\pm0.5}$ &  $\mathbf{72.7}$/$72.3\pm0.5$ 
         \\
    \specialrule{1.2pt}{1pt}{1pt}
    \end{tabular}
\caption{
Accuracy for models of varying depth on citation datasets. We note vanishing gradients occur but not {over-smoothing} for the cases of accuracy drop using GCN-SCT with 16 or 32 layers. For Cora, Citeseer, and PubMed, we use a fixed split with a single forward pass 
following \cite{chen2020simple}; only test accuracy is available in these experiments. For Coauthor-Physics and Ogbn-arxiv, we use the splits from \cite{zhou2021dirichlet}; both test accuracy and standard deviation are reported. The baseline results are copied from \cite{chen2020simple,zhou2021dirichlet} where the standard deviation was not reported.
(Unit:\%)}
    \label{table:node:variable-layers}
\end{table}

\subsection{Baseline Comparisons for Node Classification}\label{subsec-node-classification}


\subsubsection{Citation Networks}
We compare the three representative GCN-style models discussed in Section~\ref{sec:SCT}, of different depths, with and without SCT in Table~\ref{table:node:variable-layers}. This task uses the citation datasets with fixed splits from \cite{yang2016revisiting} for Cora, Citeseer, and Pubmed and splits from \cite{zhou2021dirichlet} for Coauthor-Physics and Ogbn-arxiv; a detailed description of these datasets and splits are provided in 
\ref{appendix-exp-details}. Following \cite{chen2020simple}, we use a single training pass to minimize the negative log-likelihood loss using the Adam optimizer \cite{kingma2014adam}, with $1500$ maximum epochs, and $100$ epochs of patience. A grid search for possible hyperparameters is listed in Table~\ref{table:hyperparameter:gridsearch} in 
\ref{appendix-exp-details}. We accelerate the hyperparameter search by applying a Bayesian meta-learning algorithm~\cite{wandb} which minimizes the validation loss, and we run the search for $200$ iterations per model. In particular, Table~\ref{table:node:variable-layers} presents the best test accuracy between ReLU and leaky ReLU activation functions for GCN, GCNII, and all three models with SCT\footnote{A comparison of the results using ReLU and leaky ReLU activation functions is presented in \ref{appendix-exp-details}.}. 
For the baseline EGNN, we follow \cite{zhou2021dirichlet} using the SReLU activation function, a particular activation used for EGNN in \cite{zhou2021dirichlet}. These results show that SCT can boost the classification accuracy of baseline models; in particular, the improvement can be remarkable for GCN and GCNII. However, EGNN-SCT (using ReLU or leaky ReLU activation function) performs occasionally worse than EGNN (using SReLU), and this is because of the choice of activation functions. 
In 
\ref{appendix:semi-sup}, we report the results of EGNN-SCT using SReLU, showing that EGNN-SCT outperforms EGNN in all tasks.
In fact, the SReLU activation function is a shifted version of ReLU, and our theory for ReLU applies to SReLU as well. In \ref{appendix-addition-results}, we perform hypothesis tests to show the statistical significance of the accuracy improvement, especially for the Cora, Citeseer, and PubMed datasets (since the standard deviation of the baseline results is not available). The accuracy improvement for the other two datasets usually exceeds the standard deviation by a significant margin. The model size and computational time are reported in Table~\ref{table:model_sizes} in the appendix. 
 


Table~\ref{table:node:variable-layers} also shows that even with SCT, the 
accuracy of GCN drops when the depth is 16 or 32. This motivates us to investigate the smoothness of the node features learned by GCN and GCN-SCT. 
Fig.~\ref{fig:smoothness-GCN-GCNSCT} plots the heatmap of the normalized smoothness of each dimension of the learned node features learned by GCN and GCN-SCT with 32 layers for Citeseer node classification. In these plots, the horizontal and vertical dimensions denote the feature dimension and the layer of the model, respectively. 
We notice that the normalized smoothness of each dimension of the features -- from {layers 14 to 32} learned by GCN -- closes to 1, confirming that deep GCN learns homogeneous features. In contrast, the features learned by GCN-SCT are inhomogeneous, as shown in Fig.~\ref{fig:smoothness-GCN-GCNSCT}b). Therefore, we believe the performance degradation of deep GCN-SCT is due to other factors. 
Compared to GCNII/GCNII-SCT and EGNN/EGNN-SCT, GCN-SCT does not use skip connections, which is known to help avoid vanishing gradients in training deep neural networks \cite{He_2016_CVPR,he2016identity}. In 
\ref{appendix:semi-sup}, we show that training GCN and GCN-SCT do suffer from the vanishing gradient issue; however, the other models do not. Besides Citeseer, we notice similar behavior occurs for training GCN and GCN-SCT for Cora and Coauthor-Physics node classification tasks. 

\begin{figure}[!ht]
\begin{center}
\begin{tabular}{cc}
\includegraphics[width=0.45\linewidth]{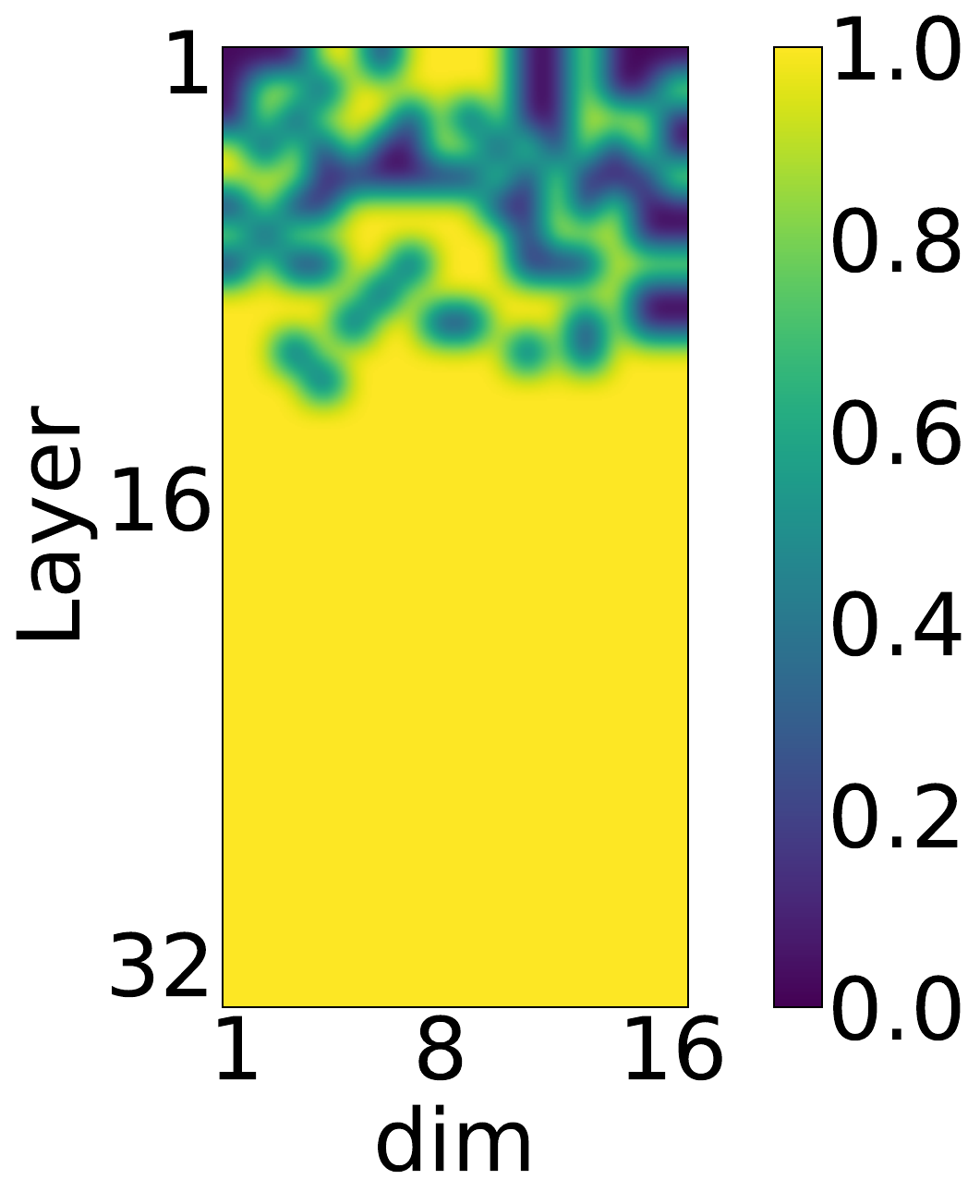}&
\includegraphics[width=0.45\linewidth]{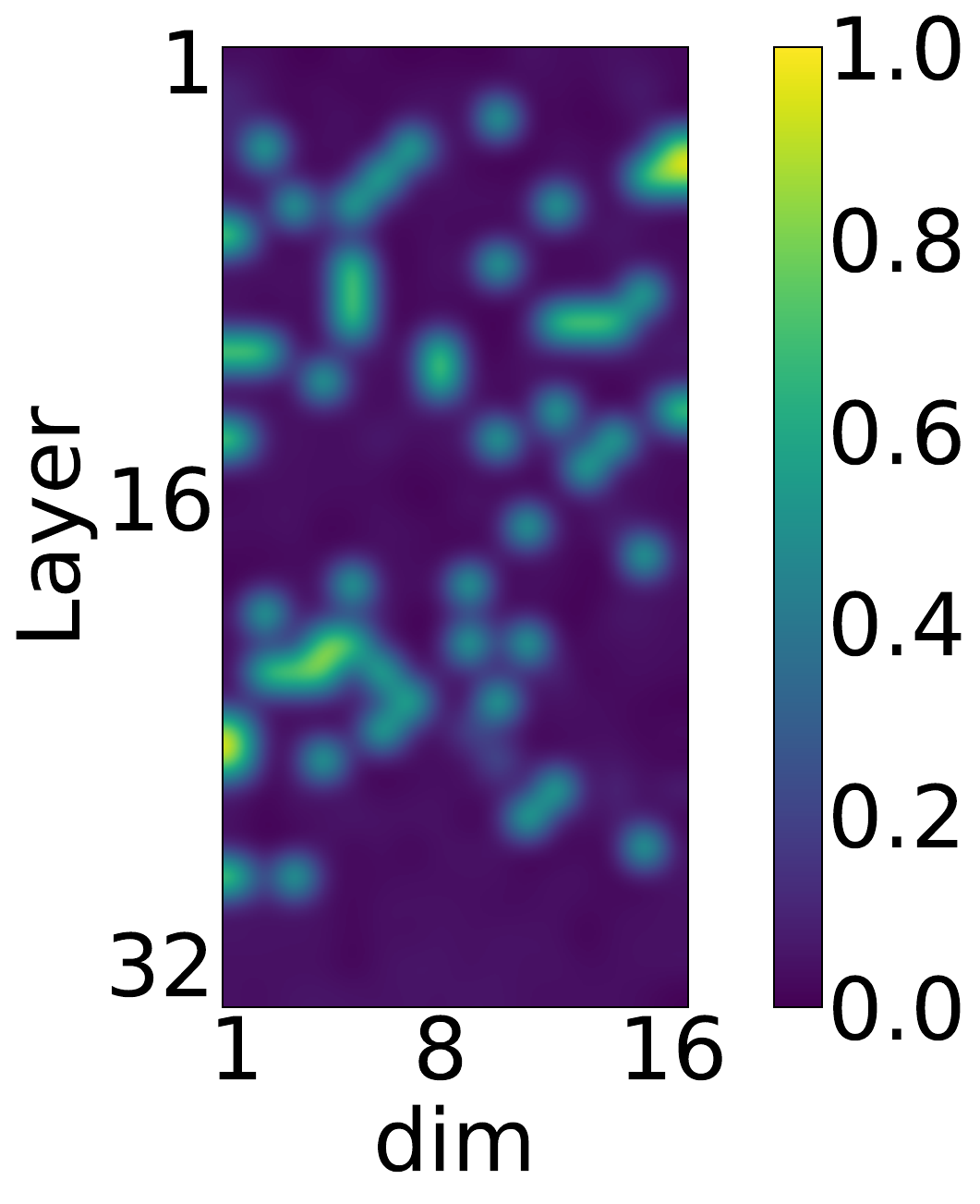}\\ 
{\small a) GCN } & {\small b) GCN-SCT}\\
  \end{tabular}
  \end{center}
  \caption{
  The normalized smoothness -- of each dimension of the feature vectors at a given layer -- for a) GCN and b) GCN-SCT on the Citeseer dataset with 32 layers and 16 hidden dimensions. GCN features become entirely smooth since layer 14, while GCN-SCT controls the smoothness for each feature at any depth. Horizontal and vertical axes represent the index of the feature dimension and the intermediate layer, respectively. 
  }\label{fig:smoothness-GCN-GCNSCT}
\end{figure}

\subsubsection{Other Datasets}
We further compare the performance of different models trained on different datasets using 10-fold cross-validation and fixed $48/32/20\%$ splits following \cite{pei2020geomGCN}. Tables~\ref{table:node:web-wiki} and \ref{table:node:web-wiki-time} compare the accuracy and computational time of GCN and GCNII with and without SCT, using the leaky ReLU activation function, for classifying five heterophilic node classification datasets: Cornell, Texas, Wisconsin, Chameleon, and Squirrel. We exclude EGNN as these heterophilic datasets are not considered in \cite{zhou2021dirichlet}. We report the average test 
accuracy of GCN and GCNII from \cite{chen2020simple}. We tune all other models using a Bayesian meta-learning algorithm to maximize the mean validation accuracy. We report the best test accuracy for each model of depth searched over the set $\{2,4,8,16,32\}$.
SCT can significantly improve the classification accuracy of the baseline models. Table~\ref{table:node:web-wiki} also contrasts the computational time (on Tesla T4 GPUs from Google Colab) per epoch of models that achieve the best test accuracy; the models using SCT can even save computational time to achieve the best accuracy which is because the best accuracy is achieved at a moderate depth (
{Table~\ref{table:node:web-wiki-mean-std-fixed-layers} in 
\ref{appendix:full-sup} lists the mean and standard deviation for the test accuracies on all five datasets.} Table~\ref{table:node:web-wiki-time-fixed-layers} in 
\ref{appendix:full-sup} lists the computational time per epoch for each model of depth 8, showing that using SCT only takes a small amount of computational overhead. 
\begin{table*}[!ht]
\fontsize{9.5}{9.5}\selectfont
\centering
\begin{tabular}{
ccccc}
\specialrule{1.2pt}{1pt}{1pt}
    \textbf{Cornell} & \textbf{Texas} & \textbf{Wisconsin} & \textbf{Chameleon} & \textbf{Squirrel} \\
\hline
$52.70$/
{$\mathbf{55.95}$} 
& $52.16$/
{$\mathbf{62.16}$} 
& $45.88$/
{$\mathbf{54.71}$} 
& $28.18$/
{$\mathbf{38.44}$} 
& $23.96$/
{$\mathbf{35.31}$} 
\\
$74.86$/
{$\mathbf{75.41}$} 
& $69.46$/
{$\textbf{83.34}$} 
& $74.12$/
{$\mathbf{86.08}$} 
& $60.61$/
{$\mathbf{64.52}$} 
& $38.47$/
{$\mathbf{47.51}$} 
\\
    \specialrule{1.2pt}{1pt}{1pt}
    \end{tabular}
    \caption{
    Mean test accuracy 
    for the WebKB and WikipediaNetwork datasets with fixed $48/32/20\%$ splits. First row: GCN/GCN-SCT. Second row: GCNII/GCNII-SCT. 
    (Unit:\%) 
    }
    \label{table:node:web-wiki}
\end{table*}

\begin{table*}[!ht]
\fontsize{9.5}{9.5}\selectfont
\centering
\begin{tabular}{
ccccc}
\specialrule{1.2pt}{1pt}{1pt}
    \textbf{Cornell} & \textbf{Texas} & \textbf{Wisconsin} & \textbf{Chameleon} & \textbf{Squirrel} \\
\hline
$0.7$/$1.8$ & $0.7$/$0.8$ & $0.7$/$0.8$ & $0.6$/$0.7$ & $1.6$/$4.0$ \\
$2.0$/$2.0$ & $3.1$/$2.0$ & $2.0$/$1.5$ & $1.5$/$1.3$ & $5.5$/$3.7$\\
    \specialrule{1.2pt}{1pt}{1pt}
    \end{tabular}
    \caption{
Average computational time per epoch for the WebKB and WikipediaNetwork datasets with fixed $48/32/20\%$ splits. First row: GCN/GCN-SCT. Second row: GCNII/GCNII-SCT. 
    (Unit: $\times10^{-2}$ second) 
    }
    \label{table:node:web-wiki-time}
\end{table*}

\section{Concluding Remarks}\label{sec:conclusion}
In this paper, we have established a geometric characterization of how the ReLU and leaky ReLU activation functions affect the smoothness of the GCN node features. We have further studied the dimension-wise normalized smoothness of the learned node features, showing that activation functions not only smooth node features but also can reduce or preserve the normalized smoothness of the features. Our theoretical findings inform the design of a simple yet effective SCT for GCN. The proposed SCT can change the smoothness, in terms of both normalized and unnormalized smoothness, of the learned node features by GCN. 
%
Our proposed SCT provides provable guarantees for controlling the smoothness of features learned by GCN and related models. A key aspect to establish our theoretical results is demonstrating that, without SCT, the features of the vanilla model tend to be overly smooth; without this condition, SCT cannot ensure performance guarantees.




\appendix

\clearpage
\section{Details of Notations}\label{Appendix-notations}
For two vectors $\vu=(u_1,u_2,\ldots,u_d)$ and $\vv=(v_1,v_2,\ldots,v_d)$, their inner product is defined as
$$
\langle\vu,\vv \rangle = \sum_{i=1}^du_iv_i,
$$
their Hadamard product is defined as
$$
\vu\odot \vv = (u_1v_1,u_2v_2,\ldots,u_dv_d),
$$
and their Kronecker product is defined as
$$
\vu\otimes \vv=  \vu\vv^\top = \begin{pmatrix}
u_1v_1 & u_1v_2 & \ldots &u_1v_d\\
u_2v_1 & u_2v_2 & \ldots &u_2v_d\\
\vdots & \vdots & \ddots &\vdots\\
u_dv_1 & u_dv_2 & \ldots &u_dv_d\\
\end{pmatrix}.
$$
The Kronecker product can be defined for two vectors of different lengths in a similar manner as above.

\section{Proofs in Section~\ref{sec:geometry}}\label{Appendix-proof-geometry}
First, we prove that the two smoothness notions used in \cite{oono2019graph,cai2020note} are two equivalent seminorms, i.e., we prove the following Proposition~\ref{prop:equivalent-norms}:
\begin{proof}[Proof of Proposition~\ref{prop:equivalent-norms}]
The matrix $\mH$ can be decomposed as follows:
$$
\mH = \sum_{i=1}^{n}\mH\ve_i\ve_i^\top,
$$ 
where each $\ve_i$ is the eigenvector of $\mG$ associated with eigenvalue $\lambda_i$. This indicates that
$$
\begin{aligned}
    \mH\Tilde{\Delta} &=\mH(\mI-\mG)\\ &= \sum_{i=1}^{n}\mH\ve_i\ve_i^\top(\mI-\mG) \\
    &= \sum_{i=1}^{n}(\mH\ve_i\ve_i^\top - \mH\ve_i\ve_i^\top\mG) \\
    &=  \sum_{i=1}^{n}(\mH\ve_i\ve_i^\top - \mH\ve_i(\lambda_i\ve_i)^\top) \\
    &=  \sum_{i=1}^{n}(1-\lambda_i)\mH\ve_i\ve_i^\top\\
    &=  \sum_{i=m+1}^{n}(1-\lambda_i)\mH\ve_i\ve_i^\top.
\end{aligned}
$$
Then using the fact that $1-\lambda_i\geq 0$ for each $i$, we obtain 
$$
 \begin{aligned}
    \|\mH\|^2_E &= {\rm Trace}(\mH\Tilde{\Delta}\mH^\top )\\
    &=  {\rm Trace}\Big(\sum_{i=m+1}^{n}(1-\lambda_i)\mH\ve_i\ve_i^\top ( \sum_{j=1}^{n}\mH\ve_j\ve_j^\top)^\top\Big) \\
    &=  {\rm Trace}\Big(\sum_{i=m+1}^{n}\sum_{j=1}^{n}(1-\lambda_i)\mH\ve_i\ve_i^\top \ve_j\ve_j^\top\mH^\top\Big) \\
    &=  {\rm Trace}\Big(\sum_{i=m+1}^{n}(1-\lambda_i)\mH\ve_i\ve_i^\top \ve_i\ve_i^\top\mH^\top\Big) \\
    &=  {\rm Trace}\Big(\sum_{i=m+1}^{n}\sqrt{1-\lambda_i}\mH\ve_i\ve_i^\top \ve_i\ve_i^\top\mH^\top\sqrt{1-\lambda_i}\Big) \\
    &=  {\rm Trace}\Big(\sum_{i=m+1}^{n}\sqrt{1-\lambda_i}\mH\ve_i\ve_i^\top ( \sum_{j=m+1}^{n}\sqrt{1-\lambda_j}\mH\ve_j\ve_j^\top)^\top\Big) \\
    &= \Big\|\sum_{i=m+1}^{n}\sqrt{1-\lambda_i}\mH\ve_i\ve_i^\top\Big\|^2_F.
\end{aligned}
$$
That is, 
$$ 
\begin{aligned}
\|\mH\|_E = \Big\|\sum_{i=m+1}^{n}\sqrt{1-\lambda_i}\mH\ve_i\ve_i^\top\Big\|_F.
\end{aligned}
$$
On the other hand, \eqref{eq:distance-in-F-norm} implies
$$
\begin{aligned}
 \|\mH\|_{\gM^\perp}  = \|\mH_{\gM^\perp}\|_{F} = \Big\|\sum_{i=m+1}^{n}\mH\ve_i\ve_i^\top\Big\|_F.
 \end{aligned}
$$
 
We first show that both  $\|\mH\|_{\gM^\perp}$ and  $\|\mH\|_E $ are seminorms.
Since $\|c\mH\|_F = |c|\cdot\|\mH\|_F$ for any $c\in\sR$, we have $\|c\mH\|_{\gM^\perp} = |c|\cdot\|\mH\|_{\gM^\perp}$ and  $\|c\mH\|_E = |c|\cdot\|\mH\|_E$. Moreover, for any two matrices $\mH^1$ and $\mH^2$ s.t. $\mH= \mH^1 + \mH^2$, we have
$$
\begin{aligned}
    \sum_{i=m+1}^{n}\mH^1\ve_i\ve_i^\top + \sum_{i=m+1}^{n}\mH^2\ve_i\ve_i^\top &= \sum_{i=m+1}^{n}
    {\mH}\ve_i\ve_i^\top, \\
    \sum_{i=m+1}^{n}\sqrt{1-\lambda_i}\mH^1\ve_i\ve_i^\top + \sum_{i=m+1}^{n}\sqrt{1-\lambda_i}\mH^2\ve_i\ve_i^\top &= \sum_{i=m+1}^{n}\sqrt{1-\lambda_i}
    {\mH}\ve_i\ve_i^\top.
\end{aligned}
$$
Then the triangle inequality of $\|\cdot\|_F$ implies that of $\|\mH\|_{\gM^\perp}$ and  $\|\mH\|_E$, respectively.
 
Now since $0<1-\lambda_{m+1} \leq 1-\lambda_i \leq 2$ for any $i=m+1,\ldots, n$, we may take $\alpha = \sqrt{1-\lambda_{m+1}}$ and $\beta = \sqrt{2}$. Then
$$
\begin{aligned}
    \alpha  \|\mH\|_{\gM^\perp}  &= \Big\|\alpha\sum_{i=m+1}^{n}\mH\ve_i\ve_i^\top\Big\|_F\\
    &\leq \Big\|\sum_{i=m+1}^{n}\sqrt{1-\lambda_i}\mH\ve_i\ve_i^\top\Big\|_F\\
    &\leq  \Big\|\beta\sum_{i=m+1}^{n}\mH\ve_i\ve_i^\top\Big\|_F\\
    &= \beta \|\mH\|_{\gM^\perp}.
\end{aligned}
$$
The result thus follows from $\|\mH\|_E =\Big\|\sum_{i=m+1}^{n}\sqrt{1-\lambda_i}\mH\ve_i\ve_i^\top\Big\|_F$.
\end{proof}

\subsection{ReLU}
We present a crucial {tool} to characterize how ReLU affects its input.
\begin{lemma}\label{lemma:positive-negative-part}
Let ${ \mZ\in\sR^{d\times n}}$, and let ${ \mZ^+ =\max(\mZ, 0)}$ and ${ \mZ^-=\max(-\mZ, 0)}$ be the positive and negative parts of $\mZ$. Then (1) ${ \mZ^+,\mZ^-}$ are (component-wise) nonnegative and ${ \mZ=\mZ^+-\mZ^-}$ and (2) ${ \langle\mZ^+, \mZ^-\rangle_F=0}$.
\end{lemma}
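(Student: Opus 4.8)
The plan is to reduce the whole statement to an entrywise computation, exploiting that the positive- and negative-part operations act coordinate-wise and that the Frobenius inner product $\langle \mA, \mB \rangle_F = \sum_{i,j} A_{ij} B_{ij}$ splits as a sum over matrix entries. This sidesteps any matrix algebra and makes both claims follow from a single scalar identity together with a support disjointness observation.

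First I would establish part (1). By definition, $(\mZ^+)_{ij} = \max(Z_{ij}, 0)$ and $(\mZ^-)_{ij} = \max(-Z_{ij}, 0)$, both of which are manifestly nonnegative, which gives the component-wise nonnegativity. For the decomposition $\mZ = \mZ^+ - \mZ^-$, I would verify the scalar identity $z = \max(z,0) - \max(-z,0)$ by a two-case analysis on the sign of $z$: when $z \geq 0$ the right-hand side equals $z - 0 = z$, and when $z < 0$ it equals $0 - (-z) = z$. Applying this identity to each entry $Z_{ij}$ yields the matrix identity.

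For part (2), I would expand the Frobenius inner product as the explicit entrywise sum $\langle \mZ^+, \mZ^- \rangle_F = \sum_{i,j} \max(Z_{ij}, 0)\,\max(-Z_{ij}, 0)$, and then observe that every summand vanishes: the two factors have disjoint support, since for any fixed index pair $(i,j)$ at most one of $\max(Z_{ij}, 0)$ and $\max(-Z_{ij}, 0)$ can be nonzero (and both vanish when $Z_{ij} = 0$). Hence each term is zero and the total sum is zero.

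There is no genuine obstacle here, as the lemma is essentially definitional; the only points requiring a little care are writing the Frobenius inner product as an explicit entrywise sum (rather than leaving it in trace form) and correctly handling the boundary case $Z_{ij} = 0$ in both parts, where the two parts coincide at value zero.
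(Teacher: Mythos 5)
Your proof is correct and follows essentially the same route as the paper's: a two-case scalar analysis giving $z = \max(z,0) - \max(-z,0)$ and $\max(z,0)\cdot\max(-z,0)=0$, applied entrywise, with the Frobenius inner product expanded as $\sum_{i,j}\max(Z_{ij},0)\max(-Z_{ij},0)=0$. Nothing to add.
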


\begin{proof}[Proof of Lemma~\ref{lemma:positive-negative-part}] 
Notice that for any $a\in\sR$, we have
$$
    \begin{aligned}
        \max(a, 0) = 
        \begin{cases}
              a & \text{ if } a\geq0 \\
            0 &\text{otherwise}
        \end{cases}
    \end{aligned}. 
    $$
and 
$$
    \begin{aligned}
                \max(-a, 0) = 
                \begin{cases}
              0 & \text{ if } a\geq0 \\
            -a &\text{otherwise}
            \end{cases}
    \end{aligned}. 
    $$
    This implies that $a = \max(a, 0)-\max(-a, 0)$ and $\max(a, 0)\cdot \max(-a, 0) = 0$.
    
    Let $Z_{ij}$ be the $(i,j)^{th}$ entry of $\mZ$. Then $\mZ = \mZ^+ - \mZ^-$ follows from $Z_{ij} = \max(Z_{ij}, 0)-\max(-Z_{ij}, 0)$.  Also, one can deduce that
    $$
    \begin{aligned}
    \langle \mZ^+, \mZ^- \rangle_F &= {\rm Trace}((\mZ^+)^\top \mZ^-)\\
    &= \sum_{i=1}^d \sum_{j=1}^j \max(Z_{ij},0)\max(-Z_{ij},0)
    = 0.
    \end{aligned}
    $$
\end{proof}

Before proving Proposition~\ref{prop:relu-smoothness-geometric}, we notice the following relation between $\mZ$ and $\mH$. 
\begin{lemma}\label{prop:relu-circle-condition}
Given $\mZ\in\sR^{d\times n}$, let $\mH = \sigma(\mZ)$ with $\sigma$ being the ReLU activation function, then $\mH$ lies on the high-dimensional sphere, in $\|\cdot\|_F$ norm, that is centered at ${\mZ}/{2}$ and with radius $\|{\mZ}/{2}\|_F$. That is, $\mH$ and $\mZ$ satisfy the following equation:
\begin{equation}\label{eq:sphere}
\begin{aligned}
\Big\| \mH - \frac{\mZ}{2} \Big\|^2_F  = \Big\|\frac{\mZ}{2}\Big\|^2_F.
\end{aligned}
\end{equation}
\end{lemma}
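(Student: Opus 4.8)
The plan is to reduce the claimed spherical identity \eqref{eq:sphere} to the orthogonality relation already established in Lemma~\ref{lemma:positive-negative-part}. First I would observe that, since $\sigma$ is ReLU, the output is exactly the positive part of the input: $\mH = \sigma(\mZ) = \mZ^+$. Using the decomposition $\mZ = \mZ^+ - \mZ^-$ from Lemma~\ref{lemma:positive-negative-part}, I would then rewrite the two quantities appearing in \eqref{eq:sphere} in terms of the positive and negative parts, namely $\mH - \mZ/2 = \tfrac{1}{2}(\mZ^+ + \mZ^-)$ and $\mZ/2 = \tfrac{1}{2}(\mZ^+ - \mZ^-)$.

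Next I would expand both Frobenius norms via $\|\cdot\|_F^2 = \langle \cdot,\cdot\rangle_F$ and bilinearity. Each expansion produces the same two squared-norm terms $\|\mZ^+\|_F^2$ and $\|\mZ^-\|_F^2$, together with a cross term $\pm 2\langle \mZ^+, \mZ^-\rangle_F$ whose sign is the only difference between the two sides. Invoking $\langle \mZ^+, \mZ^-\rangle_F = 0$ from Lemma~\ref{lemma:positive-negative-part} annihilates both cross terms, so the two sides coincide, which is precisely \eqref{eq:sphere}.

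An equivalent, entrywise route would be to note that \eqref{eq:sphere} holds provided $\sigma(z)\bigl(\sigma(z) - z\bigr) = 0$ for every scalar entry $z$ of $\mZ$: expanding $(\sigma(z) - z/2)^2 - (z/2)^2$ gives exactly $\sigma(z)^2 - \sigma(z)z = \sigma(z)(\sigma(z)-z)$. This vanishes because $\sigma(z) = z$ when $z \geq 0$ and $\sigma(z) = 0$ when $z < 0$; summing over all $d\times n$ entries then recovers the Frobenius statement.

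Rather than an obstacle, the essential point is the clean observation that the ReLU output $\mZ^+$ is Frobenius-orthogonal to the part $\mZ^-$ it discards, so once Lemma~\ref{lemma:positive-negative-part} is in hand the remainder is a short computation. The only thing to be careful about is correctly identifying $\sigma(\mZ)$ with $\mZ^+$ and tracking the factors of $\tfrac{1}{2}$ in the two expansions.
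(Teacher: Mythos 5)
Your proof is correct and takes essentially the same approach as the paper's: both rest on the identification $\mH = \mZ^+$ together with the Frobenius orthogonality $\langle \mZ^+, \mZ^-\rangle_F = 0$ from Lemma~\ref{lemma:positive-negative-part}, and your symmetric expansion of $\mH - \tfrac{\mZ}{2} = \tfrac{1}{2}(\mZ^+ + \mZ^-)$ versus $\tfrac{\mZ}{2} = \tfrac{1}{2}(\mZ^+ - \mZ^-)$ is just an algebraic reorganization of the paper's step of showing $\langle \mH, \mZ\rangle_F = \|\mH\|_F^2$ and completing the square. Your entrywise identity $\sigma(z)\bigl(\sigma(z)-z\bigr) = 0$ is the scalar form of the same fact, so no genuinely different idea is involved.
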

\begin{proof}[Proof of Lemma~\ref{prop:relu-circle-condition}]
  We observe that $\mH = \sigma(\mZ) = \max(\mZ, 0) = \mZ^+$ is the positive part of $\mZ$. Then we have 
$$
\begin{aligned}
\langle \mH, \mZ \rangle_F = \langle \mH, \mZ^+ - \mZ^- \rangle_F = \langle \mH, \mZ^+ \rangle_F -  \langle \mH, \mZ^- \rangle_F = \langle \mH, \mH \rangle_F,
\end{aligned}
$$
where we have used $\mZ = \mZ^+ - \mZ^-$ and $\langle \mH, \mZ^- \rangle_F= \langle \mZ^+, \mZ^- \rangle_F=0$ from Lemma \ref{lemma:positive-negative-part}.

Therefore, one can deduce the desired result as follows:
$$
\begin{aligned}
    \langle \mH, \mH \rangle_F - \langle \mH, \mZ \rangle_F = 0 
    \Rightarrow& \| \mH\|^2_F - 2\Big\langle \mH, \frac{\mZ}{2} \Big\rangle_F + \Big\|\frac{\mZ}{2}\Big\|^2_F  = \Big\|\frac{\mZ}{2}\Big\|^2_F \\
    \Rightarrow& \Big\| \mH - \frac{\mZ}{2} \Big\|^2_F  = \Big\|\frac{\mZ}{2}\Big\|^2_F.  \label{prop:circle-condition-2norm}
\end{aligned}
$$
\end{proof}

Applying $\|\mH\|^2_F = \|\mH_\gM + \mH_{\gM^\perp} \|_F^2 = \|\mH_\gM\|^2_F+\|\mH_{\gM^\perp}\|^2_F $, to both $\frac{\mZ}{2}$ and $\mH - \frac{\mZ}{2}$, we obtain
$$
\begin{aligned}
\Big\|\frac{\mZ}{2}\Big\|^2_F
    =\Big\|\frac{\mZ_{\gM^\perp}}{2}\Big\|^2_F+\Big\|\frac{\mZ_{\gM}}{2}\Big\|^2_F,
\end{aligned}
$$
and
$$
\begin{aligned}
\Big\|\mH-\frac{\mZ}{2}\Big\|^2_F
    = \Big\| \mH_{\gM^\perp} - \frac{\mZ_{\gM^\perp}}{2} \Big\|^2_F + \Big\| \mH_{\gM} - \frac{\mZ_{\gM}}{2} \Big\|^2_F. 
\end{aligned}
$$
Then \eqref{eq:sphere} becomes
\begin{equation}
\label{eq:inner-product-condition-relu-00}
    \begin{aligned}
    \Big\|\frac{\mZ_{\gM^\perp}}{2}\Big\|^2_F - \Big\| \mH_{\gM^\perp} - \frac{\mZ_{\gM^\perp}}{2} \Big\|^2_F
    =
    \Big\| \mH_{\gM} - \frac{\mZ_{\gM}}{2} \Big\|^2_F  -\Big\|\frac{\mZ_{\gM}}{2}\Big\|^2_F 
    \end{aligned}
\end{equation}
By direct calculation, we have
\begin{equation}
\label{eq:inner-product-condition-relu-01}
    \begin{aligned}
        \Big\| \mH_{\gM} - \frac{\mZ_{\gM}}{2} \Big\|^2_F  -\Big\|\frac{\mZ_{\gM}}{2}\Big\|^2_F
        &=
        \langle \mH_{\gM}, \mH_{\gM} \rangle_F - 2 \Big\langle \mH_{\gM}, \frac{\mZ_{\gM}}{2} \Big\rangle_F\\
        &= 
        \langle \mH_{\gM}, \mH_{\gM}- \mZ_{\gM} \rangle_F.
    \end{aligned}
\end{equation}
Combining \eqref{eq:inner-product-condition-relu-00} and \eqref{eq:inner-product-condition-relu-01}, we obtain the following result
\begin{lemma}\label{prop:relu-smoothness-quantity}
For any ${ \mZ=\mZ_\gM+\mZ_{\gM^\perp}}$, 
let 
$$
{ \mH=\sigma(\mZ) = \mH_\gM + \mH_{\gM^\perp}},
$$ 
then we have
\begin{equation*} 
\begin{aligned}
\Big\|\frac{\mZ_{\gM^\perp}}{2}\Big\|^2_F - \Big\| \mH_{\gM^\perp}- \frac{\mZ_{\gM^\perp}}{2} \Big\|^2_F
=\langle \mZ^+_{\gM}, \mZ^-_{\gM} \rangle_F. 
\end{aligned}
\end{equation*}
where $\mZ^+_{\gM} = \sum_{i=1}^{m}\mZ^+\ve_i\ve_i^\top, \mZ^-_{\gM} = \sum_{i=1}^{m}\mZ^-\ve_i\ve_i^\top$.
\end{lemma}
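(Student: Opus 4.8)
The plan is to start from what the preceding computation already delivers and then perform a single algebraic identification. Combining the identities \eqref{eq:inner-product-condition-relu-00} and \eqref{eq:inner-product-condition-relu-01}, the left-hand side of the claimed equation equals $\langle \mH_{\gM}, \mH_{\gM} - \mZ_{\gM}\rangle_F$, so the entire task reduces to showing that
\[
\langle \mH_{\gM}, \mH_{\gM} - \mZ_{\gM}\rangle_F = \langle \mZ^+_{\gM}, \mZ^-_{\gM}\rangle_F.
\]
First I would record the two structural facts that make this immediate: that $\mH = \sigma(\mZ) = \mZ^+$, i.e. the ReLU output is exactly the positive part of $\mZ$, and that the projection onto $\gM$, namely the map $\mX \mapsto \sum_{i=1}^m \mX\ve_i\ve_i^\top$, is linear in $\mX$.

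Using linearity of this projection, I would compute $\mH_{\gM} = (\mZ^+)_{\gM} = \mZ^+_{\gM}$ directly from $\mH = \mZ^+$. Next, invoking Lemma~\ref{lemma:positive-negative-part}, which gives $\mZ = \mZ^+ - \mZ^-$, linearity yields $\mZ_{\gM} = \mZ^+_{\gM} - \mZ^-_{\gM}$. Subtracting, $\mH_{\gM} - \mZ_{\gM} = \mZ^+_{\gM} - (\mZ^+_{\gM} - \mZ^-_{\gM}) = \mZ^-_{\gM}$, and substituting this together with $\mH_{\gM} = \mZ^+_{\gM}$ into the inner product gives exactly $\langle \mZ^+_{\gM}, \mZ^-_{\gM}\rangle_F$, closing the argument.

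There is no genuine analytic obstacle here: the heavy lifting, passing from the sphere relation of Lemma~\ref{prop:relu-circle-condition} to the orthogonal split \eqref{eq:inner-product-condition-relu-00} and simplifying to \eqref{eq:inner-product-condition-relu-01}, has already been carried out before the statement. The only step requiring care is the bookkeeping of the projections: one must confirm that the notation $\mZ^+_{\gM}$ in the statement means precisely $(\mZ^+)_{\gM}$, so that the identification $\mH_{\gM} = \mZ^+_{\gM}$ is literal rather than an abuse of notation. Once that is verified, the proof is a one-line substitution.
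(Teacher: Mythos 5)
Your proposal is correct and matches the paper's own proof essentially line for line: both reduce the claim via \eqref{eq:inner-product-condition-relu-00} and \eqref{eq:inner-product-condition-relu-01} to the identity $\langle \mH_{\gM}, \mH_{\gM}-\mZ_{\gM}\rangle_F = \langle \mZ^+_{\gM}, \mZ^-_{\gM}\rangle_F$, and then conclude from $\mH=\mZ^+$, the decomposition $\mZ=\mZ^+-\mZ^-$ of Lemma~\ref{lemma:positive-negative-part}, and linearity of the projection onto $\gM$. Your explicit remark that $\mZ^+_{\gM}$ means $(\mZ^+)_{\gM}$ is a small but welcome clarification the paper leaves implicit.
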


\begin{proof}[Proof of Lemma~\ref{prop:relu-smoothness-quantity}]
Recall that $\mH = \sigma(\mZ) = \max(\mZ, 0) = \mZ^+$. Also, $\mZ = \mZ^+ - \mZ^-$ implies $\mZ_{\gM} = \mZ^+_{\gM} - \mZ^-_{\gM} = \mH^+_{\gM} - \mZ^-_{\gM}$. Therefore, we see that
$$
\begin{aligned}
\langle \mH_{\gM}, \mH_{\gM}- \mZ_{\gM} \rangle_F = \langle \mZ^+_{\gM}, \mZ^-_{\gM} \rangle_F.
\end{aligned}
$$
\end{proof}

By using the fact that $\langle \mZ^+_{\gM}, \mZ^-_{\gM} \rangle_F \geq 0$ in Lemma~\ref{prop:relu-smoothness-quantity}, we reveal a geometric relation between $\mZ$ and $\mH$ mentioned in Proposition~\ref{prop:relu-smoothness-geometric}. 

\begin{proof}[Proof of Proposition~\ref{prop:relu-smoothness-geometric}]
Since $\mZ^+, \mZ^- \geq 0$ are nonnegative and all the eigenvectors $\ve_i$ are also nonnegative, we see that $\mZ^+_{\gM} = \sum_{i=1}^m \mZ^+\ve_i\ve_i^\top$ and $\mZ^-_{\gM} = \sum_{i=1}^m \mZ^-\ve_i\ve_i^\top$ are nonnegative. This indicates that
$$
\begin{aligned}
     \langle \mZ^+_{\gM}, \mZ^-_{\gM} \rangle_F = {\rm Trace}\Big(\mZ^+_{\gM}(\mZ^-_{\gM})^\top \Big) \geq 0.
\end{aligned}
$$
Then according to {Lemma}~\ref{prop:relu-smoothness-quantity}, we obtain
$$
\begin{aligned}
\Big\|\frac{\mZ_{\gM^\perp}}{2}\Big\|^2_F - \Big\| \mH_{\gM^\perp} - \frac{\mZ_{\gM^\perp}}{2} \Big\|^2_F =  \langle \mZ^+_{\gM}, \mZ^-_{\gM} \rangle_F \geq 0.
\end{aligned}
$$
So we have 
$$
\begin{aligned}
\Big\| \mH_{\gM^\perp} - \frac{\mZ_{\gM^\perp}}{2} \Big\|_F 
&= \sqrt{\Big\|\frac{\mZ_{\gM^\perp}}{2}\Big\|^2_F - \langle \mZ^+_{\gM}, \mZ^-_{\gM} \rangle_F}\\
& =
 \sqrt{\Big\|\frac{\mZ_{\gM^\perp}}{2}\Big\|^2_F - \langle \mH_{\gM}, \mH_{\gM}- \mZ_{\gM} \rangle_F},
 \end{aligned}
 $$
which shows that $\mH_{\gM^\perp}$ lies on the high-dimensional sphere that we have claimed.
Furthermore, we conclude that
\begin{equation}
\label{prop:circle-condition-inequality}
    \begin{aligned}
    0\leq \Big\| \mH_{\gM^\perp} - \frac{\mZ_{\gM^\perp}}{2} \Big\|_F \leq \Big\|\frac{\mZ_{\gM^\perp}}{2}\Big\|_F.
\end{aligned}
\end{equation}
This demonstrates that $\mH_{\gM^\perp}$ lies on the high-dimensional sphere we have stated. 

Since the sphere $\Big\| \mH_{\gM^\perp} - \frac{\mZ_{\gM^\perp}}{2} \Big\|^2_F = \Big\|\frac{\mZ_{\gM^\perp}}{2}\Big\|^2_F$ passes through the origin, the distance of any $\mH_{\gM^\perp}$ to the origin must be no greater than the diameter of this sphere, i.e., $\|\mH_{\gM^\perp}\|_F\leq \|\mZ_{\gM^\perp}\|_F$.
Also, this can be derived from the following inequality:
$$
\begin{aligned}
\|\mH_{\gM^\perp}\|_F - \Big\|\frac{\mZ_{\gM^\perp}}{2}\Big\|_F \leq \Big\| \mH_{\gM^\perp} - \frac{\mZ_{\gM^\perp}}{2} \Big\|_F \leq \Big\|\frac{\mZ_{\gM^\perp}}{2}\Big\|_F.
\end{aligned}
$$
One can see that the maximal smoothness $\|\mH_{\gM^\perp}\|_F =  \|\mZ_{\gM^\perp}\|_F$ is attained when $\mH_{\gM^\perp} = \mZ_{\gM^\perp}$, the intersection of the surface and the line passing through the center and the origin.

After all, we complete the proof by using the fact that $\|\mZ_{\gM^\perp}\|_F = \|\mZ\|_{\gM^\perp}$ for any matrix $\mZ$, which implies $ \|\mH\|_{\gM^\perp} = \|\mH_{\gM^\perp}\|_F \leq \|\mZ_{\gM^\perp}\|_F =  \|\mZ\|_{\gM^\perp}.$

\end{proof}

\subsection{Leaky ReLU}
For the leaky ReLU activation function, we have the following Lemma:
\begin{lemma}\label{prop:leaky-circle-condition}
If $\mH = \sigma_a(\mZ)$ with $\sigma_a$ being the leaky ReLU activation function, then $\mH$ lies on the high-dimensional sphere centered at ${(1+a)\mZ}/{2}$ with radius $\|{(1-a)\mZ}/{2}\|_F$. 
\end{lemma}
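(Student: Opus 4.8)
The plan is to mirror the proof of Lemma~\ref{prop:relu-circle-condition} for ReLU, replacing the identity $\sigma(\mZ)=\mZ^+$ by the corresponding entrywise representation of leaky ReLU. First I would observe that for any scalar $x$ and $0<a<1$, the leaky ReLU satisfies $\sigma_a(x)=\max\{x,ax\}=x^+-ax^-$: when $x\geq 0$ one has $x\geq ax$ so $\sigma_a(x)=x=x^+$, and when $x<0$ one has $ax>x$ so $\sigma_a(x)=ax=-ax^-$. Applying this entrywise yields the matrix identity $\mH=\sigma_a(\mZ)=\mZ^+-a\mZ^-$, where $\mZ^+,\mZ^-$ are the positive and negative parts introduced in Lemma~\ref{lemma:positive-negative-part}.

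Next I would compute the displacement $\mH-\tfrac{(1+a)\mZ}{2}$ directly. Writing $\mZ=\mZ^+-\mZ^-$, substituting $\mH=\mZ^+-a\mZ^-$, and collecting the coefficients of $\mZ^+$ and $\mZ^-$, one finds that both parts acquire the same coefficient $\tfrac{1-a}{2}$, so that $\mH-\tfrac{(1+a)\mZ}{2}=\tfrac{1-a}{2}(\mZ^++\mZ^-)$. This is the key algebraic step: the asymmetry between $\mZ^+$ and $\mZ^-$ present in $\mH$ is exactly cancelled by the shift of center to $\tfrac{(1+a)\mZ}{2}$.

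Then I would take Frobenius norms. Since $\langle\mZ^+,\mZ^-\rangle_F=0$ by Lemma~\ref{lemma:positive-negative-part}, expanding $\|\mZ^++\mZ^-\|_F^2$ kills the cross term and gives $\|\mZ^+\|_F^2+\|\mZ^-\|_F^2$; applying the same orthogonality to $\mZ=\mZ^+-\mZ^-$ shows this sum equals $\|\mZ\|_F^2$. Hence $\|\mH-\tfrac{(1+a)\mZ}{2}\|_F^2=\tfrac{(1-a)^2}{4}\|\mZ\|_F^2=\|\tfrac{(1-a)\mZ}{2}\|_F^2$, which is precisely the claimed sphere equation centered at $\tfrac{(1+a)\mZ}{2}$ with radius $\|\tfrac{(1-a)\mZ}{2}\|_F$.

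The computation carries no genuine obstacle; the only care required is the sign bookkeeping in the two-case verification of $\sigma_a(x)=x^+-ax^-$, which uses $0<a<1$, and the recognition that the orthogonality of Lemma~\ref{lemma:positive-negative-part} is exactly what makes the cross terms vanish in both norm expansions. As a consistency check, I note that setting $a=0$ recovers Lemma~\ref{prop:relu-circle-condition}, and the leaky-ReLU version is then lifted to the eigenspace decomposition in the same way the ReLU case was, giving Proposition~\ref{prop:leaky-relu-smoothness-geometric}.
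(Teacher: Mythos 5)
Your proposal is correct and follows essentially the same route as the paper: both rest on the identity $\mH=\sigma_a(\mZ)=\mZ^+-a\mZ^-$ and the orthogonality $\langle\mZ^+,\mZ^-\rangle_F=0$ from Lemma~\ref{lemma:positive-negative-part}. The only cosmetic difference is that the paper reaches the sphere equation by expanding $\langle\mH-\mZ,\mH-a\mZ\rangle_F=0$ (using $\mH-\mZ=(1-a)\mZ^-$ and $\mH-a\mZ=(1-a)\mZ^+$) and completing the square, whereas you verify the displacement from the center directly via $\mH-\tfrac{(1+a)}{2}\mZ=\tfrac{1-a}{2}(\mZ^++\mZ^-)$; the two computations are algebraically equivalent.
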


\begin{proof}[Proof of Lemma~\ref{prop:leaky-circle-condition}]
Notice that 
$$
\begin{aligned}
\mH = \sigma_a(\mZ) = \mZ^+ - a\mZ^-.
\end{aligned}
$$
Then 
$\mH- \mZ = (1-a)\mZ^-$ and $\mH - a\mZ = (1-a)\mZ^+$. Using $\langle \mZ^- , \mZ^+ \rangle _F=0$, we have 
\begin{equation*}
    \begin{aligned}
        &\langle \mH- \mZ , \mH - a\mZ \rangle_F = 0 \\
        \Rightarrow & \|\mH\|^2_F -2\Big\langle \mH, \frac{(1+a)\mZ}{2} \Big\rangle_F + a\|\mZ\|^2_F= 0 \\
        \Rightarrow & \|\mH\|^2_F -2\Big\langle \mH, \frac{(1+a)\mZ}{2} \Big\rangle_F = - a\|\mZ\|^2_F \\
        \Rightarrow & \Big\| \mH - \frac{(1+a)}{2}\mZ\Big\|^2_F = \Big\| \frac{(1+a)}{2}\mZ\Big\|^2_F - a\|\mZ\|^2_F = \Big\| \frac{(1-a)}{2}\mZ\Big\|^2_F.
    \end{aligned}
\end{equation*}
\end{proof}

Moreover, we notice that 
\begin{lemma}\label{prop:leaky-relu-smoothness-quantity}
For any 
$ \mZ=\mZ_\gM+\mZ_{\gM^\perp}$, let $ \mH=\sigma_a(\mZ)=\mH_\gM+\mH_{\gM^\perp}$, then
\begin{equation*}
\begin{aligned}
\Big\|\frac{(1-a)}{2}\mZ_{\gM^\perp}\Big\|^2_F - \Big\| \mH_{\gM^\perp} - \frac{(1+a)}{2}\mZ_{\gM^\perp} \Big\|^2_F
= (1-a)^2\langle \mZ^+_{\gM} , \mZ^-_{\gM} \rangle_F
\end{aligned}
\end{equation*}
\end{lemma}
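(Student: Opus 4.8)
The plan is to mirror the structure of the ReLU argument given in the proof of Lemma~\ref{prop:relu-smoothness-quantity}, simply replacing the ReLU sphere of Lemma~\ref{prop:relu-circle-condition} by the leaky ReLU sphere of Lemma~\ref{prop:leaky-circle-condition}. All of the nonlinearity is already encapsulated in that sphere identity, so what remains is the Pythagorean splitting and some bookkeeping.

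First I would invoke Lemma~\ref{prop:leaky-circle-condition}, which gives the identity $\big\|\mH - \tfrac{(1+a)}{2}\mZ\big\|^2_F = \big\|\tfrac{(1-a)}{2}\mZ\big\|^2_F$. I would then apply the orthogonal decomposition $\|\mX\|^2_F = \|\mX_\gM\|^2_F + \|\mX_{\gM^\perp}\|^2_F$ to both $\tfrac{(1-a)}{2}\mZ$ and $\mH - \tfrac{(1+a)}{2}\mZ$, using that projection onto $\gM$ commutes with scalar multiplication and subtraction. Rearranging the resulting equality so that the $\gM^\perp$-terms sit on one side and the $\gM$-terms on the other yields
\[
\Big\|\tfrac{(1-a)}{2}\mZ_{\gM^\perp}\Big\|^2_F - \Big\|\mH_{\gM^\perp} - \tfrac{(1+a)}{2}\mZ_{\gM^\perp}\Big\|^2_F = \Big\|\mH_{\gM} - \tfrac{(1+a)}{2}\mZ_{\gM}\Big\|^2_F - \Big\|\tfrac{(1-a)}{2}\mZ_{\gM}\Big\|^2_F,
\]
whose left-hand side is exactly the quantity in the statement.

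Second, I would evaluate the right-hand side using $\mH = \sigma_a(\mZ) = \mZ^+ - a\mZ^-$. Since projection onto $\gM$ is linear, $\mH_\gM = \mZ^+_{\gM} - a\mZ^-_{\gM}$ and $\mZ_\gM = \mZ^+_{\gM} - \mZ^-_{\gM}$. Substituting and collecting the coefficients of $\mZ^+_{\gM}$ and $\mZ^-_{\gM}$, one checks that $\mH_\gM - \tfrac{(1+a)}{2}\mZ_\gM = \tfrac{1-a}{2}\big(\mZ^+_{\gM} + \mZ^-_{\gM}\big)$, while $\tfrac{(1-a)}{2}\mZ_\gM = \tfrac{1-a}{2}\big(\mZ^+_{\gM} - \mZ^-_{\gM}\big)$. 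Hence the right-hand side equals $\tfrac{(1-a)^2}{4}\big(\|\mZ^+_{\gM} + \mZ^-_{\gM}\|^2_F - \|\mZ^+_{\gM} - \mZ^-_{\gM}\|^2_F\big)$, and the identity $\|\mU+\mV\|^2_F - \|\mU-\mV\|^2_F = 4\langle \mU, \mV\rangle_F$ collapses this to $(1-a)^2\langle \mZ^+_{\gM}, \mZ^-_{\gM}\rangle_F$, which is the claim.

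I do not anticipate a genuine obstacle, since this is a direct adaptation of the ReLU computation. The only step demanding care is the coefficient bookkeeping in the second paragraph: one must verify that the $\tfrac{1+a}{2}$ and $a$ terms combine so that $\mZ^+_{\gM}$ and $\mZ^-_{\gM}$ both emerge with the common factor $\tfrac{1-a}{2}$, differing only by a sign. It is precisely this symmetry that makes the sum-minus-difference identity applicable and accounts for the clean factor $(1-a)^2$ in the final expression.
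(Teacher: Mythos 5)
Your proof is correct and takes essentially the same route as the paper: both start from the leaky ReLU sphere identity of Lemma~\ref{prop:leaky-circle-condition}, split it via the orthogonal decomposition into $\gM$- and $\gM^\perp$-components, and reduce the $\gM$-side to $(1-a)^2\langle \mZ^+_{\gM},\mZ^-_{\gM}\rangle_F$ using $\mH=\mZ^+-a\mZ^-$ and linearity of the projection. The only (cosmetic) difference is the final algebra: the paper factors the $\gM$-side as $\langle \mH_{\gM}-\mZ_{\gM},\,\mH_{\gM}-a\mZ_{\gM}\rangle_F=\langle (1-a)\mZ^-_{\gM},(1-a)\mZ^+_{\gM}\rangle_F$, while you collect coefficients to get $\tfrac{1-a}{2}(\mZ^+_{\gM}\pm\mZ^-_{\gM})$ and apply the polarization identity — both are valid one-line equivalents.
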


\begin{proof}[Proof of Lemma~\ref{prop:leaky-relu-smoothness-quantity}]
Similar to the proof of Lemma~\ref{prop:relu-smoothness-quantity}, the orthogonal decomposition implies that
$$
\begin{aligned}   
    &\Big\|\frac{(1-a)}{2}\mZ_{\gM^\perp}\Big\|^2_F - \Big\| \mH_{\gM^\perp} - \frac{(1+a)}{2}\mZ_{\gM^\perp} \Big\|^2_F\\
    =&   \Big\| \mH_{\gM} - \frac{(1+a)}{2}\mZ_{\gM} \Big\|^2_F - \Big\|\frac{(1-a)}{2}\mZ_\gM\Big\|^2_F \\
    =& \langle \mH_{\gM}- \mZ_{\gM} , \mH_{\gM}- a\mZ_{\gM} \rangle_F \\
    =& \langle (1-a)\mZ^-_{\gM} , (1-a)\mZ^+_{\gM} \rangle_F \\
    =& (1-a)^2\langle \mZ^-_{\gM} , \mZ^+_{\gM} \rangle_F.
\end{aligned}
$$
\end{proof}

\begin{proof}[Proof of Proposition~\ref{prop:leaky-relu-smoothness-geometric}]
Similar to the proof of Proposition \ref{prop:relu-smoothness-geometric}, we apply $\langle \mZ^-_{\gM} , \mZ^+_{\gM} \rangle_F \geq 0 $ to Lemma~\ref{prop:leaky-relu-smoothness-quantity} and hence obtain the geometric condition as follows:
$$
\begin{aligned}
&\Big\| \mH_{\gM^\perp} - \frac{(1+a)}{2}\mZ_{\gM^\perp} \Big\|_F \\
=& \sqrt{\Big\|\frac{(1-a)}{2}\mZ_{\gM^\perp}\Big\|^2_F - \langle \mH_{\gM}- \mZ_{\gM} , \mH_{\gM}- a\mZ_{\gM} \rangle_F}.
\end{aligned}
$$
Then we have the following inequality:
$$
\begin{aligned}
0 \leq \Big\| \mH_{\gM^\perp} - \frac{(1+a)}{2}\mZ_{\gM^\perp} \Big\|_F \leq  \Big\|\frac{(1-a)}{2}\mZ_{\gM^\perp}\Big\|_F.
\end{aligned}
$$
Moreover, we deduce that
$$
\begin{aligned}
\Bigg|\|\mH_{\gM^\perp}\|_F - \Big\|\frac{(1+a)}{2}\mZ_{\gM^\perp}\Big\|_F\Bigg| &\leq\Big\| \mH_{\gM^\perp} - \frac{(1+a)}{2}\mZ_{\gM^\perp} \Big\|_F\\
&\leq  \Big\|\frac{(1-a)}{2}\mZ_{\gM^\perp}\Big\|_F.
\end{aligned}
$$
and hence
$$
\begin{aligned}
-\Big\|\frac{(1-a)}{2}\mZ_{\gM^\perp}\Big\|_F &\leq \|\mH_{\gM^\perp}\|_F - \Big\|\frac{(1+a)}{2}\mZ_{\gM^\perp}\Big\|_F\\
&\leq  \Big\|\frac{(1-a)}{2}\mZ_{\gM^\perp}\Big\|_F.
\end{aligned}
$$
Therefore, we obtain $a\|\mZ_{\gM^\perp}\|_F\leq \|\mH_{\gM^\perp}\|_F\leq \|\mZ_{\gM^\perp}\|_F$.
(Remark that $\mH_{\gM^\perp}$ achieves its maximal norm when it is equal to $\mZ_{\gM^\perp}$, the intersection of the surface and the line passing through the center and the origin.)

By using the fact that $\|\mZ_{\gM^\perp}\|_F = \|\mZ\|_{\gM^\perp}$ for any matrix $\mZ$, we conclude that $ a\|\mZ\|_{\gM^\perp}\leq \|\mH\|_{\gM^\perp} \leq  \|\mZ\|_{\gM^\perp}.$
\end{proof}

\section{Proofs in Section~\ref{sec:control}}\label{sec:sct}

Throughout this section, we assume that 
$\vz_{\gM^\perp} \neq {\bf 0}$.

\begin{proof}[Proof of Proposition~\ref{prop:smoothness-control-relu}]
Recall that $\ve = \Tilde{\mD}^{\frac{1}{2}}\vu_n/c$ has only positive entries where $\Tilde{\mD}$ is the augmented degree matrix and $\vu_n = [1, \ldots, 1]^\top \in \sR^n$ and $c = \|\Tilde{\mD}^{\frac{1}{2}}\vu_n\|$. Let $d_i$ be the $i^{th}$ diagonal entry of $\Tilde{\mD}$. Then we have 
$
\ve = [\sqrt{d_1}/c, \sqrt{d_2}/c, \ldots, \sqrt{d_n}/c]^\top$
and $c = \sqrt{\sum^n_{i=1}d_i}$.

Note that $\vz(\alpha) = \vz - \alpha\ve = \vz -\frac{\alpha}{c} \Tilde{\mD}^{\frac{1}{2}}\vu_n = \Tilde{\mD}^{\frac{1}{2}}(\Tilde{\mD}^{-\frac{1}{2}}\vz - \frac{\alpha}{c}\vu_n) = \Tilde{\mD}^{\frac{1}{2}}(\vx - \frac{\alpha}{c}\vu_n)$, where we assume $\vx \coloneqq \Tilde{\mD}^{-\frac{1}{2}}\vz$. Then we observe that when $\sigma$ is the ReLU activation function, 
$$
\begin{aligned}
\vh(\alpha) = \sigma(\vz(\alpha))
= \sigma\Big(\Tilde{\mD}^{\frac{1}{2}}(\vx - \frac{\alpha}{c}\vu_n)\Big)
= \Tilde{\mD}^{\frac{1}{2}}\sigma\Big(\vx - \frac{\alpha}{c}\vu_n\Big),
\end{aligned}
$$
and hence 
$$
\begin{aligned}
\langle \vh(\alpha), \ve \rangle &= \Big\langle \Tilde{\mD}^{\frac{1}{2}}\sigma\Big(\vx-\frac{\alpha}{c}\vu_n\Big),\ve\Big\rangle\\
&=\Big\langle \sigma\Big(\vx - \frac{\alpha}{c}\vu_n\Big), \Tilde{\mD}^{\frac{1}{2}}\ve\Big\rangle\\
&= \Big\langle \sigma\Big(\vx-\frac{\alpha}{c}\vu_n\Big),\Tilde{\mD}\vu_n\Big\rangle.
\end{aligned}
$$
We may now assume $\vx = [ x_1, \ldots, x_n]^\top$ is well-ordered s.t. $x_1 \geq x_2 \geq \ldots \geq x_n$. Indeed, there is a collection of indices $\{k_1,...,k_l\}$ such that 
$$
\begin{aligned}
    &x_1 = \ldots,x_{k_1} \text{ and } x_{k_1} >x_{k_1+1}, \\
    &x_{k_{j-1}+1} = \ldots = x_{k_{j}} \text{ and } x_{k_{j}} >x_{k_{j}+1} \text{ for any } j = 2,\ldots, l-1,\\
    &x_{k_{l-1}+1} = \ldots =x_{k_{l}} \text{ and } k_l = n. 
\end{aligned}
$$
That is, $x_1 = x_2 =\ldots = x_{k_1} >x_{k_1+1} = \ldots = x_{k_2} >x_{k_2+1}  = \ldots = x_{k_3} >  x_{k_3 + 1} \ldots$. 

We first restrict the domain of $\alpha$ s.t. $\vh(\alpha) \neq 0$. Note that we have 
$$
\begin{aligned}
\vh(\alpha)  =0
\Leftrightarrow&  
    \sigma\Big(\vx - \frac{\alpha}{c}\vu_n\Big)=0\\
    \Leftrightarrow&   x_i - \frac{\alpha}{c} \leq 0 \text{ for }i = 1,\ldots, n\\     
    \Leftrightarrow &  x_1 - \frac{\alpha}{c} \leq 0  \\
    \Leftrightarrow &  \alpha \geq c x_1.
\end{aligned}
$$
So we will study the smoothness $s(\vh(\alpha))$ when  $\alpha < c x_1$.

Let $\epsilon>0$ and consider $\alpha = c(x_1-\epsilon)$. When $\epsilon \leq x_1 - x_{k_1+1} = x_1 - x_{k_2}$, we see that 
$$
\vx -  \frac{\alpha}{c}\vu_n = [\epsilon, \ldots, \epsilon, \epsilon-(x_1-x_{k_1+1}), \ldots, \epsilon-(x_1-x_{n})]^\top,
$$
where only the first $k_1$ entries are positive since $x_1-x_{i} \geq \epsilon$ for any $i\geq k_1+1$.
Therefore,
$$
\begin{aligned}
\vh(\alpha) &= \Tilde{\mD}^{\frac{1}{2}}\sigma\Big(\vx-\frac{\alpha}{c}\vu_n\Big)\\
&= \Tilde{\mD}^{\frac{1}{2}}[\epsilon,\ldots, \epsilon,0,\ldots,0]^\top\\
&=[\epsilon\sqrt{d_1}, \ldots,\epsilon\sqrt{d_{k_1}},0,\ldots,0]^\top.
\end{aligned}
$$
and hence we can compute that
$
\begin{aligned}
\|\vh(\alpha)\| = \epsilon\sqrt{\sum^{k_1}_{i=1} d_i}.
\end{aligned}
$
Also, we have
$$
\begin{aligned}
\|\vh(\alpha)\|_\gM &= |\langle \vh(\alpha), \ve \rangle|\\
&= [\epsilon\sqrt{d_1}, \ldots, \epsilon\sqrt{d_{k_1}}, 0, \ldots, 0]^\top[\sqrt{d_1}/c, \sqrt{d_2}/c, \ldots, \sqrt{d_n}/c]\\
&=  \frac{\epsilon}{c} \sum^{k_1}_{i=1} d_i.
\end{aligned}
$$
Then we obtain the smoothness $s(\vh(\alpha))$ as follows
$$
\begin{aligned}
s(\vh(\alpha)) = \frac{\|\vh(\alpha)\|_\gM}{\|\vh(\alpha)\|} = \frac{\frac{\epsilon}{c} \sum^{k_1}_{i=1} d_i}{ \epsilon\sqrt{\sum^{k_1}_{i=1} d_i}} = \frac{\sqrt{\sum^{k_1}_{i=1} d_i}}{c} = \frac{K_1}{c}<1,
\end{aligned}
$$
where $K_1\coloneqq\sqrt{\sum^{k_1}_{i=1} d_i}$. Similarly, we may denote $\sqrt{\sum^{k_j}_{i=k_{j-1}+1} d_i}$ by $K_j$ for $j = 2,\ldots, l$.

Now we are going to show that the smoothness $s(\vh(\alpha))$ is increasing as $\alpha$ gets smaller whenever $\alpha<cx_1$,
implying $\frac{K_1}{c}$ is the minimum of the smoothness $s(\vh(\alpha))$.
Remember that we are considering  $\alpha = c(x_1-\epsilon)$ and we have studied the case when $0< \epsilon \leq x_1 - x_{k_1+1} = x_1 - x_{k_2}$.

Let $\delta_j \coloneqq x_1 - x_{k_{j}}$ for $1\leq j\leq l$. Clearly, we have $\delta_1 = 0$ and $\delta_j < \delta_{j+1}$ for $1\leq j\leq l-1$.
Fix a $j'\in\{2,\ldots, l-1\}$, we see that when $\delta_{j'} <\epsilon \leq x_1 - x_{k_{j'}+1}$,  
$$
\begin{aligned}
&\vx - \frac{\alpha}{c}\vu_n \\
&= \Big[\epsilon -\delta_1, \ldots, \epsilon-\delta_1, \epsilon-\delta_2,\\
&\ \ \ \ \ \ldots, \epsilon-\delta_2, \epsilon-\delta_3, \ldots,  \epsilon-\delta_{j'} , \epsilon-(x_1-x_{k_{j'}+1}),\ldots, \epsilon-(x_1-x_{n})\Big]^\top,
\end{aligned}
$$
where we have $\epsilon - \delta_j>0$ for $2\leq j\leq j'$ and $\epsilon - (x_1-x_{i}) \leq 0 $ for any $i\geq k_{j'}+1$.
Consequently,
$$
\begin{aligned}
    \vh(\alpha) &= \Tilde{\mD}^{\frac{1}{2}}\sigma(\vx -  \frac{\alpha}{c}\vu_n) \\
&= [(\epsilon-\delta_1)\sqrt{d_1}, \ldots, (\epsilon-\delta_1)\sqrt{d_{k_1}}, (\epsilon-\delta_2)\sqrt{d_{k_1+1}}, \ldots, (\epsilon-\delta_2)\sqrt{d_{k_2}},\\
&\qquad (\epsilon-\delta_3)\sqrt{d_{k_2+1}}, \ldots, (\epsilon-\delta_{j'})\sqrt{d_{k_{j'}}}, 0, \ldots, 0]^\top.
\end{aligned}
$$
Then we can compute
$$
\begin{aligned}
\|\vh(\alpha)\| 
&= \sqrt{\sum^{j'}_{j = 1}\sum^{k_{j}}_{i=k_{j-1}+1} d_i (\epsilon-\delta_j)^2 }\\
&= \sqrt{\sum^{j'}_{j = 1}K_j^2(\epsilon-\delta_j)^2},
\end{aligned}
$$
where we set $k_0 \coloneqq 0$ for simplicity and $K_j = \sqrt{\sum^{k_{j}}_{i=k_{j-1}+1} d_i}$ for $j = 1,\ldots, j'$. Also, we have
$$
\begin{aligned}
\|\vh(\alpha)\|_\gM = |\langle \vh(\alpha), \ve \rangle| 
&=  \sum^{j'}_{j = 1} \sum^{k_{j}}_{i=k_{j-1}+1} \frac{d_i(\epsilon-\delta_j)}{c}\\
&= \frac{1}{c}\sum^{j'}_{j = 1}K_j^2(\epsilon-\delta_j).
\end{aligned}
$$

A careful calculation shows that $\frac{\partial}{\partial \epsilon}s(\vh(\alpha)) > 0 $ whenever $\delta_{j'} <\epsilon \leq x_1 - x_{k_{j'}+1}$ which implies that $s(\vh(\alpha))$ is increasing as $\epsilon$ increases. Indeed, we have 
$$
\begin{aligned}
    &\frac{\partial}{\partial \epsilon}s(\vh(\alpha)) \\
    =& \frac{\partial}{\partial \epsilon}\Bigg(\frac{\sum^{j'}_{j = 1}K_j^2(\epsilon-\delta_j) }{c \sqrt{\sum^{j'}_{j = 1}K_j^2(\epsilon-\delta_j)^2}}\Bigg) \\
=& \frac{\Big(\frac{\partial}{\partial \epsilon} \sum^{j'}_{j = 1}K_j^2(\epsilon-\delta_j)\Big)
    \sqrt{\sum^{j'}_{j = 1}K_j^2(\epsilon-\delta_j)^2}  
   }
    {c\sum^{j'}_{j = 1}K_j^2(\epsilon-\delta_j)^2} -\\
& \ \ - \frac{\sum^{j'}_{j = 1}K_j^2(\epsilon-\delta_j)\Big(\frac{\partial}{\partial \epsilon}\sqrt{\sum^{j'}_{j = 1}K_j^2(\epsilon-\delta_j)^2}\Big)}
    {c\sum^{j'}_{j = 1}K_j^2(\epsilon-\delta_j)^2}\\
    \\
 =& \frac{\Big(\sum^{j'}_{j = 1}K_j^2\Big)\sqrt{\sum^{j'}_{j = 1}K_j^2(\epsilon-\delta_j)^2}  -  \sum^{j'}_{j = 1}K_j^2(\epsilon-\delta_j)\Big(\frac{\frac{\partial}{\partial \epsilon} \sum^{j'}_{j = 1}K_j^2(\epsilon-\delta_j)^2}{2\sqrt{\sum^{j'}_{j = 1}K_j^2(\epsilon-\delta_j)^2}}\Big) }{c\sum^{j'}_{j = 1}K_j^2(\epsilon-\delta_j)^2} \\
 =& \frac{\Big(\sum^{j'}_{j = 1}K_j^2\Big)\sum^{j'}_{j = 1}K_j^2(\epsilon-\delta_j)^2 -  \sum^{j'}_{j = 1}K_j^2(\epsilon-\delta_j)\Big(\sum^{j'}_{j = 1}K_j^2(\epsilon-\delta_j)\Big) }{c\sum^{j'}_{j = 1}K_j^2(\epsilon-\delta_j)^2\sqrt{\sum^{j'}_{j = 1}K_j^2(\epsilon-\delta_j)^2}}.
    \end{aligned}
$$

Then to show that $\frac{\partial}{\partial \epsilon}s(\vh(\alpha)) > 0 $, it suffices to show that the numerator is positive, i.e.
$$
\begin{aligned}
\Big(\sum^{j'}_{j = 1}K_j^2\Big)\sum^{j'}_{j = 1}K_j^2(\epsilon-\delta_j)^2 -  \Big(\sum^{j'}_{j = 1}K_j^2(\epsilon-\delta_j)\Big)^2 >0,
\end{aligned}
$$
since the denominator $c\sum^{j'}_{j = 1}K_j^2(\epsilon-\delta_j)^2\sqrt{\sum^{j'}_{j = 1}K_j^2(\epsilon-\delta_j)^2}>0$ is always positive.
In fact, this follows from the Cauchy inequality $\|\vv\|\|\vu\|\geq\langle\vv,\vu\rangle$, where we set
$$
\begin{aligned}
    \vv &\coloneqq [K_1, K_2, \ldots, K_{J'}]^\top, \\
    \vu &\coloneqq [K_1(\epsilon-\delta_1), K_2(\epsilon-\delta_2), \ldots, K_{j'}(\epsilon-\delta_{j'})]^\top.
\end{aligned}
$$
Moreover, equality happens only when $\vv$ is parallel to $\vu$. This is, however, impossible since $\epsilon-\delta_j>\epsilon-\delta_{j+1}$ for any $j=1,\ldots, j'-1$ and each $K_j$ is positive.

So we see that  $s(\vh(\alpha))$ is increasing as $\epsilon$ increases whenever $0 <\epsilon$, and hence the smoothness  $s(\vh(\alpha))$ is increasing as $\alpha$ decreases whenever $c x_n\leq \alpha< cx_1$. 

For the case $j' = l$ where $\delta_l = x_1 - x_n < \epsilon$, we have $x_n - \alpha/c = x_n - (x_1-\epsilon)  = \epsilon - (x_1 - x_n) > 0$, implying $\alpha< c x_n $ and $\vh(\alpha) = \vz(\alpha)$. We have shown that the smoothness is increasing as $\alpha$ is going far from $\langle\vz, \ve\rangle$; in particular, when $\alpha<\langle\vz, \ve\rangle$ and $\alpha$ is decreasing. One can check that
$$
\begin{aligned}
c x_n = \frac{\sum^n_{i=1} d_ix_n}{c} &= \bigg\langle x_n \vu_n, \frac{\Tilde{\mD}\vu_n}{c}\bigg\rangle\\
&\leq \bigg\langle \vx, \frac{\Tilde{\mD}\vu_n}{c}\bigg\rangle\\ 
&= \bigg\langle \Tilde{\mD}^{\frac{1}{2}}\vx, \frac{\Tilde{\mD}^{\frac{1}{2}}\vu_n}{c}\bigg\rangle\\
&= \langle\vz, \ve\rangle,
\end{aligned}
$$
which means the smoothness is increasing as $\alpha$ decreases whenever $\alpha<c x_n$.

We conclude that the smoothness increases as $\alpha$ decreases provided $\alpha<c x_1$. Also, we have $\sup_{\alpha< cx_1} s(\vh(\alpha)) = 1$ as the case in the proof of Proposition~\ref{prop:smoothness-control-linear}. One can check that $s(\vh(\alpha))$ is a continuous function for $\alpha< cx_1$ and thus it has range $[K_1/c, 1)$ by the mean value theorem.

Finally, we can establish the result: $K_1/c =  \sqrt{\frac{\sum_{x_i = \max \vx}d_i}{\sum^n_{j =1}d_j}}$ is the minimum of $s(\vh(\alpha))$ and $1$ is the maximum of $s(\vh(\alpha))$ occurring whenever $\alpha \geq cx_1 = \sqrt{\sum^n_{j =1}d_j}\max_i x_i$. Moreover, $s(\vh(\alpha))$ has a monotone property when $\alpha < \sqrt{\sum^n_{j =1}d_j}\max_i x_i$ and has range $\Big[ \sqrt{\frac{\sum_{x_i=\max \vx}d_i}{\sum^n_{j =1}d_j}},1\Big]$.

It is clear that the assumption on the ordering of the entries of $\vx$ will not affect this result.
\end{proof}

To prove Proposition~\ref{prop:smoothness-control-leaky-relu}, we first prove an analogous result for the identity function, that is, $\vh = \sigma(\vz) = \vz$.
\begin{proposition}\label{prop:smoothness-control-linear} Suppose $\vz_{\gM^\perp} \neq {\bf 0}$, then $s(\vz(\alpha))$ achieves its minimum $0$
if $\alpha = \langle \vz, \ve\rangle$. Moreover, $\sup_\alpha s(\vz(\alpha)) = 1$ where $s(\vz(\alpha))$ is close to $1$ when $\alpha$ is far away from $\langle \vz, \ve\rangle $.
\end{proposition}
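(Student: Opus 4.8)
The plan is to leverage the closed-form expression for the normalized smoothness already derived just before the statement, namely
$$
s(\vz(\alpha)) = \sqrt{1 - \frac{\|\vz_{\gM^\perp}\|^2}{\|\vz(\alpha)\|^2}},
$$
and to reduce everything to a single scalar. Since the graph is connected, $\vz(\alpha)_\gM = (\langle \vz, \ve\rangle - \alpha)\ve$ with $\|\ve\| = 1$, while $\vz(\alpha)_{\gM^\perp} = \vz_{\gM^\perp}$ is independent of $\alpha$. The orthogonal decomposition $\|\vz(\alpha)\|^2 = \|\vz(\alpha)_\gM\|^2 + \|\vz(\alpha)_{\gM^\perp}\|^2$ then gives $\|\vz(\alpha)\|^2 = (\langle \vz, \ve\rangle - \alpha)^2 + \|\vz_{\gM^\perp}\|^2$, so that $s(\vz(\alpha))$ depends on $\alpha$ only through the quantity $t := \langle \vz, \ve\rangle - \alpha$ and the fixed positive constant $b := \|\vz_{\gM^\perp}\| > 0$.

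For the minimum, I would observe that $s \geq 0$ everywhere, and that the ratio $\|\vz_{\gM^\perp}\|^2/\|\vz(\alpha)\|^2$ equals $1$ precisely when $t = 0$, i.e. $\alpha = \langle \vz, \ve\rangle$; hence $s(\vz(\alpha)) = 0$ exactly at that value of $\alpha$, so the minimum $0$ is attained there. For the supremum, as $|\alpha - \langle \vz, \ve\rangle| \to \infty$ the term $t^2 \to \infty$ dominates the denominator, so $b^2/(t^2 + b^2) \to 0$ and thus $s(\vz(\alpha)) \to 1$; this is exactly the assertion that $s$ is close to $1$ when $\alpha$ is far from $\langle \vz, \ve\rangle$. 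Since $b > 0$ by hypothesis, the ratio stays strictly positive for every finite $\alpha$, so $s < 1$ throughout and the value $1$ is approached but never reached, giving $\sup_\alpha s(\vz(\alpha)) = 1$.

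There is essentially no obstacle here: the statement collapses to the elementary monotonicity of $b^2/(t^2 + b^2)$ in $|t|$. The only point requiring care is the distinction between supremum and maximum — the value $1$ is a limit that is not attained at any finite $\alpha$ — and it is precisely here that the hypothesis $\vz_{\gM^\perp} \neq {\bf 0}$ (equivalently $b > 0$) is used; were $b = 0$, the smoothness would be identically $1$ and the claim would degenerate. This proposition will then serve as the base case for the ReLU and leaky ReLU arguments, where the activation breaks the vector into pieces on which the same scalar reduction can be applied.
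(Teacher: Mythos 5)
Your proposal is correct and follows essentially the same route as the paper's proof: both reduce $s(\vz(\alpha))$ to the closed form $\sqrt{1 - \|\vz_{\gM^\perp}\|^2/(\|\vz_{\gM^\perp}\|^2 + \|\vz_\gM - \alpha\ve\|^2)}$ via the orthogonal decomposition (your scalar $t = \langle\vz,\ve\rangle - \alpha$ is exactly the paper's $\|\vz_\gM - \alpha\ve\|$ up to sign), show the minimum $0$ is attained precisely at $\alpha = \langle\vz,\ve\rangle$, and show $1$ is a supremum but not a maximum using $\vz_{\gM^\perp} \neq \mathbf{0}$. The only cosmetic difference is that the paper solves the level-set equation $s(\vz(\alpha)) = t$ explicitly for each $t \in [0,1)$, whereas you argue via the limit $|t| \to \infty$ and monotonicity of $b^2/(t^2+b^2)$ --- logically equivalent for the stated claims.
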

Notice that Proposition~\ref{prop:smoothness-control-linear} does not consider the activation function.

\begin{proof}[Proof of Proposition~\ref{prop:smoothness-control-linear}]
We know that $0\leq s(\vz(\alpha))\leq 1$ and
$$
\begin{aligned}
s(\vz(\alpha)) &= 
\sqrt{1 - \frac{\|\vz_{\gM^\perp}\|^2}{\|\vz(\alpha)\|^2}}\\
&= \sqrt{1 - \frac{\|\vz_{\gM^\perp}\|^2}{\|\vz_{\gM^\perp}\|^2 + \|\vz(\alpha)_{\gM}\|^2}}\\
&=
 \sqrt{1 - \frac{\|\vz_{\gM^\perp}\|^2}{\|\vz_{\gM^\perp}\|^2 + \|\vz_{\gM}-\alpha\ve\|^2}}.
\end{aligned}
$$

Suppose  $s(\vz(\alpha)) = 1$. Then we have
$
\frac{\|\vz_{\gM^\perp}\|^2}{\|\vz_{\gM^\perp}\|^2 + \|\vz_{\gM}-\alpha\ve\|^2} = 0$
which forces $\|\vz_{\gM^\perp}\| = 0$. However, this contradicts the hypothesis $\vz_{\gM^\perp}\neq 0$. So $s(\vz(\alpha))$ cannot attain its maximum.

But for any $0\leq t<1$, one can see that $s(\vz(\alpha))=t$ if and only if
$$
\begin{aligned}
&\sqrt{1 - \frac{\|\vz_{\gM^\perp}\|^2}{\|\vz_{\gM^\perp}\|^2 + \|\vz_{\gM}-\alpha\ve\|^2}}  = t \\
\Leftrightarrow &\frac{\|\vz_{\gM^\perp}\|^2}{\|\vz_{\gM^\perp}\|^2 + \|\vz_{\gM}-\alpha\ve\|^2} = 1-t^2 \\
\Leftrightarrow & \|\vz_{\gM^\perp}\|^2 = (1-t^2)\big(\|\vz_{\gM^\perp}\|^2 + \|\vz_{\gM}-\alpha\ve\|^2\big)\\
\Leftrightarrow & t^2\|\vz_{\gM^\perp}\|^2 = (1-t^2) \|\vz_{\gM}-\alpha\ve\|^2\\
\Leftrightarrow & \|\vz_{\gM}-\alpha\ve\| = \sqrt{\frac{t^2}{1-t^2}} \cdot \|\vz_{\gM^\perp}\|
\end{aligned}
$$
This implies that $\sup_\alpha s(\vz(\alpha)) = 1$ and $s(\vz(\alpha))$ achieves its minimum $0$ if and only if $\alpha = \langle \vz, \ve\rangle$. It is clear that $s(\vz(\alpha))$ get closer to $1$ when $\alpha$ is going far away from $\langle \vz, \ve\rangle $. i.e., $|\alpha - \langle \vz, \ve\rangle| = \|\vz_{\gM}-\alpha\ve\|$ is increasing.
\end{proof}

\begin{proof}[Proof of Proposition~\ref{prop:smoothness-control-leaky-relu}]
First, we notice that leaky ReLU has the following two properties
\begin{enumerate}
    \item $\sigma_a(x)>0$ for $x\gg 0$ and $\sigma_a(x)<0$ for $x\ll 0$.
    \item $\sigma_a$ is a non-trivial linear map for $x\gg 0$. 
\end{enumerate}

We will use Property $1$ to show that $\min_\alpha s(\vh(\alpha)) = 0$ and  Property $2$ to show that $\sup_\alpha s(\vh(\alpha)) = 1$. Notice that $\sigma_a(x)<0$ for $x\ll 0$ implies that there exists a sufficient small $\alpha_2<0$ s.t. all of the entries of $\vh(\alpha_2)$ are negative and hence $|\langle \vh(\alpha_2), \ve\rangle|<0$. Similarly, $\sigma_a(x)>0$ for $x\gg 0$ implies that there exists a sufficient large $\alpha_1>0$ s.t. all of the entries of $\vh(\alpha_1)$ are positive and hence $|\langle \vh(\alpha_1), \ve\rangle|>0$. Since $|\langle \vh(\alpha), \ve\rangle|$ is a continuous function of $\alpha$ on $[\alpha_1, \alpha_2]$, the Intermediate Value Theorem follows that there exists an $\alpha\in (\alpha_1, \alpha_2)$ s.t. $|\langle \vh(\alpha), \ve\rangle|=0$. Thus by definition $s(\vh(\alpha)) = |\langle \vh(\alpha), \ve\rangle| / \|\vh(\alpha)\|$, we see that $\min_\alpha s(\vh(\alpha)) = 0$.

On the other hand, since $\sigma_a$ is a non-trivial linear map for $x\gg 0$, we may assume $\sigma_a(x) = cx$  for $x>x_0$ where $c\neq 0 $ is some non-zero constant and $x_0 >0$ is some positive constant.
Then we can choose an $\alpha_0>\langle \vz, \ve\rangle$ s.t. for any $\alpha\geq \alpha_0$, all of the entries of $\vz(\alpha)$ are greater than $x_0$.
Then whenever $\alpha\geq \alpha_0$, we have $\vh(\alpha) = \sigma_a(\vz(\alpha)) = c\vz(\alpha)$. This implies 
$$
\begin{aligned}
s(\vh(\alpha)) = \frac{|\langle \vh(\alpha), \ve\rangle|}{\|\vh(\alpha)\|} = \frac{|\langle c\vz(\alpha), \ve\rangle|}{\|c\vz(\alpha)\|} = \frac{|\langle \vz(\alpha), \ve\rangle|}{\|\vz(\alpha)\|} = s(\vz(\alpha)).
\end{aligned}
$$
Thus $\sup_\alpha s(\vh(\alpha)) = 1$ follows from the Proof of Proposition~\ref{prop:smoothness-control-linear} where we see that $\sup_\alpha s(\vz(\alpha)) = 1$ since $s(\vz(\alpha))$ gets closer to $1$ as $\alpha$ increases.

\end{proof}
\begin{remark}
 Indeed, it holds for any continuous function $f:\sR\to\sR$ satisfying the following 
\begin{enumerate}
    \item \label{condition:general smoothness min} $f(x)>0$ for $x\gg 0$, $f(x)<0$ for $x\ll 0$ or $f(x)<0$ for $x\gg 0$, $f(x)>0$ for $x\ll 0$,
    \item \label{condition:general smoothness sup}  
    $f$ is a  non-trivial linear map  for $x\gg 0$ or $x\ll 0$. 
\end{enumerate}
One can check the proof above only depends on these two properties. It is worth mentioning that most activation functions, e.g. leaky LU, SiLU, $\tanh$, satisfy condition~\ref{condition:general smoothness min}.
\end{remark}

\begin{proof}[Proof of Corollary~\ref{cor:smoothness effective control}]
For any $\alpha$, we notice that $\|\vz\|_{\gM^\perp} = \|\vz_{\gM^\perp}\|_F = \|\vz(\alpha)\|_{\gM^\perp}$ since $\alpha$ only changes the component of $\vz$ in the eigenspace $\gM$.
Also, Propositions~\ref{prop:relu-smoothness-geometric} and \ref{prop:leaky-relu-smoothness-geometric} show that $\|\vz(\alpha)\|_{\gM^\perp}\geq \|\vh(\alpha)\|_{\gM^\perp}$ whenever $\vh(\alpha) = \sigma(\vz(\alpha))$ or $\sigma_a(\vz(\alpha))$.
Therefore, we see that $\|\vz\|_{\gM^\perp}\geq \|\vh(\alpha)\|_{\gM^\perp}$ holds for any $\alpha$.
Since $\vz_{\gM^\perp}\neq 0$, $s(\vz)$ must lie in $[0,1)$.

\end{proof}

\section{Additional Experimental Results}\label{appendix-addition-results}
Table~\ref{tab:simple-t-test} shows the t-test results at 0.95 confidence (over 100 independent runs), which compares models with and without SCT on different benchmark node classification tasks; a larger value means the accuracy improvement due to using SCT is more statistically significant. 
\begin{table}[!ht]
\fontsize{9}{9}\selectfont\centering
\begin{tabular}{c|ccccc}
\specialrule{1.2pt}{1pt}{1pt}
     Layers
     &
     \hspace{0.7cm}2
     \hspace{0.7cm}
     &
     \hspace{0.7cm}4
     \hspace{0.7cm}
     \hspace{0.7cm}
     &\hspace{0.7cm}
     \hspace{0.7cm}16
     \hspace{0.7cm}
     &
     \hspace{0.7cm}32
     \hspace{0.7cm}
     \\
 \specialrule{1.2pt}{1pt}{1pt}
     \multicolumn{5}{c}{\textbf{Cora}}\\
     \hline
         GCN-SCT
         &
         $13.77$
         &
         $8.72$
         &
         $6.40$
         &
         $5.95$
         \\
         GCNII-SCT
         &
         $16.59$         
         &
         $5.85$ 
         &
         $0.49$
         &
         $0.38$
         \\
         EGNN-SCT
         &
         $7.29$         
         &
         $3.04$         
         &
         $0.15$
         &
         $-1.16$
         \\
\specialrule{1.2pt}{1pt}{1pt}
     \multicolumn{5}{c}{\textbf{Citeseer}}\\
 \hline
         GCN-SCT
         &
         $0.58$
         &
         $18.44$
         &
         $83.92$
         &
         $16.58$
         \\
         GCNII-SCT
         &
         $26.28$
         &
         $7.93$
         &
         $-1.07$
         &
         $0.39$
         \\
         EGNN-SCT
         &
         $15.14$
         &
         $0.13$
         &
         $0.46$
         &
         $1.55$
         \\
\specialrule{1.2pt}{1pt}{1pt}
     \multicolumn{5}{c}{\textbf{PubMed}}\\
 \hline
         GCN-SCT
         &
         {$-0.77$}
         &
         $1.84$
         &
         $17.03$
         &
         $22.99$
         \\
         GCNII-SCT
         &
         $1.89$
         &
         $0.21$
         &
         $9.51$
         &
         $8.64$
         \\
         EGNN-SCT
         &
         $1.57$
         &
         $4.26$
         &
         $4.30$
         &
         $2.68$
         \\
    \specialrule{1.2pt}{1pt}{1pt}
    \end{tabular}
\caption{
We conduct t-test 
at 0.95 confidence to compare models with and without SCT on different benchmark node classification tasks, where 
$\text{t-score} = ({\mu_{\text{*-{SCT}}} - \mu_{\text{*}}})/
    {\sqrt{{\sigma^2_{\text{*-SCT}}}/{n} + {\sigma^2_{\text{*}}}/{n}}}$.
 We observe that in general SCT provides significant improvements - only fails to improve in very few cases and by a marginal amount. A larger t-score means a more significant improvement.
 }
    \label{tab:simple-t-test}
\end{table}

\section{Experimental Details}\label{appendix-exp-details}
This part includes the missing details about experimental configurations and additional experimental results for Section~\ref{sec:experiments}. All tasks we run using Nvidia RTX 3090, GV100, and Tesla T4 GPUs. All computational performance metrics, including timing procedures, are run using Tesla T4 GPUs from Google Colab.

\subsection{Dataset details}\label{appendix:datasets}
In this section, we briefly describe the benchmark datasets used. Table~\ref{appendix:dataset:statistics} provides additional details about the underlying graph representation.

\medskip
\textbf{Citation Datasets:} 
The five citation datasets considered are Cora, Citeseer PubMed, Coauthor-Physics, and Ogbn-arxiv. Each dataset is represented by a graph with nodes representing academic publications, features encoding a bag-of-words description, labels classifying the publication type, and edges representing citations.

\medskip
\textbf{Web Knowledge-Base Datasets:}
The three web knowledge-base datasets are Cornell, Texas, and Wisconsin.
Each dataset is represented by a graph with nodes representing CS department webpages, features encoding a bag-of-words description, edges representing hyper-link connections, and labels classifying the webpage type.

\medskip
\textbf{Wikipedia Network Datasets:} 
The two Wikipedia network datasets are Chameleon and Squirrel.
Each dataset is represented by a graph with nodes representing CS department webpages, features encoding a bag-of-words description, edges representing hyper-link connections, and labels classifying the webpage type.

\begin{table}[!ht]
\fontsize{8.5}{8.5}\selectfont
\centering
\begin{tabular}{ccccccc}
\specialrule{1.2pt}{1pt}{1pt}
    & \textbf{\# Nodes} & \textbf{\# Edges} & \textbf{\# Features} & \textbf{\# Classes} & Splits (Train/Val/Test) \\
 \hline
 Cornell &  $183$ & $295$ & $1,703$ & $5$ & 48/32/20\%\\
 Texas & $181$ & $309$ & $1,703$ & $5$ & 48/32/20\%\\
 Wisconsin & $251$ & $499$ & $1,703$ & $5$ & 48/32/20\%\\
 Chameleon & $2,277$ & $36,101$ & $2,325$ & $5$ & 48/32/20\%\\
 Squirrel & $5,201$ & $217,073$ & $2,089$ & $5$ & 48/32/20\%\\
 Citeseer & $3,727$ & $4,732$ & $3,703$ & $6$ & 
 120/500/1000 
 \\
 Cora & $2,708$ & $5,429$ & $1,433$ & $7$ & 
 140/500/1000 
 \\
 PubMed & $19,717$ & $44,338$ & $500$ & $3$ & 
 60/500/1000
 \\
 Coauthor-Physics &  34,493 & 247,962 & 8415 & 5 & 100/150/34,243\\
 Ogbn-arxiv       & 169,343 & 1,166,243 & 128 & 40 & 90,941/29,799/48,603\\
    \specialrule{1.2pt}{1pt}{1pt}
    \end{tabular}
    \caption{
    Graph statistics.
    }
    \label{appendix:dataset:statistics}
\end{table}

\subsection{{Model size and computational time for citation datasets}}
Table~\ref{table:model_sizes} compares the model size and computational time for experiments on citation datasets in Section~\ref{subsec-node-classification}.

\begin{table}[!ht]
\fontsize{8.5}{8.5}\selectfont
\centering
\begin{tabular}{c|ccc}
\specialrule{1.2pt}{1pt}{1pt}
     
     &
     \# Parameters
     &
     Training Time (s)
     &
     Inference Time (ms)
     \\
 \specialrule{1.2pt}{1pt}{1pt}
     \multicolumn{4}{c}{\textbf{Cora}}\\
     \hline
         GCN
         &
         100,423
         &
         8.4
         &
         1.6
         \\
         GCNII
         &
         110,535
         &
         10.0
         &
         2.1
         \\
         GCNII
         &
         708,743
         &
         57.6
         &
         12.3
         \\
         GCNII-SCT
         &
         1,237,127
         &
         110.3
         &
         29.6
         \\
         EGNN
         &
         712,839
         &
         65.6
         &
         14.4
         \\
         EGNN-SCT
         &
         316,551
         &
         24.8
         &
         4.5
         \\
\specialrule{1.2pt}{1pt}{1pt}
     \multicolumn{4}{c}{\textbf{Citeseer}}\\
 \hline
         GCN
         &
         245,638
         &
         8.3
         &
         1.5
         \\
         GCN-SCT
         &
         301,830
         &
         15.5
         &
         4.0
         \\
         GCNII
         &
         999,174
         &
         57.6
         &
         12.3
         \\
         GCNII-SCT
         &
         1,001,222
         &
         65.9
         &
         15.7
         \\
         EGNN
         &
         739,078
         &
         39.6
         &
         7.2
         \\
         EGNN-SCT
         &
         540,934
         &
         24.0
         &
         5.8
         \\
\specialrule{1.2pt}{1pt}{1pt}
     \multicolumn{4}{c}{\textbf{PubMed}}\\
 \hline
         GCN
         &
         40,451
         &
         9.0
         &
         1.8
         \\
         GCN-SCT
         &
         40,707
         &
         11.1
         &
         2.2
         \\
         GCNII
         &
         326,659
         &
         98.2
         &
         12.8
         \\
         GCNII-SCT
         &
         590,851
         &
         71.7
         &
         17.4
         \\
         EGNN
         &
         592,899
         &
         93.7
         &
         2.5
         \\
         EGNN-SCT
         &
         130,563
         &
         16.0
         &
         3.1
         \\
\specialrule{1.2pt}{1pt}{1pt}
     \multicolumn{4}{c}{\textbf{Coauthor-Physics}}\\
 \hline
         GCN
         &
         547,141
         &
         35.2
         &
         8.0
         \\
         GCN-SCT
         &
         547,397
         &
         33.9
         &
         8.3
         \\
         GCNII
         &
         555,333
         &
         49.1
         &
         10.3
         \\
         GCNII-SCT
         &
         555,461
         &
         67.0
         &
         9.5
         \\
         EGNN
         &
         672,069
         &
         176.4
         &
         47.9
         \\
         EGNN-SCT
         &
         572,229
         &
         51.7
         &
         14.8
         \\
\specialrule{1.2pt}{1pt}{1pt}
     \multicolumn{4}{c}{\textbf{Ogbn-arxiv}}\\
 \hline
         GCN
         &
         27,240
         &
         50.4
         &
         21.1
         \\
         GCN-SCT
         &
         28,392
         &
         62.6
         &
         24.4
         \\
         GCNII
         &
         76,392
         &
         205.4
         &
         94.8
         \\
         GCNII-SCT
         &
         80,616
         &
         253.0
         &
         108.9
         \\
         EGNN
         &
         77,416
         &
         206.8
         &
         98.0
         \\
         EGNN-SCT
         &
         81,640
         &
         254.0
         &
         112.3
         \\
    \specialrule{1.2pt}{1pt}{1pt}
    \end{tabular}
\caption{ 
Number of model parameters for varying numbers of layers using the optimal model hyperparameters. The SCT is added at each layer and the size of the additional parameters scales with the number of eigenvectors with an eigenvalue of one for matrix $\mG$ in \eqref{eq:GCN}. 
}
    \label{table:model_sizes}
\end{table}

\subsection{Additional Section~\ref{subsec-node-classification}
details for citation datasets}\label{appendix:semi-sup}
Table~\ref{table:hyperparameter:gridsearch} lists the hyperparameters used in the grid search in generating the results in Table~\ref{table:node:variable-layers}. Also, Table~\ref{table:node:variable-relu-leakyrelu} reports the classification accuracy of different models with different depths using either ReLU or leaky ReLU. 
\begin{table}[!ht]
\fontsize{8.5}{8.5}\selectfont
\centering
\begin{tabular}{cc}
\specialrule{1.2pt}{1pt}{1pt}
\hspace{0.5cm}    \textbf{Parameter} \hspace{0.5cm}  &\hspace{1cm}  \textbf{Values} \hspace{1cm} \\
 \hline
Learning Rate &  $\{1e\text{-}4, 1e\text{-}3, 1e\text{-}2 \}$ \\
Weight Decay (FC) &  $\{0, 1e\text{-}4, 5e\text{-}4, 1e\text{-}3, 5e\text{-}3, 1e\text{-}2 \}$ \\
Weight Decay (Conv) &  $\{0, 1e\text{-}4, 5e\text{-}4, 1e\text{-}3, 5e\text{-}3, 1e\text{-}2 \}$ \\
Dropout &  $\{0.1, 0.2, 0.3, 0.4, 0.5, 0.6, 0.7, 0.8, 0.9\}$ \\
Hidden Channels &  $\{16, 32, 64, 128\}$ \\
\hline
GCNII-$\alpha$ &  $\{0.1, 0.2, 0.3, 0.4, 0.5, 0.6, 0.7, 0.8, 0.9\}$ \\
GCNII-$\theta$ &  $\{0.1, 0.2, 0.3, 0.4, 0.5, 0.6, 0.7, 0.8, 0.9\}$ \\
\hline
EGNN-$c_\text{max}$ &  $\{0.5, 1.0, 1.5, 2.0\}$ \\
EGNN-$\alpha$ &  $\{0.1, 0.2, 0.3, 0.4, 0.5, 0.6, 0.7, 0.8, 0.9\}$ \\
EGNN-$\theta$ &  $\{0.1, 0.2, 0.3, 0.4, 0.5, 0.6, 0.7, 0.8, 0.9\}$ \\
    \specialrule{1.2pt}{1pt}{1pt}
    \end{tabular}
    \caption{
    Hyperparameter grid search for Table~\ref{table:node:variable-layers}.
    }
    \label{table:hyperparameter:gridsearch}
\end{table}

\begin{table}[!ht]
\fontsize{7.5}{7.5}\selectfont
\centering
\begin{tabular}{c|ccccc}
\specialrule{1.2pt}{1pt}{1pt}
     Layers & \hspace{0.5cm}\ \ \ \ \ \ \ 
     \hspace{-0.1cm}2\hspace{-0.1cm} \hspace{0.5cm}\ \ \ \ \ \ \ 
     &\hspace{0.5cm}\ \ \ \ \ \ \ 
     \hspace{-0.1cm}4\hspace{-0.1cm}\hspace{0.5cm}\ \ \ \ \ \ \  
     \hspace{-0.1cm}&\hspace{-0.1cm}\hspace{0.5cm}\ \ \ \ \ \ \ 
     \hspace{-0.1cm}16\hspace{-0.1cm}\hspace{0.5cm}\ \ \ \ \ \ \ 
     &\hspace{0.5cm}\ \ \ \ \ \ \  
     \hspace{-0.1cm}32\hspace{-0.1cm} \hspace{0.5cm}\ \ \ \ \ \ \  
     \\
 \specialrule{1.2pt}{1pt}{1pt}
     \multicolumn{5}{c}{\textbf{Cora}}\\
     \hline
EGNN/EGNN-SCT & $83.2$/$\mathbf{83.4}$ & {$84.2$/$\mathbf{84.3}$}
& {{${85.4}$}/$\mathbf{85.5}$} & {$85.3$/$\mathbf{85.5}$} \\
\specialrule{1.2pt}{1pt}{1pt}
     \multicolumn{5}{c}{\textbf{Citeseer}}\\
 \hline
EGNN/EGNN-SCT & $72.0$/$\mathbf{72.1}$ & $71.9$/$\mathbf{72.3}$ 
& $72.4$/$\mathbf{72.6}$  & $72.3$/$\mathbf{72.8}$\\ 
\specialrule{1.2pt}{1pt}{1pt}
\multicolumn{5}{c}{\textbf{PubMed}}\\
 \hline
EGNN/EGNN-SCT & $79.2$/$\mathbf{79.4}$ & $79.5$/$\mathbf{79.8}$  & $\mathbf{80.1}$/$\mathbf{80.1}$ & $80.0$/$\mathbf{80.2}$
         \\ 
    \specialrule{1.2pt}{1pt}{1pt}
     \multicolumn{5}{c}{\textbf{Coauthor-Physics}}\\
 \hline
EGNN/EGNN-SCT & $92.6$/$\mathbf{92.8}$ & $92.9$/$\mathbf{93.0}$ 
         & $93.1$/$\mathbf{93.3}$  &  $\mathbf{93.3}$/$\mathbf{93.3}$    
         \\
    \specialrule{1.2pt}{1pt}{1pt}
     \multicolumn{5}{c}{\textbf{Ogbn-arxiv}}\\
 \hline
EGNN/EGNN-SCT &  $68.4$/$\mathbf{68.5}$ & $71.1$/{$\mathbf{71.3}$} 
         & $72.7$/$\mathbf{73.0}$ &  $72.7$/$\mathbf{72.9}$ \\
    \specialrule{1.2pt}{1pt}{1pt}
    \end{tabular}
\caption{
Test accuracy for EGNN and EGNN-SCT using SReLU activation function of varying depth on citation networks with the split discussed in Section~\ref{subsec-node-classification}. 
(Unit:\%)
 } 
    \label{table:node:variable-layers-appendix}
\end{table}

\subsubsection{Vanishing gradients}
{Figure~\ref{fig:vanishing-gradients} shows the vanishing gradient problem for training deep GCN -- with or without SCT -- in comparison to models like GCNII and EGNN. This figure plots $||\partial\mH^{\text{out}}/\partial\mH^l||$ for layers $l\in[0,32]$ as the training epochs run from $0$ to $100$. 
Figures~\ref{fig:vanishing-gradients} (a) and (b) illustrate the vanishing gradient issue for GCN and that it persists for GCN-SCT. Figures~\ref{fig:vanishing-gradients} (c) and (e) illustrate that GCNII and EGNN do not suffer from vanishing gradients, and furthermore, because these models connect $\mH^0$ to every layer, the gradient with respect to the weights in the first layer is nonzero.
What is interesting about the addition of SCT to both EGNN and GCNII is that the intermediate gradients become large as the training epochs progress shown in Figure~\ref{fig:vanishing-gradients} (d) and (f). 
}

\begin{figure}[!ht]
\begin{center}
\begin{tabular}{cc}
\includegraphics[width=0.45\linewidth]{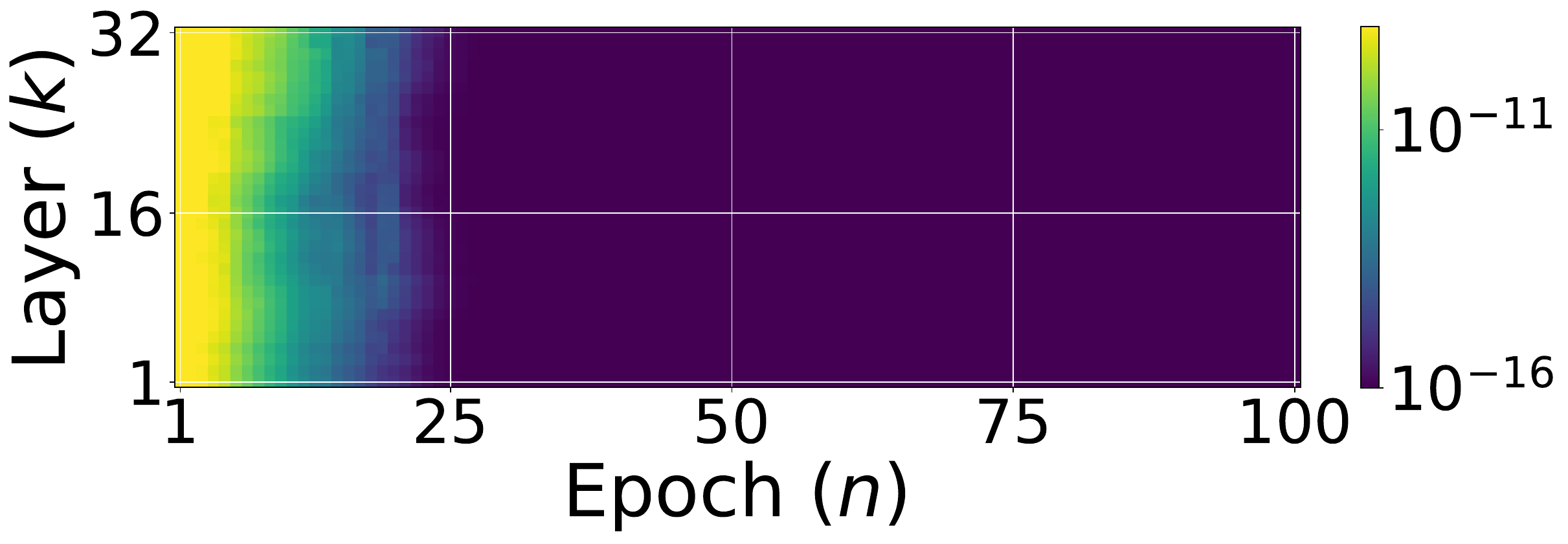}&
\includegraphics[width=0.45\linewidth]{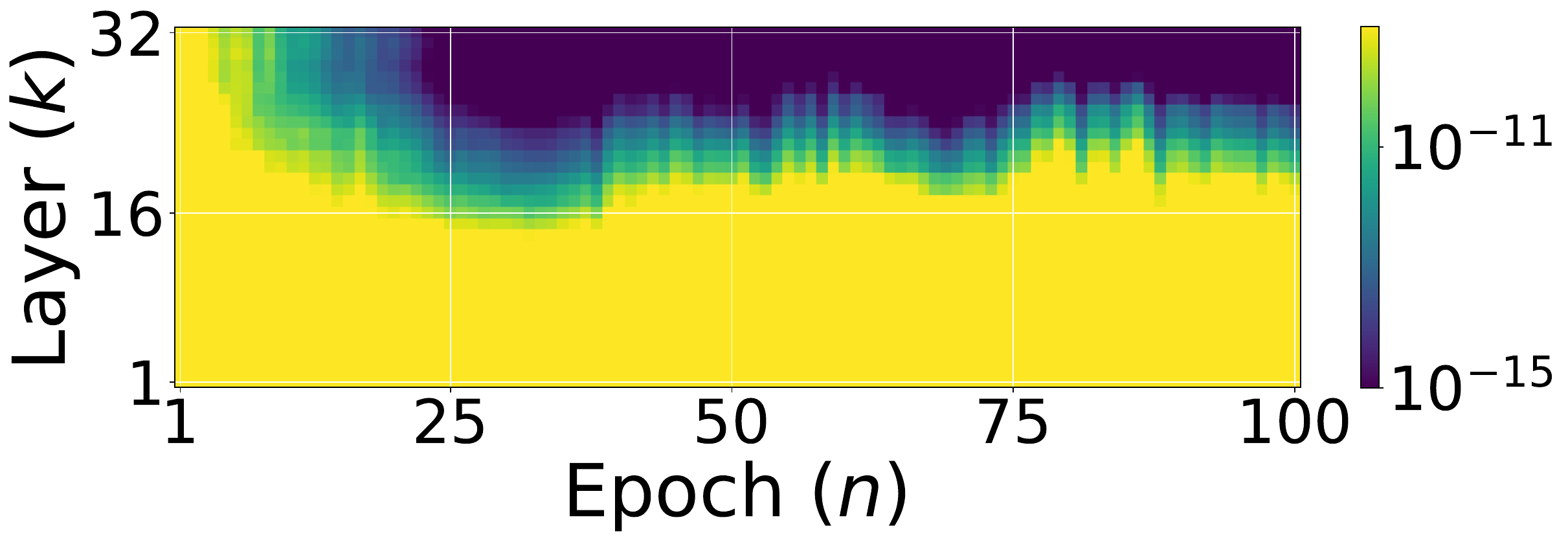}\\
{\small (a) GCN} & {\small (b) GCN-SCT}\\
\includegraphics[width=0.45\linewidth]{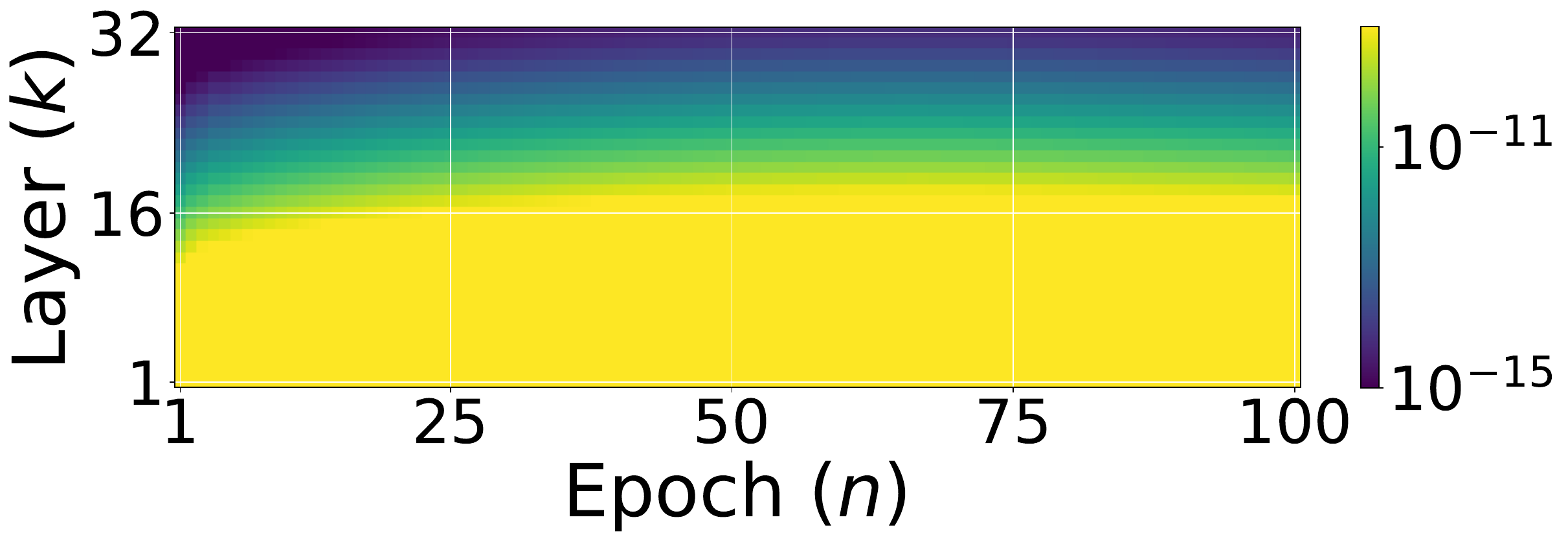}&
\includegraphics[width=0.45\linewidth]{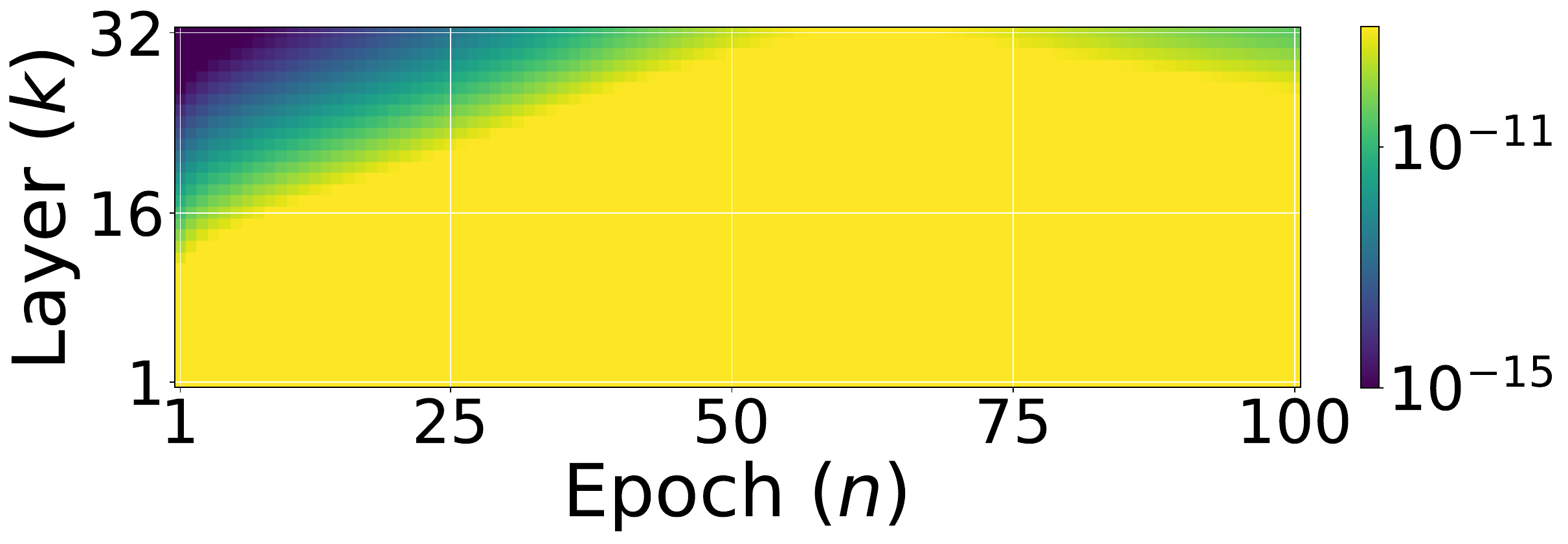}\\
{\small (c) GCNII} & {\small (d) GCNII-SCT} \\
\includegraphics[width=0.45\linewidth]{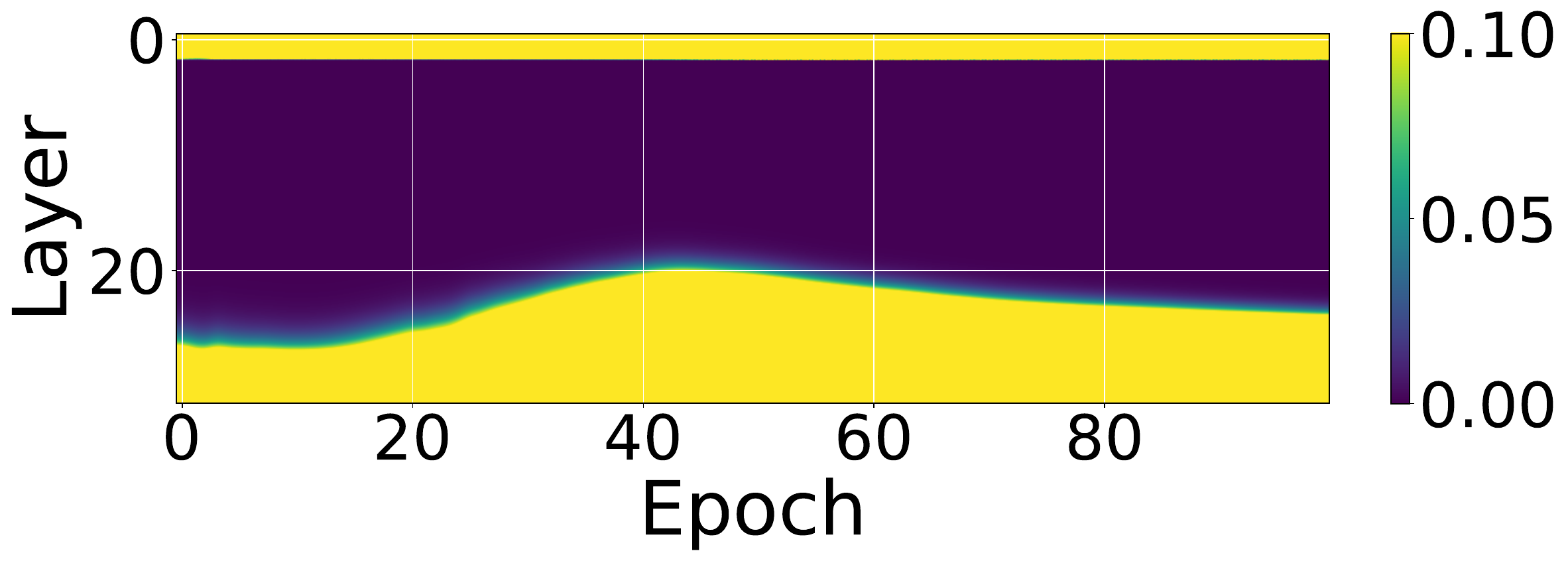} &
\includegraphics[width=0.45\linewidth]{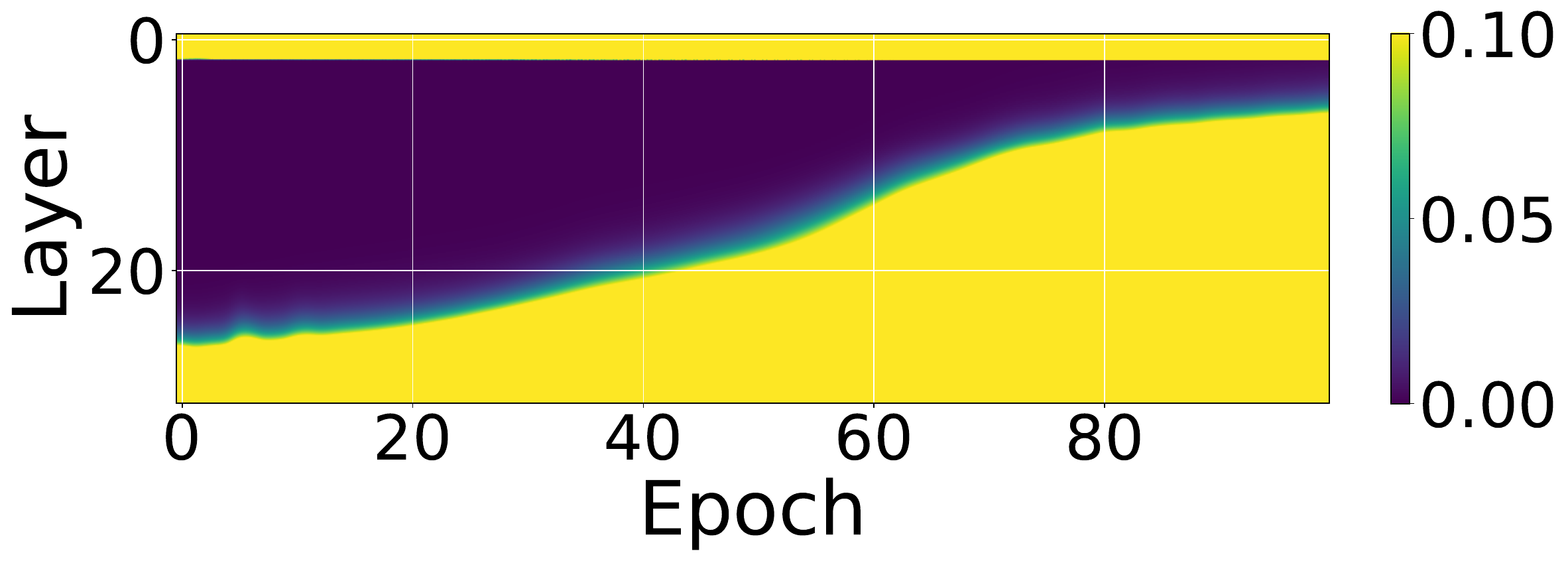}\\
{\small (e) EGNN} & {\small (f) EGNN-SCT}\\
\end{tabular}
  \end{center}
  \caption{
  Training gradients for $||\partial \mH^\text{out}/\partial \mH^l||$ for $l\in[0,32]$ layers and 100 training epochs on the Citeseer dataset. Here, all models have 32 layers and 16 hidden dimensions for each layer. We observe that (a) GCN suffers from vanishing gradients. By contrast (c) GCNII and (e) EGNN do not suffer from vanishing gradients, and we can observe their skip connection to $\mH^0$. Because these models (GCNII/GCNII-SCT and EGNN/EGNN-SCT) connect $\mH^0$ to every layer, the gradient at the first layer is nonzero. We notice that while SCT does not overcome vanishing gradients for (b) GCN-SCT, it is able to increase the norm of the gradients for the intermediate layers in (d) GCNII-SCT and (f) EGNN-SCT. 
  }\label{fig:vanishing-gradients}
\end{figure}

\begin{table}[!ht]
\fontsize{4.0}{4.0}\selectfont
\centering
\begin{tabular}{c|cccc|cccc}
\specialrule{1.2pt}{1pt}{1pt}
\multicolumn{9}{c}{Cora}\\ \hline
 & \multicolumn{4}{c}{ReLU} & \multicolumn{4}{c}{leaky ReLU}\\
\hline
\ \ \ \ \ Layers\ \ \ \ \  &\ \ \ \ \  2\ \ \ \ \  &\ \ \ \ \  4\ \ \ \ \  &\ \ \ \ \  16\ \ \ \ \  &\ \ \ \ \  32\ \ \ \ \   &\ \ \ \ \  2\ \ \ \ \  &\ \ \ \ \  4\ \ \ \ \  &\ \ \ \ \  16\ \ \ \ \  &\ \ \ \ \  32\ \ \ \ \ \\
\hline
GCN-SCT & $81.2$ & $80.3$ & $71.4$ & $67.2$ & $82.9$ & $82.8$ & $68.0$ & $65.5$  \\
GCNII-SCT & $83.5$ & $83.8$ & $82.7$ & $83.3$ & $83.8$ & $84.8$ & $84.8$ & $85.5$ \\
EGNN-SCT & $84.1$ & $83.8$ & $82.3$ & $80.8$ & $83.7$ & $84.5$ & $83.3$ & $82.0$ \\
\specialrule{1.2pt}{1pt}{1pt}
\multicolumn{9}{c}{Citeseer}\\ \hline
 & \multicolumn{4}{c}{ReLU} & \multicolumn{4}{c}{leaky ReLU}\\
\hline
Layers & 2 & 4 & 16 & 32  & 2 & 4 & 16 & 32\\
\hline
GCN-SCT & 69.0 & 67.3 & 51.5 & 50.3 & 69.9 & 67.7 & 55.4 & 51.0  \\
GCNII-SCT & $72.8$ & $72.8$ & $72.8$ & $73.3$ & $72.8$ & $72.9$ & $73.8$ & $72.7$  \\
EGNN-SCT & $72.5$ & $72.0$  & $70.2$ & $71.8$ & $73.1$ & $71.7$ & $72.6$ & $72.9$  \\
\specialrule{1.2pt}{1pt}{1pt}
\multicolumn{9}{c}{PubMed}\\ \hline
 & \multicolumn{4}{c}{ReLU} & \multicolumn{4}{c}{leaky ReLU}\\
\hline
Layers & 2 & 4 & 16 & 32  & 2 & 4 & 16 & 32\\
\hline
GCN-SCT & 79.4 & 78.2 & 75.9 & 77.0 & 79.8 & 78.4 & 76.1 & 76.9  \\
GCNII-SCT & $79.7$ & $80.1$ & $80.7$ & $80.7$ & $79.6$ & $80.0$ & $80.3$ & $80.7$ \\
EGNN-SCT & $79.7$ & $80.1$ & $80.0$ & $80.4$ &  $79.8$ & $80.4$ & $80.3$ & $80.2$  \\
\specialrule{1.2pt}{1pt}{1pt}
\multicolumn{9}{c}{Coauthor-Physics}\\ \hline
 & \multicolumn{4}{c}{ReLU} & \multicolumn{4}{c}{leaky ReLU}\\
\hline
Layers & 2 & 4 & 16 & 32  & 2 & 4 & 16 & 32\\
\hline
GCN-SCT & $91.8\pm1.6$ & $91.6\pm3.0$ & $44.5\pm13.0$  & $42.6\pm17.0$  & $92.6\pm1.6$ & $92.5\pm5.9$ & $50.9\pm15.0$ & $43.6\pm16.0$  \\
GCNII-SCT & $94.4\pm0.4$ & $93.5\pm1.2$ & $93.7\pm0.7$ & $93.8\pm0.6$ & $94.0\pm0.4$ & $94.2\pm0.3$ & $93.3\pm0.7$ & $94.1\pm0.3$  \\
EGNN-SCT & $93.6\pm0.7$ & $94.1\pm0.4$ & $93.4\pm0.8$ & $93.8\pm1.3$ & $93.9\pm0.7$ & $94.0\pm0.7$ & $94.0\pm0.7$ & $93.3\pm0.9$  \\
\specialrule{1.2pt}{1pt}{1pt}
\multicolumn{9}{c}{Ogbn-arxiv}\\ \hline
 & \multicolumn{4}{c}{ReLU} & \multicolumn{4}{c}{leaky ReLU}\\
\hline
Layers & 2 & 4 & 16 & 32  & 2 & 4 & 16 & 32\\
\hline
GCN-SCT & $71.7\pm0.3$ &  $72.6\pm0.3$ & $71.4\pm0.2$ & $71.9\pm0.3$  & $72.1\pm0.3$ & $72.7\pm0.3$ & $72.3\pm0.2$ & $72.3\pm0.3$  \\
GCNII-SCT & $71.4\pm0.3$ & $72.1\pm0.3$ & $72.2\pm0.2$ & $71.8\pm0.2$  & $72.0\pm0.3$ & $72.2\pm0.2$ & $72.4\pm0.3$ & $72.1\pm0.3$ \\
EGNN-SCT & $68.5\pm0.6$ & $71.0\pm0.5$ & $72.8\pm0.5$ & $72.1\pm0.6$  & $67.7\pm0.5$ & $71.3\pm0.5$ & $72.3\pm0.5$ & $72.3\pm0.5$  \\
\specialrule{1.2pt}{1pt}{1pt}
\end{tabular}
\caption{
Test accuracy results for models of varying depth with ReLU or leaky ReLU activation function on the citation network datasets using the split discussed in Section~\ref{subsec-node-classification}. 
}\label{table:node:variable-relu-leakyrelu}
\end{table}

\subsection{Additional Section~\ref{subsec-node-classification} 
details for other datasets}\label{appendix:full-sup}
{Table~\ref{table:node:web-wiki-mean-std-fixed-layers} reports the mean test accuracy and standard deviation over ten folds of the WebKB and WikipediaNetwork datasets using SCT-based models.}

Table~\ref{table:node:web-wiki-time-fixed-layers} lists the average computational time for each epoch for different models of the same depth -- 8 layers. These results show that integrating SCT into GNNs only results in a small amount of computational overhead.

\begin{table}[!ht]
\fontsize{8.5}{8.5}\selectfont
\centering
{
\begin{tabular}{cccccc}
\specialrule{1.2pt}{1pt}{1pt}
    & \textbf{Cornell} & \textbf{Texas} & \textbf{Wisconsin} & \textbf{Chameleon} & \textbf{Squirrel} \\
 \hline
 
GCN-SCT & $55.95\pm8.5$ & $62.16\pm5.7$ & $54.71\pm4.4$ & $38.44\pm4.3$ & $35.31\pm1.9$ \\
GCNII-SCT & $75.41\pm2.2$ & $83.34\pm4.5$ & $86.08\pm3.8$ & $64.52\pm2.2$ & $47.51\pm1.4$ \\
    \specialrule{1.2pt}{1pt}{1pt}
    \end{tabular}
}
    \caption{
    Test mean $\pm$ standard deviation accuracy from $10$ fold cross-validation on five heterophilic datasets with fixed $48/32/20\%$ splits. The depth of each model is 8 layers {with 16 hidden channels}.
    (Unit: second)
    }
    \label{table:node:web-wiki-mean-std-fixed-layers}
\end{table}

\begin{table}[!ht]
\fontsize{8.5}{8.5}\selectfont
\centering
\begin{tabular}{cccccc}
\specialrule{1.2pt}{1pt}{1pt}
    & \textbf{Cornell} & \textbf{Texas} & \textbf{Wisconsin} & \textbf{Chameleon} & \textbf{Squirrel} \\
 \hline
GCN~\cite{kipf2017semisupervised} & $0.011$ & $0.013$ & $0.012$ & $0.011$ & $0.022$ \\
GCNII~\cite{chen2020simple} & $0.017$ & $0.018$ & $0.017$ & $0.013$ & $0.022$ \\
\hline
GCN-SCT & $0.015$ & $0.017$ & $0.015$ & $0.011$ & $0.023$ \\
GCNII-SCT & $0.017$ & $0.018$ & $0.017$ & $0.020$ & $0.025$ \\
    \specialrule{1.2pt}{1pt}{1pt}
    \end{tabular}
    \caption{
    Average computational time per epoch for five heterophilic datasets with fixed $48/32/20\%$ splits. The depth of each model is 8 layers {with 16 hidden channels}.
    (Unit: second)
    }
    \label{table:node:web-wiki-time-fixed-layers}
\end{table}

\clearpage

\end{document}